\documentclass[final,5p,times,twocolumn,authoryear,fleqn]{elsarticle}

\usepackage{hyperref}
\hypersetup{
    colorlinks=true,
    linkcolor=black,
    citecolor=black,
    urlcolor=black
}
\usepackage{etoolbox}
\usepackage{xurl}
\usepackage{amssymb}
\usepackage{amsthm}
\usepackage{amsmath}
\usepackage{multirow}
\usepackage{textcomp}
\usepackage{subcaption}
\usepackage{verbatim}
\usepackage{siunitx}
\usepackage{epstopdf}
\usepackage[table,xcdraw]{xcolor}
\usepackage{natbib}

\providecommand{\doi}[1]{\href{https://doi.org/#1}{https://doi.org/#1}}
\def\doi#1{\href{https://doi.org/#1}{https://doi.org/#1}}
\AtBeginEnvironment{thebibliography}{\raggedright\sloppy\setlength{\emergencystretch}{3em}}
\usepackage{algorithm}
\usepackage{algorithmic}

\usepackage{enumitem}

\usepackage{booktabs}

\newtheorem{theorem}{Theorem}
\newtheorem{lemma}{Lemma}

\newtheorem{definition}{Definition}

\DeclareMathOperator*{\argmin}{arg\,min}

\journal{Expert Systems with Applications}

\begin{document}

\begin{frontmatter}

\title{Robust and sparse support vector machine via hybrid truncated loss for supervised classification} 

%

\author[label1]{Yuliang Yang}
\ead{Yangyuliang@bjfu.edu.cn}
\author[label2,label1]{Chen Chen}
\ead{ccyolo2022@bjfu.edu.cn}
\author[label1]{Yuxiang Liu}
\ead{liuyuxiang@bjfu.edu.cn}
\author[label1]{Huiru Wang\corref{cor1}}
\ead{whr2019@bjfu.edu.cn}

\cortext[cor1]{Corresponding author.}
\address[label1]{School of Science, Beijing Forestry University, No.35 Qinghua East Road, 100083 Haidian, Beijing, China}

\address[label2]{Translational Cancer Research Center, Peking University First Hospital, No. 8 Xishiku Street, 100034 Xicheng, Beijing, China}


\begin{abstract}
The support vector machine (SVM) is a widely used classifier, but choosing an appropriate loss function remains difficult. Convex losses such as the hinge loss and least-squares loss are sensitive to outliers, while bounded non-convex losses often lead to high computational cost. To address this, we propose a hybrid truncated 
loss function ($L_{\mathrm{ht}}$) that is both sparse and bounded, and build the $L_{\mathrm{ht}}$-SVM model for single-view classification. We introduce the P-stationary point and use it to establish the first-order necessary and sufficient optimality conditions. Based on these conditions, we design an alternating direction method of multipliers with a working-set strategy that reduces computational cost and achieves global convergence. 
We further extend $L_{\mathrm{ht}}$-SVM to multi-view learning by adding structural information and view weights, resulting in Mv$L_{\mathrm{ht}}$-SVM, which follows both the consensus and complementarity principles. Experiments on synthetic, real-world, and image datasets show that $L_{\mathrm{ht}}$-SVM achieves higher 
accuracy with fewer support vectors and better noise robustness than five single-view methods, while Mv$L_{\mathrm{ht}}$-SVM outperforms six multi-view methods in accuracy, precision, recall, and F1-score.
\end{abstract}



\begin{keyword}
Hybrid truncated loss function \sep
P-stationary point \sep
Alternating direction method of multipliers \sep
Multi-view learning \sep
Working set strategy
\end{keyword}

\end{frontmatter}


\section{Introduction}
\label{sec:introduction}

Support vector machines (SVMs) are widely used in machine learning 
\citep{maggioni2025novel,cortes1995support}, computer vision 
\citep{senokosov2024quantum,chen2025ams}, natural language 
processing \citep{wu2023csr,alkhodhairy2025machine}, and 
biomedical analysis \citep{taha2025machine}. Given a binary 
classification dataset $\{(\mathbf{x}_i, y_i)\}_{i=1}^m$, where 
$\mathbf{x}_i \in \mathbb{R}^n$ and $y_i \in \{-1,+1\}$, the 
soft-margin SVM solves
\begin{equation}\label{eq:svm}
    \min_{\mathbf{w}, b} \; \frac{1}{2}\|\mathbf{w}\|^2 
    + C \sum_{i=1}^{m} \ell\bigl(1 - y_i(\mathbf{w}^\top \mathbf{x}_i + b)\bigr),
\end{equation}
where $C > 0$ is the penalty parameter and $\ell(\cdot)$ is the 
loss function. The choice of loss strongly affects the sparsity, 
robustness, and computational efficiency of the resulting 
classifier~\citep{akhtar2024roboss,wang2025sparse}.

Among convex losses, the hinge loss (see 
Fig.~\ref{fig:loss_functions}(a)) \citep{cortes1995support} can 
produce sparse solutions, but its penalty is unbounded for large 
margin violations, which makes it sensitive to outliers 
\citep{brahmi2024efficient,wang2023fast}. The pinball loss (see 
Fig.~\ref{fig:loss_functions}(b)) \citep{huang2013support} 
extends the hinge loss to quantile settings, but it is still 
unbounded \citep{huang2016solution}. The least-squares loss (see 
Fig.~\ref{fig:loss_functions}(c)) \citep{suykens1999least} gives 
a smooth quadratic objective, but it penalises all samples, even 
those that are already classified correctly, and therefore loses 
sparsity \citep{liu2022l2,wang2005comparison}. In short, these 
convex losses do not provide both sparsity and bounded penalty at 
the same time.

Bounded non-convex losses provide an alternative. The ramp loss 
(see Fig.~\ref{fig:loss_functions}(d)) \citep{collobert2006trading} 
clips the hinge loss at a finite threshold and thus improves both 
sparsity and robustness to outliers. The $L_{0/1}$ loss (see 
Fig.~\ref{fig:loss_functions}(e)) \citep{wang2021support} assigns 
a unit penalty to each misclassified sample and can be viewed as 
a natural counting-type loss. Other bounded losses, such as the 
sigmoid loss \citep{mason1999boosting}, rescaled hinge loss 
\citep{xu2017robust}, truncated pinball loss 
\citep{shen2017support}, and capped squared loss 
\citep{wang2024sparse}, have also been proposed to balance 
robustness and smoothness. The main difficulty is that 
non-convex losses are harder to optimise and may have many local 
minimisers \citep{akhtar2024roboss}. For example, optimisation 
under the $L_{0/1}$ loss is NP-hard in general.

A practical way to handle bounded non-convex losses was 
introduced in the $L_{0/1}$-SVM of \cite{wang2021support}, where 
an alternating direction method of multipliers (ADMM) was combined with a working-set strategy and the 
P-stationary point (PSP) was used to characterise the non-convex 
subproblem. This ADMM-plus-working-set framework was later 
extended to several other bounded-loss models, including the ramp 
fraction loss \citep{wang2024fast}, the capped squared loss 
\citep{wang2024sparse}, a sparse and robust variant 
\citep{wang2025sparse}, and the multi-view $SL_{0/1}$-SVM 
\citep{chen2025multi}.

In many real applications, data come from multiple sources or can 
be described by different feature extractors. For example, an 
image can be represented by colour histograms, texture 
descriptors, and deep features, while a web page can be 
described by both text content and hyperlink topology. Multi-view 
learning (MVL) uses these complementary representations jointly 
and often performs better than single-view methods in both theory 
and practice 
\citep{berahmand2025comprehensive,farquhar2005two,jia2025deep}.

Most MVL methods are built on two principles: \emph{consistency} 
and \emph{complementarity}. Consistency encourages agreement 
across views, whereas complementarity makes use of the 
information that is unique to each view 
\citep{berahmand2025comprehensive}. Effective multi-view 
classification models should satisfy both principles, which are 
often called the 2C principles.

Many studies have combined SVM with MVL. SVM-2K 
\citep{farquhar2005two} enforces consistency between two views 
through a KCCA-based constraint, but it does not explicitly use 
complementary information. MvTSVM \citep{xie2015multi} speeds up 
training by solving two smaller quadratic programmes, but it 
also ignores complementarity. MvSVM-2C \citep{xie2019multi} 
addresses this issue by introducing adaptive view weights so 
that both 2C principles are considered. Later models extended 
this line of work. For example, MPWTSVM \citep{xu2022multi} 
incorporates privileged information, MvLSSVC-2C 
\citep{zhang2025multi} uses hierarchical agglomerative 
clustering to enhance inter-view interaction, the deep 
multi-view LSSVM in \citep{jia2025deep} adds stacking layers, 
and Mv-vSVM \citep{he2025new} reduces the burden of parameter 
selection in SVM-2K. 

However, most existing multi-view SVM models still rely on 
convex losses and may therefore degrade under outlier 
contamination. Some recent methods have introduced bounded 
non-convex losses for better robustness, such as the LINEX loss 
in MVASY-BX \citep{tang2023robust}, the wave loss in Wave-MvSVM 
\citep{quadir2025enhancing}, and the $L_{0/1}$ loss in 
Mv$SL_{0/1}$-SVM \citep{chen2025multi}. To the best of our 
knowledge, the hybrid truncated loss has not yet been studied in 
a multi-view SVM framework.

Motivated by these observations, we propose a hybrid truncated 
($L_{\mathrm{ht}}$) loss (see Fig.~\ref{fig:loss_functions}(f)) 
that combines boundedness and sparsity in a single form. Based 
on this loss, we develop $L_{\mathrm{ht}}$-SVM for single-view 
classification together with an ADMM solver accelerated by a 
working-set strategy. We then extend the model to the multi-view 
setting and obtain Mv$L_{\mathrm{ht}}$-SVM, which satisfies both 
consistency and complementarity while maintaining robustness to 
outliers. To the best of our knowledge, this is the first study 
that introduces a hybrid truncated loss into a multi-view SVM 
model. Extensive experiments show that the proposed methods 
achieve competitive accuracy together with strong robustness.

This paper makes the following contributions.
\begin{itemize}
\item We introduce a hybrid truncated ($L_{\mathrm{ht}}$) loss 
that is bounded above and able to produce sparse solutions. Based 
on this loss, we formulate $L_{\mathrm{ht}}$-SVM for single-view 
classification.
\item We derive first-order necessary and sufficient optimality 
conditions under the P-stationary point framework. This provides 
a theoretical basis for the non-convex and non-smooth 
$L_{\mathrm{ht}}$-SVM model.
\item We develop an ADMM algorithm with a working-set strategy, 
which reduces the per-iteration cost from $O(m)$ to $O(|F_k|)$ 
and leads to substantial speedups on large-scale problems.
\item We extend the single-view model to a multi-view classifier, 
Mv$L_{\mathrm{ht}}$-SVM, by incorporating structural covariance 
information and adaptive view weighting to satisfy both 
consistency and complementarity.
\item Extensive experiments on synthetic, UCI, LIBSVM, and STL-10 
datasets show that $L_{\mathrm{ht}}$-SVM achieves competitive 
accuracy with fewer support vectors and better noise robustness, 
while Mv$L_{\mathrm{ht}}$-SVM outperforms six state-of-the-art 
multi-view classifiers in terms of accuracy, precision, recall, 
and F1-score.
\end{itemize}

The rest of this paper is organised as follows. 
Section~\ref{sec:related} reviews related loss functions and 
multi-view SVM models. Section~\ref{sec:optimality} introduces 
the $L_{\mathrm{ht}}$ loss and presents the optimality theory for 
$L_{\mathrm{ht}}$-SVM. Section~\ref{sec:algorithm} describes the 
ADMM solver with working-set acceleration. 
Section~\ref{sec:multiview} presents the multi-view extension. 
Section~\ref{sec:experiment} reports the experimental results. 
Section~\ref{sec:conclusion} concludes the paper.

\begin{figure*}[!t]
    \centering
    \includegraphics[width=0.85\textwidth]{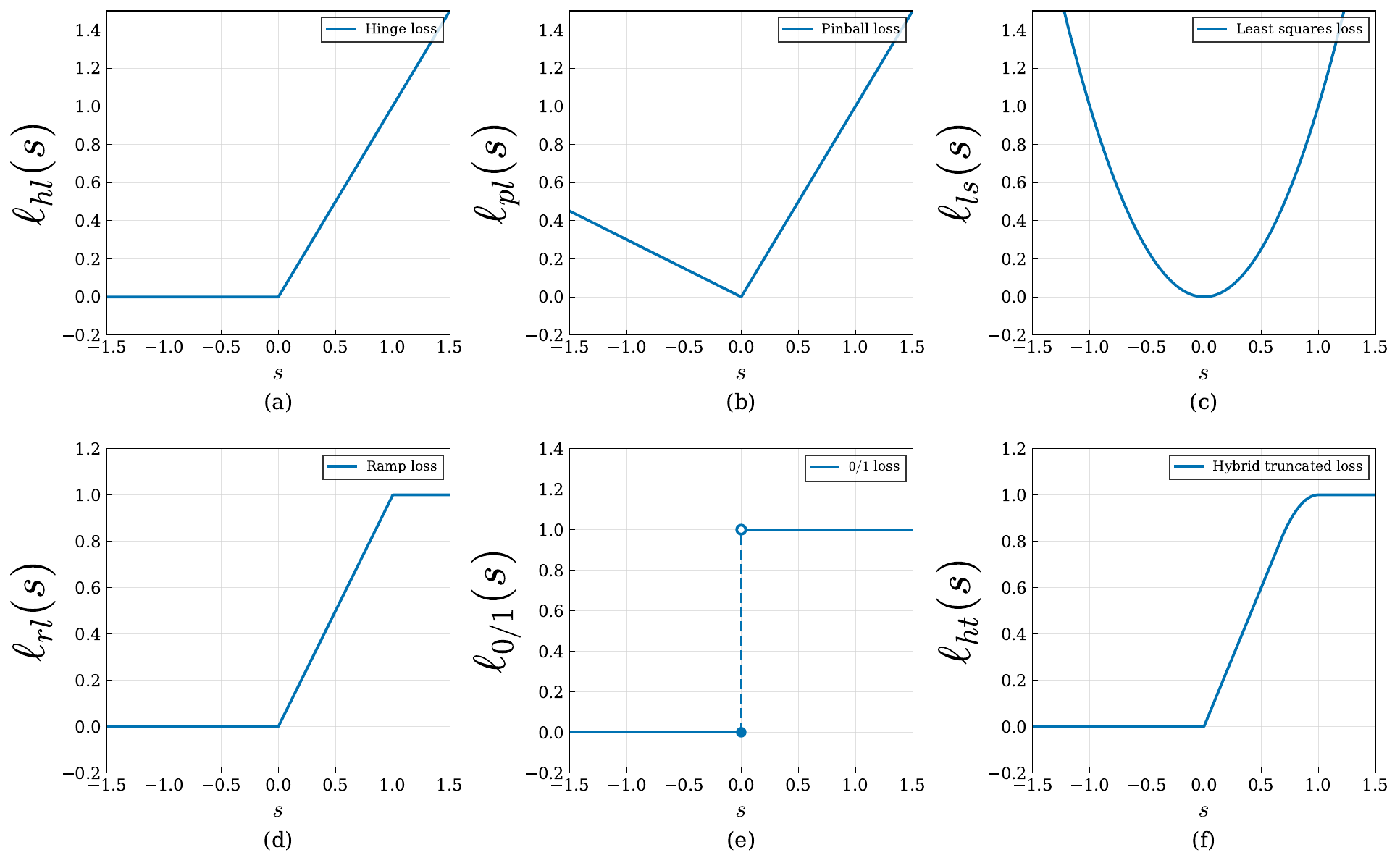}
    \caption{The figures of six loss functions. (a) Hinge loss. (b) Pinball loss. (c) Least squares loss. (d) Ramp loss. (e) $L_{0/1}$ loss. (f) Hybrid truncated loss.}
    \label{fig:loss_functions}
\end{figure*}

\section{Related work}
\label{sec:related}

We begin by fixing notation and then survey the SVM loss 
functions, optimisation algorithms, and multi-view SVM models 
that are most relevant to the present work. 
Let $S^{(V)} = \{(\mathbf{x}_i^{(1)}, \ldots,
\mathbf{x}_i^{(V)}, y_i)\}_{i=1}^{m}$ denote a $V$-view 
dataset with feature spaces 
$\mathbf{x}_i^{(v)} \in \mathcal{X}_v$ ($v \in [V]$) and 
label space $\mathcal{Y} = \{-1, +1\}$. 
Table~\ref{tab:notations} collects the principal symbols used 
throughout the paper.

\begin{table}[!t]
\centering
\caption{List of main notations.}
\label{tab:notations}
\footnotesize
\begin{tabular}{@{}ccc@{}}
\toprule
Symbol & Denotation & Scope \\
\midrule
$m$       & Number of training samples        & $\mathbb{N}$ \\
$n$       & Total dimension of the sample     & $\mathbb{N}$ \\
$n_v$ & Dimension in $v$-th view          & $\mathbb{N}$ \\
$V$       & Number of the views               & $\mathbb{N}$ \\
$\mathbf{w}$        & Weight vector of hyperplane         & $\mathbb{R}^n$ \\
$\mathbf{w}^{(v)}$  & Weight vector in $v$-th view        & $\mathbb{R}^{n_v}$ \\
$b$                  & Bias of the hyperplane              & $\mathbb{R}$ \\
$b^{(v)}$            & Bias in $v$-th view                 & $\mathbb{R}$ \\
$\theta^{(v)}$       & Weight of $v$-th view               & $(0,1)$ \\
$\mathbf{X}^{(v)}$  & Data matrix of $v$-th view          & $\mathbb{R}^{m \times n_v}$ \\
$\mathbf{y}$         & $(y_1, y_2, \ldots, y_m)$           & $\{-1,1\}^m$ \\
$\mathbf{Y}$         & $\mathrm{diag}(y_1, \ldots, y_m)$   & $\mathbb{R}^{m \times m}$ \\
$\mathbf{1}$         & $(1, \ldots, 1)^\top$               & $\mathbb{R}^m$ \\
$N$       & $[y_1\mathbf{x}_1 \!\cdots\! y_m\mathbf{x}_m]^\top$ & $\mathbb{R}^{m \times n}$ \\
$M^{(v)}$ & $[y_1\mathbf{x}_1^{(v)} \!\cdots\! y_m\mathbf{x}_m^{(v)}]^\top$ & $\mathbb{R}^{m \times n_v}$ \\
$\mathbf{t}$         & $(t_1, \ldots, t_m)^\top$           & $\mathbb{R}^m$ \\
$t_i^{(v)}$          & $1 - y_i f^{(v)}(\mathbf{x}_i^{(v)})$ & $\mathbb{R}$ \\
$\|\cdot\|_0$        & Number of non-zero elements         &  \\
$\mathbf{q}$         & Auxiliary variable                  & $\mathbb{R}^m$ \\
$\boldsymbol{\lambda}$ & Lagrangian multiplier             & $\mathbb{R}^m$ \\
$\boldsymbol{\xi}, \boldsymbol{\eta}$ & Slack variables    & $\mathbb{R}_+^m$ \\
$C, C_1, C_2, D, \epsilon$ & Nonnegative parameters       & $\mathbb{R}_+$ \\
$\alpha, \eta, \beta$ & Nonnegative parameters             & $\mathbb{R}_+$ \\
$\xi$                & ADMM penalty parameter              & $\mathbb{R}_+$ \\
$\Sigma^{(v)}$       & Structural covariance matrix        & $\mathbb{R}^{n_v \times n_v}$ \\
$l$                  & Outer loop counter                  & $\mathbb{N}$ \\
$k$                  & Inner loop counter                  & $\mathbb{N}$ \\
\bottomrule
\end{tabular}
\end{table}

\subsection{Loss functions and optimization algorithms for SVM}
\label{subsec:loss-algo-review}

A wide variety of loss functions have been investigated for 
SVM. We group them into convex and non-convex families and 
briefly discuss representative members.

\begin{itemize}
\item \textbf{Hinge loss.} Introduced in the original SVM 
\citep{cortes1995support}, the hinge loss 
(see Fig.~\ref{fig:loss_functions}(a)) is defined as
\[
\ell_{h}(s) :=
\begin{cases}
0, & s < 0,\\
s, & s \ge 0.
\end{cases}
\]
Samples satisfying $s < 0$ incur zero penalty, yielding sparse 
solutions. On the positive side, however, the loss increases 
linearly without bound, making the model susceptible to 
large-margin outliers \citep{brahmi2024efficient,wang2023fast}.

\item \textbf{Pinball loss.} A generalisation of the hinge loss 
with a tunable slope on the negative side 
\citep{huang2013support} 
(see Fig.~\ref{fig:loss_functions}(b)):
\[
\ell_{pl}(s) :=
\begin{cases}
\iota s, & s < 0,\\
s, & s \ge 0,
\end{cases}
\]
where $\iota \in [0,1]$. It is non-smooth at $s=0$ and 
imposes a linear penalty on every sample, leading to denser 
solutions and limited noise tolerance \citep{huang2016solution}.

\item \textbf{Least squares loss.} The quadratic penalty 
\citep{suykens1999least,liu2022l2} 
(see Fig.~\ref{fig:loss_functions}(c)) reads
\[
\ell_{ls}(s) := s^2, \quad s \in \mathbb{R}.
\]
Its differentiability simplifies training, but the unbounded 
quadratic growth amplifies the influence of outliers and 
eliminates sparsity \citep{wang2005comparison}.
\end{itemize}

Convex losses benefit from well-established optimisation 
theory, yet their unboundedness limits robustness under 
contaminated data. Non-convex alternatives address this 
shortcoming at the expense of a harder optimisation landscape.

\begin{itemize}
\item \textbf{Ramp loss.} The ramp loss \citep{wang2024fast} 
(see Fig.~\ref{fig:loss_functions}(d)) truncates the hinge 
loss at a finite ceiling $\overline{\chi}\ge 1$:
\[
\ell_{rl}(s) :=
\begin{cases}
\overline{\chi}, & s \ge \overline{\chi},\\
s, & s \in (0, \overline{\chi}),\\
0, & s \le 0.
\end{cases}
\]
The output is confined to $[0, \overline{\chi}]$, securing both 
sparsity and bounded influence on outliers, although non-smoothness 
at $s=0$ and $s=\overline{\chi}$ must be handled carefully.

\item \textbf{$L_{0/1}$ loss.} Proposed by 
\cite{wang2021support}, the $L_{0/1}$ loss 
(see Fig.~\ref{fig:loss_functions}(e)) is the purest counting 
loss:
\[
\ell_{0/1}(s) :=
\begin{cases}
1, & s > 0,\\
0, & s \le 0.
\end{cases}
\]
Each misclassified point contributes a unit cost irrespective 
of the margin violation magnitude, which maximally bounds the 
outlier influence. The resulting problem, however, is NP-hard, 
and practical solvers can only guarantee convergence to local 
optimisers \citep{akhtar2024roboss}.
\end{itemize}

In summary, non-convex losses achieve superior robustness 
through their bounded output range, but this advantage comes 
with a more demanding optimisation procedure.

\paragraph{Optimization algorithms}
Solving SVM models with non-convex losses requires specialised 
algorithms. Full-sample gradient-based methods---such as the 
concave-convex procedure (CCCP) \citep{wang2021twin}, 
semi-smooth Newton schemes \citep{yin2019semismooth}, and the 
conjugate gradient method \citep{suykens1999least}---are 
straightforward but incur high per-iteration cost because 
they process the entire training set. Sub-sample strategies, 
including sequential minimal optimisation (SMO) 
\citep{huang2013support}, stochastic gradient descent (SGD) 
\citep{allen2018katyusha}, and coordinate descent 
\citep{liu2021smooth}, alleviate this burden yet may exhibit 
slow convergence on non-convex objectives.

A different strategy is ADMM, which splits the original problem into
tractable subproblems solved in alternation 
\citep{feng2022support,dai2023over}. ADMM has been applied to 
SVM through, e.g., hinge-ADMM \citep{ye2011efficient}, elastic 
net ADMM \citep{wang2024fastsvm}, and ramp-ADMM 
\citep{wang2024sparse}. A common bottleneck of these 
approaches is the need to invert a matrix involving all 
training samples at every iteration. The $L_{0/1}$-SVM 
\citep{wang2021support} overcomes this limitation by coupling 
ADMM with a working-set rule that restricts each iteration to 
a small active subset, achieving global convergence to a 
P-stationary point at substantially lower cost.

\subsection{MVL models}
\label{subsec:mvl-review}

\subsubsection{SVM-2K}
SVM-2K \citep{farquhar2005two} builds upon the hinge loss and 
enforces consistency between two views via KCCA theory. For a 
two-view dataset 
$S^{(2)} = \{(\mathbf{x}_i^{(1)}, \mathbf{x}_i^{(2)}, y_i)\}_{i=1}^m$, 
the optimisation problem reads:
\begin{align}
\min_{\substack{\mathbf{w}^{(v)},
\boldsymbol{\xi}^{(v)},
\boldsymbol{\eta}}}
&\; \frac{1}{2}\bigl\|\mathbf{w}^{(1)}\bigr\|_2^2
+ \frac{1}{2}\bigl\|\mathbf{w}^{(2)}\bigr\|_2^2 \notag\\
&\; + C_1 \mathbf{1}^\top \boldsymbol{\xi}^{(1)}
+ C_2 \mathbf{1}^\top \boldsymbol{\xi}^{(2)}
+ D \mathbf{1}^\top \boldsymbol{\eta} \notag\\
\text{s.t.} \quad
&\; \bigl|\mathbf{X}^{(1)}\mathbf{w}^{(1)}
- \mathbf{X}^{(2)}\mathbf{w}^{(2)}\bigr|
\le \boldsymbol{\eta} + \epsilon \mathbf{1}, \notag\\
&\; \mathbf{Y}\mathbf{X}^{(v)}\mathbf{w}^{(v)} \ge \mathbf{1}
- \boldsymbol{\xi}^{(v)}, \notag\\
&\; \boldsymbol{\xi}^{(v)} \ge \mathbf{0}, \;
\boldsymbol{\eta} \ge \mathbf{0}, \quad v = 1, 2.
\label{eq:svm2k}
\end{align}
Here $\mathbf{Y} := \mathrm{diag}(y_1, \ldots, y_m) \in \mathbb{R}^{m \times m}$ 
is the label matrix, $\mathbf{1} := (1, \ldots, 1)^\top \in \mathbb{R}^m$, 
$C_1, C_2, D > 0$ are penalty parameters, and $\epsilon \ge 0$ 
is a tolerance constant. Because the hinge loss is unbounded, 
SVM-2K lacks robustness to outliers. Additionally, it only 
imposes a consensus constraint between the two views and does 
not capture complementary information.

\subsubsection{MvSVM-2C}
MvSVM-2C \citep{xie2019multi} extends SVM-2K to an arbitrary 
number of views while satisfying both the 2C principles. It 
retains the hinge loss but introduces a learnable weight vector 
$\boldsymbol{\theta}$ that adaptively balances view-specific 
contributions. The formulation is:
\begin{align}
\min_{\substack{\mathbf{w}^{(v)},
\boldsymbol{\xi}^{(v)},
\boldsymbol{\eta}^{(j)},
\boldsymbol{\theta}}}
&\; \frac{1}{2}\sum_{v=1}^{V} \theta^{(v)}
\bigl\|\mathbf{w}^{(v)}\bigr\|^2
+ \frac{\beta}{2}\|\boldsymbol{\theta}\|^2 \notag\\
&\; + D \sum_{v=1}^{V}\sum_{j=1}^{V}
\mathbf{1}^\top \boldsymbol{\eta}^{(j)}
+ C \sum_{v=1}^{V} \mathbf{1}^\top
\boldsymbol{\xi}^{(v)} \notag\\
\text{s.t.} \quad
&\; \bigl|\mathbf{X}^{(i)}\mathbf{w}^{(i)}
- \mathbf{X}^{(j)}\mathbf{w}^{(j)}\bigr|
\le \boldsymbol{\eta}^{(j)} + \epsilon \mathbf{1}, \notag\\
&\; \mathbf{Y}\mathbf{X}^{(v)}\mathbf{w}^{(v)} \ge \mathbf{1}
- \boldsymbol{\xi}^{(v)}, \notag\\
&\; \boldsymbol{\xi}^{(v)} \ge \mathbf{0}, \;
\boldsymbol{\eta}^{(j)} \ge \mathbf{0}, \;
\theta^{(v)} > 0, \;
\sum_{v=1}^{V} \theta^{(v)} = 1, \notag\\
&\; i, j, v \in [V], \;
1 \le i \le j \le V.
\label{eq:mvsvm2c}
\end{align}
Here $V$ is the total number of views, $C, D > 0$ are penalty 
parameters, $\epsilon \ge 0$ is a tolerance constant, and 
$\beta > 0$ regularises the weight distribution 
$\boldsymbol{\theta}$. The $\binom{V}{2}$ pairwise constraints 
encode consensus, while the adaptive weights exploit 
complementary information across views. A key limitation of 
MvSVM-2C is its reliance on the unbounded hinge loss, which 
restricts robustness under outlier or label-noise corruption.

\section{Optimality conditions of \texorpdfstring{$L_{\mathrm{ht}}$}{Lht}-SVM}
\label{sec:optimality}

\subsection{Hybrid-truncated loss function}
\label{subsec:ht-loss}

We now construct a loss that retains the sparsity and boundedness of 
the ramp loss while gaining a smooth transition region:
\begin{equation}
\ell_{\mathrm{ht}}(s) :=
\left\{
\begin{array}{ll}
1, & s \ge 1,\\
(-9s^2 + 18s - 4)/5, & s \in (2/3, 1),\\
6s/5, & s \in (0, 2/3],\\
0, & s \le 0.
\end{array}
\right.
\label{eq:ht-loss}
\end{equation}
We call \eqref{eq:ht-loss} the hybrid-truncated ($L_{\mathrm{ht}}$) loss 
(see Fig.~\ref{fig:loss_functions}(f)). The polynomial coefficients are 
determined by imposing $C^1$ continuity at the junctions $s = 2/3$ and 
$s = 1$ and by requiring that the regular subdifferential and the 
proximal operator both possess closed-form expressions. The function 
is continuous on $\mathbb{R}$ and differentiable everywhere except at 
$s = 0$. Two properties are noteworthy:
\begin{enumerate}
  \item[(i)] Sparsity: For samples satisfying $s \le 0$, the $L_{\mathrm{ht}}$ loss 
  returns zero, thereby encouraging sparse solutions.
  \item[(ii)] Robustness: For samples with $s \ge 1$, the $L_{\mathrm{ht}}$ loss is 
  capped at the constant value $1$, preventing any single outlier from 
  dominating the objective.
\end{enumerate}

Substituting $\ell(\cdot)$ in \eqref{eq:svm} with $\ell_{\mathrm{ht}}(\cdot)$ 
yields a new SVM formulation:
\begin{equation}
  \min_{\mathbf{w} \in \mathbb{R}^n, b \in \mathbb{R}} \ 
  \frac{1}{2}\|\mathbf{w}\|^2 + C \sum_{i=1}^m 
  \ell_{\mathrm{ht}}\big(1 - y_i(\langle\mathbf{w}, \mathbf{x}_i\rangle + b)\big).
  \label{eq:ht-svm-primal}
\end{equation}
Introducing an auxiliary variable $\mathbf{q} \in \mathbb{R}^m$, the above 
problem can be equivalently reformulated as:
\begin{equation}
  \min_{\mathbf{w} \in \mathbb{R}^n, b \in \mathbb{R}, \mathbf{q} \in \mathbb{R}^m} \ 
  \frac{1}{2}\|\mathbf{w}\|^2 + C L_{\mathrm{ht}}(\mathbf{q})
  \quad \text{s.t.} \quad \mathbf{q} + N\mathbf{w} + b\mathbf{y} = \mathbf{1},
  \label{eq:ht-svm-constrained}
\end{equation}
where $L_{\mathrm{ht}}(\mathbf{q}) := \sum_{i=1}^m \ell_{\mathrm{ht}}(q_i)$, 
$\mathbf{1} := (1, \ldots, 1)^\top \in \mathbb{R}^m$, 
$\mathbf{y} := (y_1, \ldots, y_m)^\top \in \mathbb{R}^m$, and 
$N := [y_1\mathbf{x}_1 \cdots y_m\mathbf{x}_m]^\top \in \mathbb{R}^{m \times n}$.

We collectively term \eqref{eq:ht-svm-primal} and 
\eqref{eq:ht-svm-constrained} the $L_{\mathrm{ht}}$-SVM. The 
subsequent subsections derive the regular subdifferential and 
proximal operator of $\ell_{\mathrm{ht}}$ and establish the 
optimality theory.

\subsection{\texorpdfstring{$L_{\mathrm{ht}}$}{Lht} regular subdifferential}
\label{subsec:subdiff}

In this subsection, we present the regular subdifferential of the 
hybrid-truncated loss function.

\begin{lemma}
\label{lem:subdiff}
For $\mathbf{q} = (q_1, q_2, \cdots, q_m)^\top \in \mathbb{R}^m$, the 
regular subdifferential of $L_{\mathrm{ht}}$ is given by
\begin{align}
&\partial L_{\mathrm{ht}}(\mathbf{q}) := 
(\partial \ell_{\mathrm{ht}}(q_1), \cdots, \partial \ell_{\mathrm{ht}}(q_m))^\top 
\in \mathbb{R}^m,
\label{eq:subdiff-vector}\\
&\partial \ell_{\mathrm{ht}}(q_i) :=
\left\{
\begin{array}{ll}
0, & q_i \ge 1,\\
(18 - 18q_i)/5, & q_i \in (2/3, 1),\\
6/5, & q_i \in (0, 2/3],\\
{[0, 6/5]}, & q_i = 0,\\
0, & q_i < 0,
\end{array}
\right.
\quad i \in [m].
\label{eq:subdiff-scalar}
\end{align}
\end{lemma}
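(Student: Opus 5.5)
The plan is to use the definition of the regular (Fréchet) subdifferential of a proper function $f:\mathbb{R}^m\to\mathbb{R}$ at $\mathbf{q}$:
\[
\widehat{\partial} f(\mathbf{q})=\Bigl\{\mathbf{v}:\ \liminf_{\mathbf{z}\to\mathbf{q},\,\mathbf{z}\neq\mathbf{q}}\frac{f(\mathbf{z})-f(\mathbf{q})-\langle\mathbf{v},\mathbf{z}-\mathbf{q}\rangle}{\|\mathbf{z}-\mathbf{q}\|}\ge 0\Bigr\}.
\]
The argument reduces the vector statement to the scalar one and then treats the scalar case piece by piece.

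\textbf{Separable reduction.} Since $L_{\mathrm{ht}}(\mathbf{q})=\sum_i \ell_{\mathrm{ht}}(q_i)$ is separable, the regular subdifferential of a separable sum equals the Cartesian product of the scalar regular subdifferentials. To justify this, I use the fact that $\ell_{\mathrm{ht}}$ is locally Lipschitz and has one-sided directional derivatives everywhere, so its regular subgradients are characterised by $v\,d\le \ell_{\mathrm{ht}}'(q;d)$ for $d=\pm1$. In that situation the regular subdifferential of the sum is exactly the product of the coordinate sets: test each coordinate separately for one inclusion, and sum the coordinate inequalities for the other, using that one-sided derivatives of a piecewise-smooth function give a uniform first-order expansion. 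This yields \eqref{eq:subdiff-vector}, so it remains to prove \eqref{eq:subdiff-scalar}.

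\textbf{Scalar cases.} Off $q=0$, $\ell_{\mathrm{ht}}$ is differentiable, and where $f$ is differentiable the regular subdifferential is the singleton $\{f'(q)\}$. So I compute derivatives piece by piece:
\begin{itemize}
\item For $q>1$ and $q<0$ the derivative is $0$.
\item On $(2/3,1)$ the derivative is $(18-18q)/5$.
\item On $(0,2/3)$ the derivative is $6/5$.
\end{itemize}
At the junctions I check $C^1$ matching directly. At $q=2/3$ the quadratic piece has value $(-4+12-4)/5=4/5=6(2/3)/5$ and derivative $(18-12)/5=6/5$. At $q=1$ it has value $(-9+18-4)/5=1$ and derivative $0$. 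So $\ell_{\mathrm{ht}}$ is differentiable at $2/3$ and at $1$, with derivatives $6/5$ and $0$, which agrees with the assignments of these points in \eqref{eq:subdiff-scalar}.

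\textbf{The kink at $q=0$.} This is the only genuinely nonsmooth point and the main step. Here $\ell_{\mathrm{ht}}(z)=\max\{0,6z/5\}$ for $z$ near $0$ (specifically for $z\le 2/3$). Locally, then, $\ell_{\mathrm{ht}}$ coincides with the convex function $\tfrac65\max\{0,z\}$, and the regular subdifferential is a local notion. Hence it equals the convex subdifferential of $\tfrac65\max\{0,\cdot\}$ at $0$, namely $[0,6/5]$.

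I can also verify this directly. For $v\in[0,6/5]$:
\begin{itemize}
\item when $z>0$, the ratio is $(6/5-v)z/|z|\ge0$;
\item when $z<0$, the ratio is $-vz/|z|=v\ge0$.
\end{itemize}
Conversely, $v>6/5$ makes the ratio negative as $z\downarrow0$, and $v<0$ makes it negative as $z\uparrow0$.

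The only point needing care is the product rule in the separable reduction. I would handle it by noting that the liminf over $\mathbf{z}$ of a sum of coordinate terms, each bounded below by $o(|z_i-q_i|)$, is nonnegative. This follows because $\sum_i o(|z_i-q_i|)=o(\|\mathbf{z}-\mathbf{q}\|)$.
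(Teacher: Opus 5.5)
Your proof is correct and follows the route the paper indicates; the paper omits the proof, saying only that it follows directly from the definition of the regular subdifferential. You carry this out by reducing to the scalar case, differentiating each smooth piece with $C^1$ matching at $2/3$ and $1$, and using the local convex representation $\tfrac65\max\{0,z\}$ at the kink. The Lipschitz/directional-derivative detour in your separable reduction is unnecessary, since the summation argument in your last paragraph (using $|z_i-q_i|\le\|\mathbf{z}-\mathbf{q}\|$) already yields the product formula for arbitrary separable functions.
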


\noindent\textbf{Proof.} The proof follows directly from the definition 
of regular subdifferential and is omitted here. \hfill$\square$

\subsection{\texorpdfstring{$\ell_{\mathrm{ht}}$}{lht} proximal operator}
\label{subsec:prox}

The concept of proximal operator for the hybrid-truncated loss is given below.

\begin{definition}
\label{def:prox-1d}
Given $z \in \mathbb{R}$, $\nu, C > 0$, the proximal operator of 
$\ell_{\mathrm{ht}}$ is defined by
\begin{equation}
\mathrm{prox}_{\nu C\ell_{\mathrm{ht}}}(z) = 
\arg\min_{s \in \mathbb{R}} \nu C\ell_{\mathrm{ht}}(s) + \frac{1}{2}(s - z)^2.
\label{eq:prox-def}
\end{equation}
\end{definition}

The following lemma provides the closed-form expression of the 
$\ell_{\mathrm{ht}}$ proximal operator.

\begin{lemma}
\label{lem:prox-1d}
For $z \in \mathbb{R}$, $\nu, C > 0$, we have the following results.

\noindent (i) When $\nu C \in (0, 5/18)$, the $\ell_{\mathrm{ht}}$ proximal 
operator has the following exact formula:
\begin{equation}
\mathrm{prox}_{\nu C\ell_{\mathrm{ht}}}(z) =
\left\{
\begin{array}{ll}
z, & z > 1,\\
\frac{5z - 18\nu C}{5 - 18\nu C}
, & z \in (\frac{2}{3} + 6\nu C/5, 1],\\
z - 6\nu C/5, & z \in (6\nu C/5, 2/3 + 6\nu C/5],\\
0, & z \in [0, 6\nu C/5],\\
z, & z < 0.
\end{array}
\right.
\label{eq:prox-case1}
\end{equation}

\noindent (ii) When $\nu C \in [5/18, 25/18)$, the $\ell_{\mathrm{ht}}$ 
proximal operator has the following exact formula:
\begin{equation}
\mathrm{prox}_{\nu C\ell_{\mathrm{ht}}}(z) =
\left\{
\begin{array}{ll}
z, & z > 5/6 + 3\nu C/5,\\
z - 6\nu C/5, & z \in (6\nu C/5, 5/6 + 3\nu C/5],\\
0, & z \in [0, 6\nu C/5],\\
z, & z < 0.
\end{array}
\right.
\label{eq:prox-case2}
\end{equation}

\noindent (iii) For $\nu C \ge 25/18$, the $\ell_{\mathrm{ht}}$ proximal 
operator has the following exact formula:
\begin{equation}
\mathrm{prox}_{\nu C\ell_{\mathrm{ht}}}(z) =
\left\{
\begin{array}{ll}
z, & z > \sqrt{2\nu C},\\
z \text{ or } 0, & z = \sqrt{2\nu C},\\
0, & z \in [0, \sqrt{2\nu C}),\\
z, & z < 0.
\end{array}
\right.
\label{eq:prox-case3}
\end{equation}
\end{lemma}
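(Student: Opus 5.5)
Write $\mu := \nu C$ and $\phi_z(s) := \mu\ell_{\mathrm{ht}}(s) + \tfrac12(s-z)^2$. The plan is to minimise $\phi_z$ separately on each of the four pieces $(-\infty,0]$, $(0,2/3]$, $(2/3,1)$, $[1,\infty)$ on which $\ell_{\mathrm{ht}}$ is given by \eqref{eq:ht-loss}. This gives at most four local candidates, each with an explicit value of $\phi_z$ as a function of $(z,\mu)$. We then compare these values to find the global minimiser.

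\textbf{Piecewise minimisers.} On each piece the candidate and its value are as follows.
\begin{itemize}
\item On $(-\infty,0]$ the minimiser is $\min(z,0)$, with value $\tfrac12\max(z,0)^2$.
\item On $(0,2/3]$ the objective $\tfrac{6\mu}{5}s+\tfrac12(s-z)^2$ gives $s=z-6\mu/5$, clipped to the interval. For interior $s$ the value is $\tfrac{6\mu}{5}z-\tfrac{18\mu^2}{25}$.
\item On $[1,\infty)$ the minimiser is $\max(z,1)$, with value $\mu+\tfrac12(\max(1,z)-z)^2$.
\item On $(2/3,1)$ the objective is $\tfrac{\mu}{5}(-9s^2+18s-4)+\tfrac12(s-z)^2$, whose second derivative is $1-18\mu/5$.
\end{itemize}
The quadratic piece is exactly where the threshold $\mu=5/18$ comes from.
\begin{itemize}
\item If $\mu<5/18$ it is strictly convex, with stationary point $s=(5z-18\mu)/(5-18\mu)$. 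This lies in $(2/3,1)$ exactly when $z\in(2/3+6\mu/5,\,1)$.
\item If $\mu\ge 5/18$ it is concave (or linear), so its infimum is attained at an endpoint $2/3$ or $1$. Those endpoints are already covered by the neighbouring pieces, so this piece contributes no new candidate.
\end{itemize}

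\textbf{Case (i), $\mu<5/18$.} Here $\ell_{\mathrm{ht}}$ is $C^1$ away from $0$ and $\mu\ell_{\mathrm{ht}}$ has curvature bounded below by $-18\mu/5>-1$. Hence $\phi_z$ restricted to $(0,\infty)$ is strongly convex. Its global minimiser over $s>0$ therefore solves $0\in\partial\phi_z$ via Lemma~\ref{lem:subdiff}, which yields the three middle and top branches of \eqref{eq:prox-case1}.

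It then remains to compare this minimiser with the candidate $\min(z,0)$ from $s\le0$.
\begin{itemize}
\item For $z<0$ the candidate $s=z$ has value $0$ and $\phi_z>0$ elsewhere, so $s=z$ wins.
\item For $z\ge0$ we must show $\phi_z(0)=z^2/2$ loses to the positive-branch value exactly when $z>6\mu/5$.
\end{itemize}
On the linear branch this is $z^2/2-\tfrac{6\mu}{5}z+\tfrac{18\mu^2}{25}=\tfrac12(z-6\mu/5)^2\ge0$. On the upper branches the comparison follows from monotonicity of the optimal-value function in $z$, since the Moreau envelope has derivative $z-\mathrm{prox}$.

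\textbf{Cases (ii) and (iii), $\mu\ge 5/18$.} Only three candidates remain: $0$ (or $z$ if $z<0$), $z-6\mu/5$ when interior to $(0,2/3]$, and $\max(z,1)$. Comparing the constant-loss branch with value $\mu$ against the linear branch gives $\mu=\tfrac{6\mu}{5}z-\tfrac{18\mu^2}{25}$, i.e.\ the switching point $z=5/6+3\mu/5$. Comparing it against $0$ gives $z^2/2=\mu$, i.e.\ $z=\sqrt{2\mu}$.

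The regime boundary $\mu=25/18$ is precisely where $6\mu/5\le 5/6+3\mu/5$ turns into the reverse inequality. Equivalently, the linear branch stops being optimal for any $z$, since its interval $(6\mu/5,\,5/6+3\mu/5]$ becomes empty. Then the only competition is $0$ versus $z$, giving \eqref{eq:prox-case3} with a tie at $z=\sqrt{2\mu}$.

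\textbf{Main obstacle.} The main work is the bookkeeping in case (ii). We must verify three things:
\begin{itemize}
\item the linear-branch minimiser $z-6\mu/5$ actually lies in $(0,2/3]$ whenever it is selected, or else handle the clipped endpoint $2/3$;
\item the top switching point $5/6+3\mu/5$ lies below the region where the clip matters;
\item the candidate $s=\max(z,1)=1$ for $z<1$ never wins.
\end{itemize}
I would check each of these by direct comparison of the explicit values. I expect some care is needed near $\mu=5/18$ and $\mu=25/18$, where two thresholds coincide.
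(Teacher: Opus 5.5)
Your proposal is correct and follows essentially the same route as the paper's proof in the appendix. Both list the minimiser of each piece of $\varphi$ (namely $z$, $z-6\nu C/5$, $(5z-18\nu C)/(5-18\nu C)$ and $0$) and compare their objective values in the three regimes of $\nu C$; your strong-convexity argument for $\nu C<5/18$ and concavity argument for $\nu C\ge 5/18$ just make explicit why the paper's asserted comparisons come out as they do.

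One thing your value comparison will expose, which the paper's step ``$\varphi(s_2^*)\le\varphi(s_4^*)$'' glosses over, is a tie at $z=5/6+3\nu C/5$ in case (ii). There both $z-6\nu C/5$ and $z$ attain the value $\nu C$, so the proximal set is two-valued, just like the tie at $z=\sqrt{2\nu C}$ in case (iii). The lemma as stated silently selects $z-6\nu C/5$ at that point.
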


\noindent\textbf{Proof.} See \ref{app:prox-proof}. \hfill$\square$

\begin{definition}
\label{def:prox-md}
Given $\nu, C > 0$, the proximal operator of $L_{\mathrm{ht}}$ 
at $\mathbf{z} = (z_1, z_2, \ldots, z_m)^\top \in \mathbb{R}^m$ is defined by
\begin{equation}
\mathrm{prox}_{\nu C L_{\mathrm{ht}}}(\mathbf{z}) := 
\begin{pmatrix}
\mathrm{prox}_{\nu C\ell_{\mathrm{ht}}}(z_1)\\
\vdots\\
\mathrm{prox}_{\nu C\ell_{\mathrm{ht}}}(z_m)
\end{pmatrix}
\in \mathbb{R}^m,
\label{eq:prox-vector}
\end{equation}
where $\mathrm{prox}_{\nu C\ell_{\mathrm{ht}}}(z_i)$ is given in 
Lemma~\ref{lem:prox-1d}.
\end{definition}

In this paper, we will utilize Eqs.~\eqref{eq:subdiff-vector}--\eqref{eq:subdiff-scalar} 
and \eqref{eq:prox-vector} to establish the optimality theory of $L_{\mathrm{ht}}$-SVM. 
Additionally, Eq.~\eqref{eq:prox-vector} will be used to develop an efficient 
algorithm for solving $L_{\mathrm{ht}}$-SVM.

\subsection{First-order optimality conditions}
\label{subsec:optimality}

In this part, we turn our attention to establish the first-order optimality 
conditions of \eqref{eq:ht-svm-constrained}, which plays an important role in 
developing a fast algorithm. For this reason, we begin with introducing the 
definition of proximal stationary point (P-stationary point) of 
\eqref{eq:ht-svm-constrained} according to $L_{\mathrm{ht}}$ proximal operator.

\begin{definition}
\label{def:p-stationary}
The $(\mathbf{w}^*; b^*; \mathbf{q}^*)$ is called a P-stationary point of 
\eqref{eq:ht-svm-constrained} if there is $\nu, C > 0$ and 
$\boldsymbol{\psi}^* \in \mathbb{R}^m$ such that
\begin{equation}
\begin{cases}
\mathbf{w}^* + N^\top \boldsymbol{\psi}^* = \mathbf{0},\\
\langle \mathbf{y}, \boldsymbol{\psi}^* \rangle = 0,\\
\mathbf{q}^* + N\mathbf{w}^* + b^*\mathbf{y} = \mathbf{1},\\
\mathrm{prox}_{\nu C L_{\mathrm{ht}}}(\mathbf{q}^* - \nu\boldsymbol{\psi}^*) \ni \mathbf{q}^*.
\end{cases}
\label{eq:p-stationary}
\end{equation}
\end{definition}

Given $C > 0$ and $\mathbf{q}^* \in \mathbb{R}^m$, denote
\begin{align*}
&\mathcal{A}^* := \{k \in [m] : q_k^* > 1\}, \quad 
\mathcal{B}^* := \{k \in [m] : q_k^* \in [\tfrac{2}{3}, 1]\},\\
&\mathcal{C}^* := \{k \in [m] : q_k^* \in (0, \tfrac{2}{3})\}, \quad 
\mathcal{D}^* := \{k \in [m] : q_k^* = 0\},\\
&\mathcal{E}^* := \{k \in [m] : q_k^* < 0\}, \quad 
\mathcal{P}^* := \{k \in [m] : q_k^* \in (0, \tfrac{5}{6})\},\\
&\mathcal{Q}^* := \{k \in [m] : q_k^* = \tfrac{5}{6}\}, \quad 
\mathcal{R}^* := \{k \in [m] : q_k^* > \tfrac{5}{6}\},
\end{align*}
\begin{align}
&\nu_1^* := 
\begin{cases}
\frac{5}{18C}, & \mathcal{B}^* \neq \emptyset,\\
+\infty, & \mathcal{B}^* = \emptyset,
\end{cases}
\label{eq:nu1}\\
&\nu_2^* := 
\begin{cases}
\min_{k \in \mathcal{R}^*} \frac{5(6q_k^* - 5)}{18C}, & \mathcal{R}^* \neq \emptyset,\\
+\infty, & \mathcal{R}^* = \emptyset,
\end{cases}
\label{eq:nu2}\\
&\nu_3^* := 
\begin{cases}
\min_{k \in \mathcal{P}^*} \frac{5(5 - 6q_k^*)}{18C}, & \mathcal{P}^* \neq \emptyset,\\
+\infty, & \mathcal{P}^* = \emptyset,
\end{cases}
\label{eq:nu3}\\
&\nu_4^* := 
\begin{cases}
\frac{25}{18C}, & \mathcal{D}^* \neq \emptyset,\\
+\infty, & \mathcal{D}^* = \emptyset,
\end{cases}
\label{eq:nu4}\\
&\nu_5^* := 
\begin{cases}
+\infty, & \mathbf{q}^* \le 0,\\
\min\{\frac{(q_k^*)^2}{2C} : q_k^* > 0\}, & \text{otherwise}.
\end{cases}
\label{eq:nu5}
\end{align}

By using the above notations, we present the first-order necessary and 
sufficient condition of \eqref{eq:ht-svm-constrained} as follows.

\begin{theorem}[First-order necessary condition]
\label{thm:necessary}
Let $(\mathbf{w}^*; b^*; \mathbf{q}^*)$ be a local minimizer of 
\eqref{eq:ht-svm-constrained}. Then it is a P-stationary point 
with $0 < \nu \le \nu^* := \min\{\nu_1^*, \nu_2^*, \nu_3^*, \nu_4^*, \nu_5^*\}$.
\end{theorem}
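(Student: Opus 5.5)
The plan is to derive a KKT-type system for \eqref{eq:ht-svm-constrained} from local optimality, expressed through the regular subdifferential of Lemma~\ref{lem:subdiff}. We then convert the subdifferential inclusion, componentwise, into the proximal fixed-point inclusion of Definition~\ref{def:p-stationary}, using the closed forms of Lemma~\ref{lem:prox-1d}.

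\textbf{Step 1 (multiplier existence).} Because the constraint in \eqref{eq:ht-svm-constrained} is linear with the identity acting on $\mathbf{q}$, we can eliminate $\mathbf{q}=\mathbf{1}-N\mathbf{w}-b\mathbf{y}$. Then $(\mathbf{w}^*,b^*)$ is a local minimizer of $f(\mathbf{w},b)=\frac12\|\mathbf{w}\|^2+C L_{\mathrm{ht}}(\mathbf{1}-N\mathbf{w}-b\mathbf{y})$, and the generalized Fermat rule gives $0\in\widehat\partial f(\mathbf{w}^*,b^*)$. The argument then uses three facts:
\begin{itemize}
\item the sum rule for a smooth term plus a nonsmooth one;
\item the fact that $L_{\mathrm{ht}}$ is separable, so its regular subdifferential is the product in \eqref{eq:subdiff-vector};
\item the chain rule for the affine map $(\mathbf{w},b)\mapsto\mathbf{1}-N\mathbf{w}-b\mathbf{y}$.
\end{itemize}
Together these should yield $\boldsymbol{\psi}^*$ with $\mathbf{w}^*+N^\top\boldsymbol{\psi}^*=\mathbf{0}$, $\langle\mathbf{y},\boldsymbol{\psi}^*\rangle=0$ and $-\psi_i^*\in C\,\partial\ell_{\mathrm{ht}}(q_i^*)$ for every $i$.

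The chain rule for regular subdifferentials holds only as an inclusion in general, which is the technical subtlety here. I would handle it directly by exploiting that each $\ell_{\mathrm{ht}}$ is differentiable except at $0$. Near $q_i^*=0$, the one-sided slopes $0$ and $6/5$ let one read off $\psi_i^*$ from the directional derivatives of $f$ along directions that move only coordinate $i$. Such directions exist after reparametrizing by $\mathbf{q}$ together with the multiplier; equivalently, we can apply the standard Lagrange rule for locally Lipschitz objectives with linear constraints, where no constraint qualification is needed.

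\textbf{Step 2 (componentwise fixed point).} Set $t=\nu C$ and $g_i=-\psi_i^*/C\in\partial\ell_{\mathrm{ht}}(q_i^*)$, so $z_i=q_i^*-\nu\psi_i^*=q_i^*+t g_i$. It suffices to show $q_i^*\in\mathrm{prox}_{t\ell_{\mathrm{ht}}}(z_i)$ for every $i$ whenever $\nu\le\nu^*$, and we go through the cases of Lemma~\ref{lem:subdiff}:
\begin{itemize}
\item $q_i^*<0$ or $q_i^*>1$: here $g_i=0$ and $z_i=q_i^*$. Regime (i) of Lemma~\ref{lem:prox-1d} returns $z_i$ directly. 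Regime (ii) needs $q_i^*\ge 5/6+3t/5$, i.e.\ $t\le 5(6q_i^*-5)/18$, which is $\nu\le\nu_2^*$. Regime (iii) needs $q_i^*\ge\sqrt{2t}$, which is $\nu\le\nu_5^*$. The boundary cases land on the multivalued point of the prox, and I expect the paper to accept this.
\item $q_i^*\in\mathcal{B}^*$: $\nu\le\nu_1^*$ forces $t<5/18$, hence regime (i). One checks that $z_i=q_i^*+t(18-18q_i^*)/5$ inverts $\frac{5z-18t}{5-18t}$ to give $q_i^*$. The endpoints $q_i^*=1$ (where $g_i=0$) and $q_i^*=2/3$ (where $g_i=6/5$) fall into the correct branches.
\item $q_i^*\in(0,2/3)$: $z_i=q_i^*+6t/5$, and the branch $z-6t/5$ returns $q_i^*$ in regime (i). In regime (ii) it does so provided $z_i\le5/6+3t/5$, i.e.\ $\nu\le\nu_3^*$. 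This also keeps $t<25/18$, excluding regime (iii), where a positive $q_i^*$ with $g_i\neq0$ could not be reproduced.
\item $q_i^*=0$: $z_i=tg_i\in[0,6t/5]$, which maps to $0$ in regimes (i) and (ii). In regime (iii), $\nu\le\nu_4^*$ ensures $6t/5\le\sqrt{2t}$, with equality exactly at $t=25/18$, where $0$ is still an admissible value.
\end{itemize}

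\textbf{Main obstacle.} Step~1 is the real difficulty: rigorously passing from local minimality to the exact inclusion $-\boldsymbol{\psi}^*\in C\,\partial L_{\mathrm{ht}}(\mathbf{q}^*)$ despite the nonconvexity. Step~2 is careful bookkeeping against the thresholds $\nu_1^*,\ldots,\nu_5^*$. There the one delicate point is making sure that the regime determined by $t$ is consistent across all indices simultaneously, which is exactly what taking the minimum in $\nu^*$ guarantees.
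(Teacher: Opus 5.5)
Your route is the paper's: multipliers from variational analysis, then a componentwise check of the prox inclusion in the three regimes of $t=\nu C$. However, Step~2 contains a false step exactly at the delicate case.

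\textbf{The case $t=5/18$.} In the $\mathcal{B}^*$ bullet you say that $\nu\le\nu_1^*$ forces $t<5/18$. It only forces $t\le 5/18$, and $\nu=5/(18C)$ is admissible, e.g.\ when some $q_i^*\in\{5/6,1\}$ and the other indices impose nothing tighter. An index with $q_i^*=5/6$ is constrained only through $\nu_1^*$ and, via $\nu_5^*$, by $25/(72C)>5/(18C)$; an index with $q_i^*=1$ also gives $\nu_2^*=5/(18C)$. At $t=5/18$, Lemma~\ref{lem:prox-1d} puts you in regime (ii) with $\widehat{\beta}=1$. Every $q_i^*\in[2/3,1]$ then gives $z_i=q_i^*+t g_i=1$, and \eqref{eq:prox-case2} returns $1-\tfrac13=\tfrac23$. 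So for $q_i^*\in\{5/6,1\}$ the check through the stated formula fails.

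What saves the theorem is that the formula is incomplete there. For $t\le 5/18$ the map $s\mapsto t\ell_{\mathrm{ht}}(s)+\tfrac12(s-z)^2$ is convex, since its only concave piece has curvature $1-18t/5\ge 0$. For $t=5/18$, $z=1$ it is constant on $[2/3,1]$, so the true $\mathrm{prox}(1)$ is the whole interval $[2/3,1]$. The clean repair is to use convexity directly. For every $t\le 5/18$, the KKT relation $z_i\in q_i^*+t\,\partial\ell_{\mathrm{ht}}(q_i^*)$ is exactly the optimality condition of a convex problem. Hence $q_i^*\in\mathrm{prox}(z_i)$ with no branch-checking and no thresholds.

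\textbf{The regime-(ii) boundary for $q_i^*>1$.} The same problem sits at the boundary you flag. When $t=5(6q_i^*-5)/18$ you get $z_i=q_i^*=\widehat{\beta}=5/6+3t/5$. Formula \eqref{eq:prox-case2} as stated is single-valued there, returning $\widehat{\beta}-6t/5$, so the lemma offers no ``multivalued point'' to appeal to. You must verify the tie yourself. With $\varphi(s):=t\ell_{\mathrm{ht}}(s)+\tfrac12(s-\widehat{\beta})^2$ one has $\varphi(\widehat{\beta}-6t/5)=\tfrac{6t}{5}\widehat{\beta}-\tfrac{18t^2}{25}=t=\varphi(\widehat{\beta})$. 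Meanwhile $\varphi(0)-t=\tfrac12(\widehat{\beta}-\tfrac53)^2>0$, and the concave piece on $(2/3,1]$ stays at or above $t$. Therefore $\mathrm{prox}(\widehat{\beta})=\{\widehat{\beta}-6t/5,\ \widehat{\beta}\}\ni q_i^*$.

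The paper's own proof makes both slips. It derives a ``contradiction'' from $\nu C\le 5/18$ and $\nu C\ge 5/18$, and it silently upgrades $q_k^*\ge\widehat{\beta}$ to $q_k^*>\widehat{\beta}$. These points need the direct argument in either version.

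\textbf{Step~1.} This is less of an obstacle than you suggest, but not via your coordinate-direction idea, since on the feasible set $q_i$ cannot move alone. The multiplier rule for a locally Lipschitz objective over an affine set suffices. You only need to note that $\ell_{\mathrm{ht}}$ is $C^1$ away from $0$ and convex near $0$, so its Clarke subdifferential coincides with \eqref{eq:subdiff-scalar}.
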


\noindent\textbf{Proof.} The proof is presented in \ref{app:necessary}. 
\hfill$\square$

\begin{theorem}[First-order sufficient condition]
\label{thm:sufficient}
Suppose that $(\mathbf{w}^*; b^*; \mathbf{q}^*)$ with $\boldsymbol{\psi}^*$ is 
a P-stationary point of \eqref{eq:ht-svm-constrained}. Then it is a local minimizer of \eqref{eq:ht-svm-constrained}.
\end{theorem}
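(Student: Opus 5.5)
The plan is the standard P-stationarity-to-local-minimality argument used for $L_{0/1}$-SVM. First, read off componentwise information on $\boldsymbol{\psi}^*$ from the proximal inclusion. Then use the linear equations in \eqref{eq:p-stationary} to reduce the objective gap to a separable sum over $i\in[m]$, and show each summand is nonnegative on a small neighbourhood.

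\textbf{Step 1: multipliers from the prox condition.} For each $i$, set $z_i:=q_i^*-\nu\psi_i^*$ and require $q_i^*\in\mathrm{prox}_{\nu C\ell_{\mathrm{ht}}}(z_i)$. I will go through the three regimes of Lemma~\ref{lem:prox-1d}.
\begin{itemize}
\item Branches where the prox returns $z_i$ itself ($z_i<0$, or $z_i$ above the upper threshold) force $\psi_i^*=0$. In those branches $q_i^*<0$ or $q_i^*>1$. For the upper branch this holds because in case (ii) $5/6+3\nu C/5\ge 1$, and in case (iii) $\sqrt{2\nu C}\ge 5/3>1$.
\item The branch returning $0$ gives $q_i^*=0$ and $-\nu\psi_i^*\in[0,6\nu C/5]$ (resp.\ $[0,\sqrt{2\nu C}]$), so $\psi_i^*\le 0$. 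In case (iii) this needs a little more care, but $-\psi_i^*\le 6C/5$ is what is needed below.
\item The linear branch gives $q_i^*\in(0,2/3]$ and $\psi_i^*=-6C/5$.
\item The quadratic branch (only in case (i)) gives $q_i^*\in(2/3,1]$ and $-\psi_i^*=C(18-18q_i^*)/5$.
\end{itemize}
In all cases $-\psi_i^*\in C\,\partial\ell_{\mathrm{ht}}(q_i^*)$ in the sense of Lemma~\ref{lem:subdiff}, with $\psi_i^*=0$ wherever $\ell_{\mathrm{ht}}$ is locally constant.

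\textbf{Step 2: reduction to a separable inequality.} Take any feasible $(\mathbf{w};b;\mathbf{q})$ near the candidate, so that $\mathbf{q}-\mathbf{q}^*=-N(\mathbf{w}-\mathbf{w}^*)-(b-b^*)\mathbf{y}$. Using $\mathbf{w}^*=-N^\top\boldsymbol{\psi}^*$ and $\langle\mathbf{y},\boldsymbol{\psi}^*\rangle=0$,
\[
\tfrac12\|\mathbf{w}\|^2-\tfrac12\|\mathbf{w}^*\|^2=\tfrac12\|\mathbf{w}-\mathbf{w}^*\|^2+\langle\boldsymbol{\psi}^*,\mathbf{q}-\mathbf{q}^*\rangle .
\]
The objective gap is therefore
\[
\tfrac12\|\mathbf{w}-\mathbf{w}^*\|^2+\sum_i\Big[C\ell_{\mathrm{ht}}(q_i)-C\ell_{\mathrm{ht}}(q_i^*)+\psi_i^*(q_i-q_i^*)\Big].
\]
Choose the neighbourhood so that every $q_i$ stays in the same piece as $q_i^*$, or in a one-sided neighbourhood of $0$ when $q_i^*=0$. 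Then the summand vanishes for indices with $q_i^*<0$, $q_i^*>1$, or $q_i^*\in(0,2/3)$, since there $\ell_{\mathrm{ht}}$ is affine with slope $-\psi_i^*/C$.

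For $q_i^*=0$ the summand is nonnegative:
\begin{itemize}
\item if $q_i<0$, it equals $\psi_i^* q_i\ge0$;
\item if $q_i>0$, it equals $(6C/5+\psi_i^*)q_i\ge0$.
\end{itemize}

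\textbf{Main obstacle: the concave quadratic piece.} The expected difficulty is indices with $q_i^*\in[2/3,1]$. For these, including the endpoints where the neighbourhood straddles two pieces, the concavity of $(-9s^2+18s-4)/5$ makes the summand equal to $-\tfrac{9C}{5}(q_i-q_i^*)^2\le0$ (or at least bounded below by this). My first attempt is to absorb these negative quadratic terms into the positive term $\tfrac12\|\mathbf{w}-\mathbf{w}^*\|^2$, writing $q_i-q_i^*=-y_i\mathbf{x}_i^\top(\mathbf{w}-\mathbf{w}^*)-y_i(b-b^*)$. The obstruction is the bias direction: a pure change in $b$ is not penalised by the $\mathbf{w}$-term.

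To handle $b$ I would exploit the remaining nonnegative first-order terms. The indices with $q_i^*=0$ contribute terms linear in $|q_i-q_i^*|$, which dominate quadratic ones near the point, provided their multiplier bounds are strict. I would also use the freedom of choosing $\nu$; for example, if $\nu C\ge 5/18$ the quadratic branch cannot occur, so $\mathcal{B}^*=\emptyset$.

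If this absorption fails in general, the fallback is a case split:
\begin{itemize}
\item When $\mathcal{B}^*=\emptyset$ (for instance under the regimes (ii) and (iii) of Lemma~\ref{lem:prox-1d}), the argument above already gives local minimality.
\item Otherwise, I would require a compensating bound of the form $\sum_{i\in\mathcal{B}^*}(q_i-q_i^*)^2\le\frac{5}{18C}\|\mathbf{w}-\mathbf{w}^*\|^2$ along feasible directions, and check whether it follows from $\nu\le\nu_1^*=5/(18C)$ together with the constraint structure.
\end{itemize}
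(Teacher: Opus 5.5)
Your Steps 1 and 2 are correct. Keeping $\tfrac12\|\mathbf{w}-\mathbf{w}^*\|^2$ is sharper than the paper, which discards it via convexity. You have also found the real obstruction. But the case you leave open, some $q_i^*\in[\tfrac23,1]$, cannot be closed, because the statement is false there.

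\textbf{Counterexample.} Take $n=1$, $m=2$, $C=5/36$, $(x_1,y_1)=(1,1)$, $(x_2,y_2)=(-1,-1)$, so $N=(1,1)^\top$ and $\mathbf{y}=(1,-1)^\top$. Let $w^*=\tfrac16$, $b^*=0$, $\mathbf{q}^*=(\tfrac56,\tfrac56)^\top$, $\boldsymbol{\psi}^*=(-\tfrac1{12},-\tfrac1{12})^\top$ and $\nu=1$.
\begin{itemize}
\item The three linear conditions in \eqref{eq:p-stationary} hold.
\item $p_i^*=q_i^*-\nu\psi_i^*=\tfrac{11}{12}\in(\tfrac23+\tfrac{6\nu C}{5},1]=(\tfrac56,1]$, where \eqref{eq:prox-case1} returns $(5\cdot\tfrac{11}{12}-\tfrac52)/(5-\tfrac52)=\tfrac56$.
\item This is the unique minimizer of the prox objective: for $\nu C<\tfrac5{18}$ that objective is strictly convex on $[0,\infty)$, and $p_i^*>0$.
\end{itemize}
So the point is P-stationary. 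Yet along the feasible line $(w^*;\epsilon;(\tfrac56-\epsilon,\tfrac56+\epsilon))$ with $0<|\epsilon|<\tfrac16$, the objective changes by
\[
C\bigl[\ell_{\mathrm{ht}}(\tfrac56-\epsilon)+\ell_{\mathrm{ht}}(\tfrac56+\epsilon)-2\ell_{\mathrm{ht}}(\tfrac56)\bigr]=-\tfrac{18C}{5}\epsilon^2=-\tfrac12\epsilon^2<0,
\]
so it is not a local minimizer. This is exactly your bias direction:
\begin{itemize}
\item $\mathbf{w}-\mathbf{w}^*=0$ while $\sum_i(q_i-q_i^*)^2=2\epsilon^2$, so the compensating bound you wanted fails.
\item There are no indices with $q_i^*=0$ whose first-order terms could be exploited.
\item $\nu$ is not at your disposal. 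It is supplied existentially by the hypothesis, and local minimality does not depend on it.
\end{itemize}

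\textbf{The paper's proof.} It does not overcome this; it overlooks it. In Case (i), for $\mathbb{H}_4^*$, the exact expansion is
\[
(q_i^*-p_i^*)^2-(q_i-p_i^*)^2=-2\nu\psi_i^*(q_i-q_i^*)-(q_i-q_i^*)^2 .
\]
So the prox inequality only gives $C[\ell_{\mathrm{ht}}(q_i)-\ell_{\mathrm{ht}}(q_i^*)]+\psi_i^*(q_i-q_i^*)\ge-\tfrac1{2\nu}(q_i-q_i^*)^2$. The dropped remainder is negative, and the left side in fact equals $-\tfrac{9C}{5}(q_i-q_i^*)^2$, as you computed.

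\textbf{What your argument does prove.} It proves the corrected statement under the assumption $\nu>\tfrac{5}{18C}$. In that regime the prox objective is strictly concave on $(\tfrac23,1)$ and $C^1$ at both junctions, so none of its minimizers lies in $[\tfrac23,1]$. Hence every $q_i^*$ lies in $(-\infty,0)$, $\{0\}$, $(0,\tfrac23)$ or $(1,\infty)$, with $-\psi_i^*\in C\,\partial\ell_{\mathrm{ht}}(q_i^*)$, and your Step 2 finishes the proof.

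The inequality on $\nu$ must be strict, contrary to your remark that $\nu C\ge\tfrac5{18}$ forces $\mathcal{B}^*=\emptyset$. With the same data, take $w^*=\tfrac13$, $\mathbf{q}^*=(\tfrac23,\tfrac23)^\top$, $\boldsymbol{\psi}^*=(-\tfrac16,-\tfrac16)^\top$ and $\nu=2$, so $\nu C=\tfrac5{18}$. Then $p_i^*=1$, and the linear branch of \eqref{eq:prox-case2} returns $\tfrac23$, so the point is P-stationary. The same perturbation lowers the objective by $\tfrac14\epsilon^2$.
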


\noindent\textbf{Proof.} The proof is presented in \ref{app:sufficient}. 
\hfill$\square$

Based on the above two theorems, it is evident that our newly constructed P-stationary 
point must be a local minimizer of \eqref{eq:ht-svm-constrained}, which indicates 
that we can apply the P-stationary point to characterize the support vectors of 
$L_{\mathrm{ht}}$-SVM and employ it as a halting condition of 
our established method in the next part.

\section{Fast algorithm}
\label{sec:algorithm}

We now present the fast solver for $L_{\mathrm{ht}}$-SVM. We first 
characterise its support vectors and then describe an ADMM 
algorithm paired with a working-set selection rule (ADMM-WS).

\subsection{\texorpdfstring{$L_{\mathrm{ht}}$}{Lht} support vectors}
\label{subsec:sv}

The separating hyperplane of $L_{\mathrm{ht}}$-SVM is 
fully determined by its support vectors; all remaining 
training samples can be discarded. Reducing the 
support-vector count is therefore essential for scalability. 
Using the optimality conditions derived in 
Section~\ref{sec:optimality}, we formally define the 
$L_{\mathrm{ht}}$ support vectors below.

\begin{theorem}[$L_{\mathrm{ht}}$ support vectors for $\nu C \in (0, 5/18)$]
\label{thm:sv-case1}
For $\nu C \in (0, 5/18)$, let $(\mathbf{w}^*; b^*; \mathbf{q}^*)$ with 
$\boldsymbol{\psi}^*$ be a P-stationary point of \eqref{eq:ht-svm-constrained}. 
Then, the $\mathbf{w}^*$ satisfies
\begin{equation}
\mathbf{w}^* = -\sum_{i \in \mathbb{G}^*} \psi_i^* y_i \mathbf{x}_i 
\text{ and } \psi_i^* = 0, i \in \overline{\mathbb{G}}^*,
\label{eq:sv-case1}
\end{equation}
\begin{align*}
&\mathbb{G}_1^* := \{i \in [m] : p_i^* < 0\}, \quad 
\mathbb{G}_2^* := \{i \in [m] : p_i^* \in [0, \widetilde{\beta}]\},\\
&\mathbb{G}_3^* := \{i \in [m] : p_i^* \in (\widetilde{\beta}, \widehat{\beta})\},\\
&\mathbb{G}_4^* := \{i \in [m] : p_i^* \in [\widehat{\beta}, 1)\}, \quad 
\mathbb{G}_5^* := \{i \in [m] : p_i^* \ge 1\},
\end{align*}
where $\mathbf{p}^* := \mathbf{q}^* - \nu\boldsymbol{\psi}^*$, 
$\widetilde{\beta} := 6\nu C/5$, $\widehat{\beta} := 2/3 + 6\nu C/5$, 
$\mathbb{G}^* := \mathbb{G}_2^* \cup \mathbb{G}_3^* \cup \mathbb{G}_4^*$ 
and $\overline{\mathbb{G}}^* := [m] \backslash \mathbb{G}^*$. The training 
vectors $\{\mathbf{x}_i \in \mathbb{R}^n : i \in \mathbb{G}^*\}$ are called the 
$L_{\mathrm{ht}}$ support vectors and meet
\begin{equation}
\begin{cases}
y_i(\langle \mathbf{w}^*, \mathbf{x}_i \rangle + b^*) = 1, & i \in \mathbb{G}_2^*,\\
y_i(\langle \mathbf{w}^*, \mathbf{x}_i \rangle + b^*) \in (1/3, 1), & i \in \mathbb{G}_3^*,\\
y_i(\langle \mathbf{w}^*, \mathbf{x}_i \rangle + b^*) \in (0, 1/3], & i \in \mathbb{G}_4^*.
\end{cases}
\label{eq:sv-cond1}
\end{equation}
\end{theorem}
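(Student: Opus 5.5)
The plan is to read everything off the four conditions in \eqref{eq:p-stationary}, using the explicit form of the proximal operator in Lemma~\ref{lem:prox-1d}(i). Two facts drive the argument.
\begin{itemize}
\item Since $\nu C\in(0,5/18)$, formula \eqref{eq:prox-case1} is single-valued. Hence the inclusion $\mathrm{prox}_{\nu C L_{\mathrm{ht}}}(\mathbf{p}^*)\ni \mathbf{q}^*$, with $\mathbf{p}^*=\mathbf{q}^*-\nu\boldsymbol{\psi}^*$, becomes the componentwise identity $q_i^*=\mathrm{prox}_{\nu C\ell_{\mathrm{ht}}}(p_i^*)$.
\item This gives $\psi_i^*=(q_i^*-p_i^*)/\nu$ for every $i$.
\end{itemize}
The five sets $\mathbb{G}_1^*,\dots,\mathbb{G}_5^*$ partition $[m]$ according to the value of $p_i^*$. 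On each set I will compute $q_i^*$, and from it both $\psi_i^*$ and the margin $y_i(\langle\mathbf{w}^*,\mathbf{x}_i\rangle+b^*)$. The third condition of \eqref{eq:p-stationary} gives, componentwise,
\[
y_i(\langle\mathbf{w}^*,\mathbf{x}_i\rangle+b^*)=1-q_i^*,
\]
because the $i$-th row of $N\mathbf{w}^*+b^*\mathbf{y}$ is exactly $y_i(\langle\mathbf{w}^*,\mathbf{x}_i\rangle+b^*)$.

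First, the complement $\overline{\mathbb{G}}^*=\mathbb{G}_1^*\cup\mathbb{G}_5^*$.
\begin{itemize}
\item For $i\in\mathbb{G}_1^*$ we have $p_i^*<0$, so \eqref{eq:prox-case1} gives $q_i^*=p_i^*$ and hence $\psi_i^*=0$.
\item For $i\in\mathbb{G}_5^*$ with $p_i^*>1$, again $q_i^*=p_i^*$ and $\psi_i^*=0$.
\item The boundary value $p_i^*=1$ needs a separate check. It falls in the second branch of \eqref{eq:prox-case1}, which gives $q_i^*=(5-18\nu C)/(5-18\nu C)=1=p_i^*$, so $\psi_i^*=0$ as well.
\end{itemize}
With $\psi_i^*=0$ on $\overline{\mathbb{G}}^*$ established, the first condition $\mathbf{w}^*=-N^\top\boldsymbol{\psi}^*=-\sum_i\psi_i^*y_i\mathbf{x}_i$ collapses to the sum over $\mathbb{G}^*$. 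This proves \eqref{eq:sv-case1}.

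Next, the margin conditions \eqref{eq:sv-cond1}, checked branch by branch.
\begin{itemize}
\item On $\mathbb{G}_2^*$, $p_i^*\in[0,\widetilde\beta]$, so $q_i^*=0$ and the margin equals $1$.
\item On $\mathbb{G}_3^*$, $p_i^*\in(\widetilde\beta,\widehat\beta)$, so $q_i^*=p_i^*-6\nu C/5\in(0,2/3)$ and the margin lies in $(1/3,1)$.
\item On $\mathbb{G}_4^*$ at the endpoint $p_i^*=\widehat\beta$, the third branch of \eqref{eq:prox-case1} still applies. It gives $q_i^*=2/3$, so the margin is exactly $1/3$.
\item On $\mathbb{G}_4^*$ with $p_i^*\in(\widehat\beta,1)$, we have $q_i^*=(5p_i^*-18\nu C)/(5-18\nu C)$. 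This map is increasing because $5-18\nu C>0$. It sends $\widehat\beta$ to $2/3$ (a short computation: $5\cdot 2/3+6\nu C-18\nu C=\tfrac{2}{3}(5-18\nu C)$) and sends $1$ to $1$. Hence $q_i^*\in(2/3,1)$ and the margin lies in $(0,1/3)$.
\end{itemize}
Together, the last two items give the interval $(0,1/3]$ on $\mathbb{G}_4^*$.

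The only delicate step is the bookkeeping at the breakpoints $0,\widetilde\beta,\widehat\beta,1$. The theorem's sets are half-open in a different way from the branches of \eqref{eq:prox-case1}; for example, $\widehat\beta$ belongs to $\mathbb{G}_4^*$ but to the third prox branch. I will resolve each endpoint explicitly as above, relying on the continuity of the prox formula at $\widehat\beta$ and at $1$. Continuity at $1$ is also what guarantees $\psi_i^*=0$ there rather than an inconsistency.
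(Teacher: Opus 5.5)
Your proposal is correct and follows essentially the same route as the paper: read off $q_i^*$ from the closed-form proximal formula on each $\mathbb{G}_j^*$, recover $\psi_i^*$ from $\mathbf{p}^*=\mathbf{q}^*-\nu\boldsymbol{\psi}^*$ to obtain $\psi_i^*=0$ on $\mathbb{G}_1^*\cup\mathbb{G}_5^*$, and get the margins from feasibility, $q_i^*=1-y_i(\langle\mathbf{w}^*,\mathbf{x}_i\rangle+b^*)$. Your explicit treatment of the breakpoints $p_i^*=\widehat{\beta}$ and $p_i^*=1$ is slightly more careful than the paper's proof, which relies on the continuity of the proximal formula at those points without saying so.
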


\noindent\textbf{Proof.} The proof is presented in \ref{app:sv-case1}. 
\hfill$\square$

Hence, we can deduce that the $L_{\mathrm{ht}}$-SVM has sparsity since the 
training vectors with $y_i(\langle \mathbf{w}^*, \mathbf{x}_i \rangle + b^*) < 0$ 
are not support vectors and the $L_{\mathrm{ht}}$-SVM enjoys robustness to outliers 
since the training vectors with $y_i(\langle \mathbf{w}^*, \mathbf{x}_i \rangle + b^*) > 1$ 
are not support vectors.

\begin{theorem}[$L_{\mathrm{ht}}$ support vectors for $\nu C \in [5/18, 25/18)$]
\label{thm:sv-case2}
For $\nu C \in [5/18, 25/18)$, let $(\mathbf{w}^*; b^*; \mathbf{q}^*)$ with 
$\boldsymbol{\psi}^*$ be a P-stationary point of \eqref{eq:ht-svm-constrained}. 
Then, the $\mathbf{w}^*$ satisfies
\begin{equation}
\mathbf{w}^* = -\sum_{i \in \mathbb{I}^*} \psi_i^* y_i \mathbf{x}_i 
\text{ and } \psi_i^* = 0, i \in \overline{\mathbb{I}}^*,
\label{eq:sv-case2}
\end{equation}
\begin{align*}
&\mathbb{I}_1^* := \{i \in [m] : p_i^* < 0\}, \quad 
\mathbb{I}_2^* := \{i \in [m] : p_i^* \in [0, \widetilde{\beta}]\},\\
&\mathbb{I}_3^* := \{i \in [m] : p_i^* \in (\widetilde{\beta}, \widehat{\beta}) 
\text{ or } p_i^* = \widehat{\beta}, \psi_i^* \ne 0\},\\
&\mathbb{I}_4^* := \{i \in [m] : p_i^* > \widehat{\beta} 
\text{ or } p_i^* = \widehat{\beta}, \psi_i^* = 0\},
\end{align*}
where $\mathbf{p}^* := \mathbf{q}^* - \nu\boldsymbol{\psi}^*$, 
$\widetilde{\beta} := 6\nu C/5$, $\widehat{\beta} := 5/6 + 3\nu C/5$, 
$\mathbb{I}^* := \mathbb{I}_2^* \cup \mathbb{I}_3^*$ 
and $\overline{\mathbb{I}}^* := [m] \backslash \mathbb{I}^*$. The training 
vectors $\{\mathbf{x}_i \in \mathbb{R}^n : i \in \mathbb{I}^*\}$ are called the 
$L_{\mathrm{ht}}$ support vectors and meet
\begin{equation}
\begin{cases}
y_i(\langle \mathbf{w}^*, \mathbf{x}_i \rangle + b^*) = 1, & i \in \mathbb{I}_2^*,\\
y_i(\langle \mathbf{w}^*, \mathbf{x}_i \rangle + b^*) \in [1/6 + 3\nu C/5, 1), & i \in \mathbb{I}_3^*.
\end{cases}
\label{eq:sv-cond2}
\end{equation}
\end{theorem}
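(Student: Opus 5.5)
The plan is to read off everything from the last line of the P-stationarity system \eqref{eq:p-stationary}, namely $\mathbf{q}^*\in\mathrm{prox}_{\nu C L_{\mathrm{ht}}}(\mathbf{p}^*)$ with $\mathbf{p}^*=\mathbf{q}^*-\nu\boldsymbol{\psi}^*$. By Definition~\ref{def:prox-md} this holds componentwise, so for each $i$ we have $q_i^*\in\mathrm{prox}_{\nu C\ell_{\mathrm{ht}}}(p_i^*)$. Since $\nu C\in[5/18,25/18)$, Lemma~\ref{lem:prox-1d}(ii) applies. Two further identities are used throughout: $\psi_i^*=(q_i^*-p_i^*)/\nu$, and, by the third equation of \eqref{eq:p-stationary}, $y_i(\langle\mathbf{w}^*,\mathbf{x}_i\rangle+b^*)=1-q_i^*$. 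The representation of $\mathbf{w}^*$ will then come from the first equation $\mathbf{w}^*=-N^\top\boldsymbol{\psi}^*=-\sum_{i=1}^m\psi_i^*y_i\mathbf{x}_i$, once I show $\psi_i^*=0$ off $\mathbb{I}^*$.

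I would go through the index sets in turn, with $\widetilde\beta=6\nu C/5$ and $\widehat\beta=5/6+3\nu C/5$.

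\textbf{Case $\mathbb{I}_1^*$ ($p_i^*<0$).} Here $q_i^*=p_i^*$, hence $\psi_i^*=0$.

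\textbf{Case $\mathbb{I}_2^*$ ($p_i^*\in[0,\widetilde\beta]$).} Here $q_i^*=0$, so the margin equals $1$, as claimed in \eqref{eq:sv-cond2}. In this case $\psi_i^*=-p_i^*/\nu$, which may or may not vanish; that is harmless for the representation of $\mathbf{w}^*$.

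\textbf{Case $\mathbb{I}_3^*$, open part ($p_i^*\in(\widetilde\beta,\widehat\beta)$).} Here $q_i^*=p_i^*-6\nu C/5$, so $\psi_i^*=-6C/5\neq0$. Moreover $q_i^*\in(0,5/6-3\nu C/5)$, so the margin $1-q_i^*$ lies in $(1/6+3\nu C/5,\,1)$.

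\textbf{Case $\mathbb{I}_4^*$ ($p_i^*>\widehat\beta$).} Here $q_i^*=p_i^*$, so $\psi_i^*=0$.

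Combining the cases, $\psi_i^*=0$ on $\mathbb{I}_1^*\cup\mathbb{I}_4^*=\overline{\mathbb{I}}^*$, and the first equation of \eqref{eq:p-stationary} reduces to \eqref{eq:sv-case2}.

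The delicate point is the boundary $p_i^*=\widehat\beta$, and this is why the statement splits it by whether $\psi_i^*$ vanishes. Formula \eqref{eq:prox-case2} assigns $z-6\nu C/5$ there. However, $\widehat\beta$ is exactly the threshold at which the two candidate minimizers $s=z$ and $s=z-6\nu C/5$ give equal objective values in \eqref{eq:prox-def}. This is what the derivation in \ref{app:prox-proof} should show; it is also consistent with the membership sign $\ni$ in \eqref{eq:p-stationary}. I would therefore briefly recheck that the prox is set-valued at $z=\widehat\beta$.

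Granting this, at the boundary there are exactly two possibilities:
\begin{itemize}
\item[(a)] $q_i^*=p_i^*$. Then $\psi_i^*=0$ and $i\in\mathbb{I}_4^*$, consistent with the above.
\item[(b)] $q_i^*=\widehat\beta-6\nu C/5=5/6-3\nu C/5$. Then $\psi_i^*=-6C/5\neq0$, and the margin equals $1/6+3\nu C/5$. This is precisely why the interval in \eqref{eq:sv-cond2} for $\mathbb{I}_3^*$ is closed at its left endpoint.
\end{itemize}

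With these cases checked, \eqref{eq:sv-cond2} follows. I expect this boundary analysis, in particular the tie verification at $\widehat\beta$, to be the only step requiring care; everything else is direct substitution.
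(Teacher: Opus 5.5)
\textbf{Verdict: essentially the paper's approach, but with a gap at the endpoint $\nu C=5/18$, where the stated margin bound is actually false.}

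Your route is the one the paper intends. Its proof is omitted as ``similar to Theorem~\ref{thm:sv-case1}'': read $q_i^*$ off the prox on each $\mathbb{I}_j^*$, set $\psi_i^*=(q_i^*-p_i^*)/\nu$, and get margins from feasibility. Your handling of the tie at $p_i^*=\widehat\beta$ is more careful than the single-valued formula \eqref{eq:prox-case2}.

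The gap is the step ``at the boundary there are exactly two possibilities''. Set-valuedness of the prox does not give this, and it is false when $\nu C=5/18$. Write $\kappa=\nu C$. On $(2/3,1)$ the prox objective is $\varphi_3(s)=\kappa(-9s^2+18s-4)/5+(s-z)^2/2$, with $\varphi_3''=1-18\kappa/5$.

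For $\kappa\in(5/18,25/18)$, $\varphi_3$ is strictly concave. At $z=\widehat\beta$ the following hold:
\begin{itemize}
\item $\varphi(1)=\kappa+(1-\widehat\beta)^2/2>\kappa$.
\item $\varphi(2/3)>\varphi_2(5/6-3\kappa/5)=\kappa$, by strict convexity of $\varphi_2$, since $5/6-3\kappa/5<2/3$.
\item $\min_{s\le 0}\varphi=\widehat\beta^2/2=\kappa+(5/6-3\kappa/5)^2/2>\kappa$.
\end{itemize}
So the tie set is exactly $\{\widehat\beta,\,5/6-3\kappa/5\}$, and your argument goes through in this range.

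At $\kappa=5/18$, however, $\widehat\beta=1$ and $\varphi_3(s)=s(1-z)-2/9+z^2/2$. For $z=1$ this is constant ($=5/18$), so $\mathrm{prox}_{\nu C\ell_{\mathrm{ht}}}(1)=[2/3,1]$ is a whole interval. Any $q_i^*\in(2/3,1)$ with $p_i^*=1$ has $\psi_i^*=(q_i^*-1)/\nu\neq 0$, hence $i\in\mathbb{I}_3^*$. Yet its margin $1-q_i^*\in(0,1/3)$ violates the claimed range $[1/6+3\nu C/5,1)=[1/3,1)$.

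A concrete instance: $m=2$, $n=1$, $(x_1,y_1)=(1,1)$, $(x_2,y_2)=(-1,-1)$, $C=5/36$, $\nu=2$. Take $w^*=1/5$, $b^*=0$, $\mathbf{q}^*=(4/5,4/5)^\top$, $\boldsymbol{\psi}^*=(-1/10,-1/10)^\top$. This satisfies all four conditions of \eqref{eq:p-stationary}, with $p_i^*=1=\widehat\beta$, and both margins equal $1/5$.

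So this is a defect of the statement, not only of your proof. The margin claim \eqref{eq:sv-cond2} for $\mathbb{I}_3^*$ holds only for $\nu C\in(5/18,25/18)$. At $\nu C=5/18$ you can only conclude that margins on $\mathbb{I}_3^*$ lie in $(0,1)$. The representation \eqref{eq:sv-case2} survives in all cases: at the tie, $\psi_i^*=0$ on $\mathbb{I}_4^*$ by definition, and $q_i^*=p_i^*$ is forced when $p_i^*>\widehat\beta$.
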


The proof is similar to that of Theorem~\ref{thm:sv-case1} and thus is omitted.
We can deduce that the $L_{\mathrm{ht}}$-SVM has sparsity since the 
training vectors with $y_i(\langle \mathbf{w}^*, \mathbf{x}_i \rangle + b^*) < 1/6 + 3\nu C/5$ 
are not support vectors and the $L_{\mathrm{ht}}$-SVM enjoys robustness to outliers 
since the training vectors with $y_i(\langle \mathbf{w}^*, \mathbf{x}_i \rangle + b^*) > 1$ 
are not support vectors.

\begin{theorem}[$L_{\mathrm{ht}}$ support vectors for $\nu C \ge 25/18$]
\label{thm:sv-case3}
For $\nu C \ge 25/18$, suppose that $(\mathbf{w}^*; b^*; \mathbf{q}^*)$ with 
$\boldsymbol{\psi}^*$ is a P-stationary point of \eqref{eq:ht-svm-constrained}. 
Then, the $\mathbf{w}^*$ satisfies
\begin{equation}
\mathbf{w}^* = -\sum_{i \in \mathbb{K}^*} \psi_i^* y_i \mathbf{x}_i 
\text{ and } \psi_i^* = 0, i \in \overline{\mathbb{K}}^*,
\label{eq:sv-case3}
\end{equation}
\begin{align*}
&\mathbb{K}_1^* := \{i \in [m] : p_i^* < 0\}, \quad 
\mathbb{K}_2^* := \{i \in [m] : p_i^* \in [0, \sqrt{2\nu C})\},\\
&\mathbb{K}_3^* := \{i \in [m] : p_i^* = \sqrt{2\nu C}, \psi_i^* \ne 0\},\\
&\mathbb{K}_4^* := \{i \in [m] : p_i^* > \sqrt{2\nu C} 
\text{ or } p_i^* = \sqrt{2\nu C}, \psi_i^* = 0\},
\end{align*}
where $\mathbf{p}^* := \mathbf{q}^* - \nu\boldsymbol{\psi}^*$, 
$\mathbb{K}^* := \mathbb{K}_2^* \cup \mathbb{K}_3^*$ 
and $\overline{\mathbb{K}}^* := [m] \backslash \mathbb{K}^*$. The training 
vectors $\{\mathbf{x}_i \in \mathbb{R}^n : i \in \mathbb{K}^*\}$ are called the 
$L_{\mathrm{ht}}$ support vectors and meet
\begin{equation}
y_i(\langle \mathbf{w}^*, \mathbf{x}_i \rangle + b^*) = 1, \quad i \in \mathbb{K}^*.
\label{eq:sv-cond3}
\end{equation}
\end{theorem}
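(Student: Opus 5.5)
We work directly from the four conditions in Definition~\ref{def:p-stationary} together with the explicit formula \eqref{eq:prox-case3} of Lemma~\ref{lem:prox-1d}(iii), which applies because $\nu C \ge 25/18$. Set $\mathbf{p}^* := \mathbf{q}^* - \nu\boldsymbol{\psi}^*$. The fourth condition, read componentwise through Definition~\ref{def:prox-md}, says $q_i^* \in \mathrm{prox}_{\nu C\ell_{\mathrm{ht}}}(p_i^*)$ for every $i \in [m]$. The first condition gives $\mathbf{w}^* = -N^\top\boldsymbol{\psi}^* = -\sum_{i=1}^m \psi_i^* y_i \mathbf{x}_i$. So \eqref{eq:sv-case3} follows once we show $\psi_i^* = 0$ for $i \in \overline{\mathbb{K}}^* = \mathbb{K}_1^* \cup \mathbb{K}_4^*$.

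The core is a case analysis on $p_i^*$ over the partition $\mathbb{K}_1^*,\dots,\mathbb{K}_4^*$, which covers $\mathbb{R}$.
\begin{itemize}
\item[(a)] If $p_i^* < 0$, the formula gives $q_i^* = p_i^*$, so $\nu\psi_i^* = q_i^* - p_i^* = 0$.
\item[(b)] If $p_i^* > \sqrt{2\nu C}$, again $q_i^* = p_i^*$ and $\psi_i^* = 0$.
\item[(c)] If $p_i^* = \sqrt{2\nu C}$, the prox is the set $\{p_i^*, 0\}$. Either $q_i^* = p_i^*$, so $\psi_i^* = 0$ and $i \in \mathbb{K}_4^*$ by definition, or $q_i^* = 0 \ne p_i^*$, so $\psi_i^* \ne 0$ and $i \in \mathbb{K}_3^*$.
\item[(d)] If $p_i^* \in [0, \sqrt{2\nu C})$, then $q_i^* = 0$.
\end{itemize}
Hence $\psi_i^* = 0$ on $\mathbb{K}_1^* \cup \mathbb{K}_4^*$. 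This yields the sum representation over $\mathbb{K}^*$, possibly with some $\psi_i^* = 0$ on $\mathbb{K}_2^*$ (when $p_i^* = 0$), which does not affect the formula.

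For \eqref{eq:sv-cond3}, cases (c) with $\psi_i^*\neq0$ and (d) show that $q_i^* = 0$ for every $i \in \mathbb{K}^* = \mathbb{K}_2^* \cup \mathbb{K}_3^*$. The third condition of \eqref{eq:p-stationary} gives, componentwise, $q_i^* = 1 - y_i(\langle \mathbf{w}^*, \mathbf{x}_i\rangle + b^*)$, since the $i$-th row of $N$ is $y_i\mathbf{x}_i^\top$. Therefore $y_i(\langle \mathbf{w}^*, \mathbf{x}_i\rangle + b^*) = 1$ on $\mathbb{K}^*$.

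The only delicate point is the boundary value $p_i^* = \sqrt{2\nu C}$, where the proximal operator is set-valued. This is why the membership condition is the inclusion $\ni$ and why $\mathbb{K}_3^*$ and $\mathbb{K}_4^*$ are split by whether $\psi_i^*$ vanishes. We handle it by noting that $\psi_i^* \ne 0$ holds exactly when the selected prox value is $0$. Everything else is a direct substitution into \eqref{eq:prox-case3}, mirroring the (omitted) argument for Theorem~\ref{thm:sv-case1}.
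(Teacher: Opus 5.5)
Your proof is correct and follows the same route as the paper's argument for Theorem~\ref{thm:sv-case1} in Appendix~\ref{app:sv-case1}, which the paper says carries over to this case. You read the proximal inclusion componentwise through \eqref{eq:prox-case3}, substitute $p_i^* = q_i^* - \nu\psi_i^*$ to get $\psi_i^* = 0$ off $\mathbb{K}^*$ and $q_i^* = 0$ on $\mathbb{K}^*$, and then use feasibility to obtain the margin condition. Your handling of the set-valued point $p_i^* = \sqrt{2\nu C}$ (there $\psi_i^* \neq 0$ exactly when the selected prox value is $0$, because $p_i^* > 0$) is what the $\mathbb{K}_3^*/\mathbb{K}_4^*$ split requires, and you rightly avoid claiming $\psi_i^* \neq 0$ on all of $\mathbb{K}_2^*$.
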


The proof is similar to that of Theorem~\ref{thm:sv-case1} and thus is omitted. 
We can deduce that the $L_{\mathrm{ht}}$-SVM has sparsity since the 
training vectors with $y_i(\langle \mathbf{w}^*, \mathbf{x}_i \rangle + b^*) < 1$ 
are not support vectors and the $L_{\mathrm{ht}}$-SVM enjoys robustness to outliers 
since the training vectors with $y_i(\langle \mathbf{w}^*, \mathbf{x}_i \rangle + b^*) > 1$ 
are not support vectors.
Based on Theorems~\ref{thm:sv-case1}--\ref{thm:sv-case3}, we deduce that the 
$L_{\mathrm{ht}}$ support vectors are a small portion of the training set and 
the $L_{\mathrm{ht}}$-SVM possesses sparsity and robustness to outliers, which 
plays an important role in constructing a new working set in the next part. Motivated 
by this, we will establish an ADMM with
working set for dealing with $L_{\mathrm{ht}}$-SVM \eqref{eq:ht-svm-constrained}.

\subsection{\texorpdfstring{$L_{\mathrm{ht}}$}{Lht}-ADMM framework}
\label{subsec:admm}

In this part, we focus on designing an efficient ADMM algorithm to tackle the 
$L_{\mathrm{ht}}$-SVM problem \eqref{eq:ht-svm-constrained}. To handle the 
equality constraint in \eqref{eq:ht-svm-constrained}, we introduce the 
augmented Lagrangian function as
\begin{align}
\mathcal{L}_\xi(\mathbf{w}; b; \mathbf{q}; \boldsymbol{\psi}) 
&= \frac{1}{2}\|\mathbf{w}\|^2 + C L_{\mathrm{ht}}(\mathbf{q}) 
+ \langle \boldsymbol{\psi}, \mathbf{q} - \mathbf{1} + N\mathbf{w} + b\mathbf{y} \rangle \notag\\
&\; + \frac{\xi}{2}\|\mathbf{q} - \mathbf{1} + N\mathbf{w} + b\mathbf{y}\|^2,
\label{eq:aug-lagrangian}
\end{align}
where $\boldsymbol{\psi} \in \mathbb{R}^m$ represents the Lagrangian multiplier 
and $\xi > 0$ is the penalty parameter. Denote $\nu := 1/\xi$. Starting from 
an initial point $(\mathbf{w}^0; b^0; \mathbf{q}^0; \boldsymbol{\psi}^0)$, 
the ADMM iteration proceeds as
\begin{align}
\mathbf{q}^{k+1} &= \argmin_{\mathbf{q} \in \mathbb{R}^m} 
\mathcal{L}_\xi(\mathbf{w}^k, b^k, \mathbf{q}, \boldsymbol{\psi}^k), \notag\\
\mathbf{w}^{k+1} &= \argmin_{\mathbf{w} \in \mathbb{R}^n} 
\frac{\xi}{2}\|\mathbf{w} - \mathbf{w}^k\|_{Z_k}^2 
+ \mathcal{L}_\xi(\mathbf{w}, b^k, \mathbf{q}^{k+1}, \boldsymbol{\psi}^k), \notag\\
b^{k+1} &= \argmin_{b \in \mathbb{R}} 
\mathcal{L}_\xi(\mathbf{w}^{k+1}, b, \mathbf{q}^{k+1}, \boldsymbol{\psi}^k), \notag\\
\boldsymbol{\psi}^{k+1} &= \boldsymbol{\psi}^k 
+ \tau\xi(\mathbf{q}^{k+1} - \mathbf{1} + N\mathbf{w}^{k+1} + b^{k+1}\mathbf{y}).
\label{eq:admm-iteration}
\end{align}
where $Z_k \in \mathbb{R}^{n \times n}$ is a symmetric matrix and 
$\tau \in (0, (1+\sqrt{5})/2)$ denotes the dual step-size parameter. Here 
$\|\mathbf{w} - \mathbf{w}^k\|_{Z_k}^2 := \langle \mathbf{w} - \mathbf{w}^k, 
Z_k(\mathbf{w} - \mathbf{w}^k) \rangle$.

For the construction of $Z_k$, we first establish a working set $F_k$ based on the 
$L_{\mathrm{ht}}$ support vectors. Let 
$\mathbf{p}^k := \mathbf{1} - N\mathbf{w}^k - b^k\mathbf{y} - \nu\boldsymbol{\psi}^k$ and define
\begin{align*}
&T_k^1 := \{i \in [m] : p_i^k \in [0, \widetilde{\beta}]\}, \quad
T_k^2 := \{i \in [m] : p_i^k \in (\widetilde{\beta}, \widehat{\beta})\},\\
&J_k^1 := \{i \in [m] : p_i^k \in (\widetilde{\beta}, \widehat{\beta}) 
\text{ or } p_i^k = \widehat{\beta}, \psi_i^k \ne 0\},\\
&I_k := \{i \in [m] : p_i^k \in [0, \sqrt{2\nu C}) 
\text{ or } p_i^k = \sqrt{2\nu C}, \psi_i^k \ne 0\},
\end{align*}
where $\widetilde{\beta} := 6\nu C/5$ and 
\[
\widehat{\beta} := 
\begin{cases}
2/3 + 6\nu C/5, & \nu C \in (0, 5/18),\\
5/6 + 3\nu C/5, & \nu C \in [5/18, 25/18),\\
\sqrt{2\nu C}, & \nu C \ge 25/18.
\end{cases}
\]
Based on these index sets, 
the working set $F_k$ at iteration $k$ is constructed as
\begin{equation}
F_k := 
\begin{cases}
T_k^1 \cup T_k^2 \cup T_k^3, & \nu C \in (0, \frac{5}{18}),\\[3pt]
J_k^1 \cup J_k^2, & \nu C \in [\frac{5}{18}, \frac{25}{18}),\\[3pt]
I_k, & \nu C \ge \frac{25}{18},
\end{cases}
\label{eq:working-set}
\end{equation}
with $T_k^3 := \{i \in [m] : p_i^k \in [\widehat{\beta}, 1)\}$ and 
$J_k^2 := \{i \in [m] : p_i^k \in [0, \widetilde{\beta}]\}$.
Denote $\overline{F}_k := [m] \backslash F_k$ as the complement of $F_k$. 
The matrix $Z_k$ is then chosen as
\begin{equation}
Z_k = -N_{\overline{F}_k}^\top N_{\overline{F}_k},
\label{eq:Z-matrix}
\end{equation}
where $N_{\overline{F}_k} \in \mathbb{R}^{|\overline{F}_k| \times n}$ is the 
sub-matrix formed by rows of $N$ corresponding to indices in $\overline{F}_k$, 
and $|\overline{F}_k|$ represents its cardinality. This selection of $Z_k$ 
and $F_k$ significantly reduces the computational burden by excluding 
non-support-vector samples. We now derive the closed-form solution for each 
sub-problem in \eqref{eq:admm-iteration}.

\textbf{(i) Updating $\mathbf{q}^{k+1}$.} The $\mathbf{q}$-subproblem in 
\eqref{eq:admm-iteration} can be reformulated as
\begin{align}
\mathbf{q}^{k+1} &= \argmin_{\mathbf{q} \in \mathbb{R}^m} C L_{\mathrm{ht}}(\mathbf{q}) 
+ \langle \boldsymbol{\psi}^k, \mathbf{q} \rangle 
+ \frac{\xi}{2}\|\mathbf{q} - \mathbf{1} + N\mathbf{w}^k + b^k\mathbf{y}\|^2 \notag\\
&= \argmin_{\mathbf{q} \in \mathbb{R}^m} C L_{\mathrm{ht}}(\mathbf{q}) 
+ \frac{\xi}{2}\|\mathbf{q} - \mathbf{p}^k\|^2 = \mathrm{prox}_{\nu C L_{\mathrm{ht}}}(\mathbf{p}^k),
\label{eq:q-update}
\end{align}
where $\mathbf{p}^k = \mathbf{1} - N\mathbf{w}^k - b^k\mathbf{y} - \nu\boldsymbol{\psi}^k$ 
as defined above. Combining this with \eqref{eq:prox-case1}--\eqref{eq:prox-case3} 
yields the closed-form solution via Lemma~\ref{lem:prox-1d}.

\textbf{(ii) Updating $\mathbf{w}^{k+1}$.} The $\mathbf{w}$-subproblem takes the form
\begin{align}
\mathbf{w}^{k+1} &= \argmin_{\mathbf{w} \in \mathbb{R}^n} \frac{1}{2}\|\mathbf{w}\|^2 
+ \frac{\xi}{2}\|\mathbf{w} - \mathbf{w}^k\|_{Z_k}^2 \notag\\
&\; + \langle \boldsymbol{\psi}^k, N\mathbf{w} \rangle 
+ \frac{\xi}{2}\|\mathbf{q}^{k+1} - \mathbf{1} + N\mathbf{w} + b^k\mathbf{y}\|^2.
\label{eq:w-subproblem}
\end{align}
Since this is a convex quadratic problem with differentiable objective, the 
optimality condition leads to the equation
\begin{equation}
\mathbf{0} = \mathbf{w} - \xi N_{\overline{F}_k}^\top N_{\overline{F}_k}(\mathbf{w} - \mathbf{w}^k) 
+ N^\top \boldsymbol{\psi}^k + \xi N^\top(\mathbf{q}^{k+1} - \mathbf{1} + N\mathbf{w} + b^k\mathbf{y}).
\label{eq:w-equation}
\end{equation}
This can be equivalently rewritten as the linear system
\begin{equation}
(I + \xi N_{F_k}^\top N_{F_k})\mathbf{w} = \xi N_{F_k}^\top \boldsymbol{\chi}_{F_k}^k,
\label{eq:w-linear-system}
\end{equation}
where $\boldsymbol{\chi}^k := -(\mathbf{q}^{k+1} + b^k\mathbf{y} - \mathbf{1} 
+ \boldsymbol{\psi}^k/\xi)$. The derivation of \eqref{eq:w-linear-system} from 
\eqref{eq:w-equation} relies on the identity
\[
N_{F_k}^\top N_{F_k} = N^\top N - N_{\overline{F}_k}^\top N_{\overline{F}_k}.
\]
Notably, samples indexed by $\overline{F}_k$ do not appear in 
\eqref{eq:w-linear-system}, meaning that the working set strategy allows us 
to skip computations involving $\{\mathbf{x}_j, j \in \overline{F}_k\}$. 
This leads to substantial speedup when $|F_k|$ is small relative to $m$. 
The linear system \eqref{eq:w-linear-system} is solved differently depending 
on the relationship between $n$ and $|F_k|$. When $n \le |F_k|$, we directly 
solve \eqref{eq:w-linear-system} via
\begin{equation}
\mathbf{w}^{k+1} = (I + \xi N_{F_k}^\top N_{F_k})^{-1} \xi N_{F_k}^\top \boldsymbol{\chi}_{F_k}^k.
\label{eq:w-case1}
\end{equation}
When $n > |F_k|$, we apply the Sherman--Morrison--Woodbury formula to obtain
\[
(I + \xi N_{F_k}^\top N_{F_k})^{-1} = I - \xi N_{F_k}^\top (I + \xi N_{F_k} N_{F_k}^\top)^{-1} N_{F_k},
\]
which gives the update
\begin{equation}
\mathbf{w}^{k+1} = \xi N_{F_k}^\top (I + \xi N_{F_k} N_{F_k}^\top)^{-1} \boldsymbol{\chi}_{F_k}^k.
\label{eq:w-case2}
\end{equation}

\textbf{(iii) Updating $b^{k+1}$.} For the $b$-subproblem, we minimize
\[
b^{k+1} = \argmin_{b \in \mathbb{R}} \langle \boldsymbol{\psi}^k, b\mathbf{y} \rangle 
+ \frac{\xi}{2}\|\mathbf{q}^{k+1} - \mathbf{1} + N\mathbf{w}^{k+1} + b\mathbf{y}\|^2.
\]
This univariate quadratic minimization admits the closed-form solution
\begin{equation}
b^{k+1} = \langle \mathbf{y}, \mathbf{r}^k \rangle / \|\mathbf{y}\|^2 
= \langle \mathbf{y}, \mathbf{r}^k \rangle / m,
\label{eq:b-update}
\end{equation}
where $\mathbf{r}^k := -N\mathbf{w}^{k+1} + \mathbf{1} - \mathbf{q}^{k+1} 
- \boldsymbol{\psi}^k/\xi$.

\textbf{(iv) Updating $\boldsymbol{\psi}^{k+1}$.} The multiplier update from 
\eqref{eq:admm-iteration} becomes
\begin{equation}
\boldsymbol{\psi}_{F_k}^{k+1} = \boldsymbol{\psi}_{F_k}^k 
+ \tau\xi \boldsymbol{\Lambda}_{F_k}^{k+1}, \quad 
\boldsymbol{\psi}_{\overline{F}_k}^{k+1} = \mathbf{0},
\label{eq:psi-update}
\end{equation}
where $\boldsymbol{\Lambda}^{k+1} := \mathbf{q}^{k+1} - \mathbf{1} + N\mathbf{w}^{k+1} 
+ b^{k+1}\mathbf{y}$. The assignment $\boldsymbol{\psi}_{\overline{F}_k}^{k+1} = \mathbf{0}$ 
follows from Theorems~\ref{thm:sv-case1}--\ref{thm:sv-case3}.

The complete procedure is summarized in Algorithm~\ref{alg:admm}.

\begin{algorithm}[t]
\caption{$L_{\mathrm{ht}}$-ADMM for handling problem \eqref{eq:ht-svm-constrained}}
\label{alg:admm}
\begin{algorithmic}[1]
\STATE \textbf{Input:} Data $\{(\mathbf{x}_i, y_i)\}_{i=1}^m$, parameters $C, \xi > 0$, maximum iterations $K$.
\STATE Initialize $(\mathbf{w}^0; b^0; \mathbf{q}^0; \boldsymbol{\psi}^0)$. Set $\nu := 1/\xi$, $k := 0$.
\WHILE{$k \le K$ and the stop condition does not hold}
\STATE Compute $F_k$ as in \eqref{eq:working-set}.
\STATE Compute $\mathbf{q}^{k+1}$ by \eqref{eq:q-update}.
\STATE Compute $\mathbf{w}^{k+1}$ by \eqref{eq:w-case1} if $n \le |F_k|$ and 
\eqref{eq:w-case2} otherwise.
\STATE Compute $b^{k+1}$ by \eqref{eq:b-update}.
\STATE Calculate $\boldsymbol{\psi}^{k+1}$ by \eqref{eq:psi-update}.
\STATE Set $k = k + 1$.
\ENDWHILE
\STATE \textbf{return} the solution $(\mathbf{w}^k, b^k)$ to 
\eqref{eq:ht-svm-constrained}.
\end{algorithmic}
\end{algorithm}

Following Theorem~\ref{thm:sufficient}, the P-stationary point serves as the 
termination criterion for $L_{\mathrm{ht}}$-ADMM. Specifically, the algorithm 
terminates when the current iterate $(\mathbf{w}^k; b^k; \mathbf{q}^k; \boldsymbol{\psi}^k)$ 
approximately satisfies \eqref{eq:p-stationary}, i.e.,
\[
\max\{\epsilon_1^k, \epsilon_2^k, \epsilon_3^k, \epsilon_4^k\} < \epsilon,
\]
where $\epsilon > 0$ is the prescribed tolerance, and
\begin{align*}
&\epsilon_1^k := \frac{\|\mathbf{w}^k + N_{F_k}^\top \boldsymbol{\psi}_{F_k}^k\|}
{1 + \|\mathbf{w}^k\|}, \quad 
\epsilon_2^k := \frac{|\langle \mathbf{y}_{F_k}, \boldsymbol{\psi}_{F_k}^k \rangle|}
{1 + |F_k|},\\
&\epsilon_3^k := \frac{\|\mathbf{q}^k - \mathbf{1} + N\mathbf{w}^k + b^k\mathbf{y}\|}
{\sqrt{m}},\\
&\epsilon_4^k := \frac{\|\mathbf{q}^k - \mathrm{prox}_{\nu C L_{\mathrm{ht}}}
(\mathbf{p}^k)\|}{1 + \|\mathbf{q}^k\|}.
\end{align*}

\subsection{Global convergence}
\label{subsec:convergence}

We now establish the convergence guarantee for $L_{\mathrm{ht}}$-ADMM.

\begin{theorem}
\label{thm:convergence}
If the sequence $\{(\mathbf{w}^k; b^k; \mathbf{q}^k; \boldsymbol{\psi}^k)\}$ 
generated by $L_{\mathrm{ht}}$-ADMM converges to 
$(\mathbf{w}^*; b^*; \mathbf{q}^*; \boldsymbol{\psi}^*)$, then 
$(\mathbf{w}^*; b^*; \mathbf{q}^*)$ is a P-stationary point of 
\eqref{eq:ht-svm-constrained} and consequently a local minimizer of 
\eqref{eq:ht-svm-constrained}.
\end{theorem}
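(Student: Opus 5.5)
The plan is to pass to the limit in each of the four update rules of $L_{\mathrm{ht}}$-ADMM and read off the four conditions in \eqref{eq:p-stationary}. Then Theorem~\ref{thm:sufficient} gives local minimality. Throughout, $\nu = 1/\xi$ is fixed.

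\textbf{Step 1: feasibility.} Let $\boldsymbol{\Lambda}^{k+1} = \mathbf{q}^{k+1} - \mathbf{1} + N\mathbf{w}^{k+1} + b^{k+1}\mathbf{y}$. Since $\boldsymbol{\psi}^k \to \boldsymbol{\psi}^*$, the multiplier update \eqref{eq:psi-update} on $F_k$ gives $\tau\xi\boldsymbol{\Lambda}^{k+1}_{F_k} = \boldsymbol{\psi}^{k+1}_{F_k} - \boldsymbol{\psi}^k_{F_k} \to \mathbf{0}$. On $\overline{F}_k$ the multiplier is reset to zero, so I first handle the index sets.

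\textbf{Step 2: stabilization of the working set.} Since $\mathbf{p}^k \to \mathbf{p}^* := \mathbf{1} - N\mathbf{w}^* - b^*\mathbf{y} - \nu\boldsymbol{\psi}^*$, I argue that every index $i$ with $\psi_i^* \neq 0$ must lie in $F_k$ for all large $k$, since otherwise $\psi_i^{k+1} = 0$ infinitely often, which contradicts convergence. For indices with $\psi_i^* = 0$, the update either keeps $\psi_i^{k+1} = 0$ or lies in $F_k$. Passing to a subsequence on which $F_k \equiv F$ is constant (finitely many subsets), I get $\boldsymbol{\Lambda}^*_F = \mathbf{0}$ and $\boldsymbol{\psi}^*_{\overline F} = \mathbf{0}$.

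For $i \in \overline F$, I need $q_i^* - 1 + (N\mathbf{w}^*)_i + b^* y_i = 0$. Here I use the $\mathbf{q}$-update: $\mathbf{q}^{k+1} = \mathrm{prox}_{\nu C L_{\mathrm{ht}}}(\mathbf{p}^k)$. Indices outside the working set have $p_i^k$ in a region where the prox in Lemma~\ref{lem:prox-1d} is the identity ($p_i<0$ or $p_i$ above the upper threshold). There $q_i^{k+1} = p_i^k = 1 - (N\mathbf{w}^k)_i - b^k y_i - \nu\psi_i^k$ with $\psi_i^k=0$ (for large $k$ on the subsequence), which gives $\Lambda_i^* = 0$ in the limit. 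This yields the third condition. The boundary cases (e.g.\ $p_i^k = \widehat\beta$ with $\psi_i^k=0$, where the prox is still $z$ or the relevant branch) are checked via the case split of Lemma~\ref{lem:prox-1d}. Note that the middle-region definitions of $F_k$ are exactly where the prox is not the identity.

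\textbf{Step 3: stationarity in $\mathbf{w}$ and $b$.} Taking limits in \eqref{eq:w-equation} and using $\mathbf{w}^{k+1}-\mathbf{w}^k \to \mathbf{0}$ and $\boldsymbol{\Lambda}\to\mathbf{0}$ gives $\mathbf{w}^* + N^\top\boldsymbol{\psi}^* = \mathbf{0}$. The optimality condition of the $b$-subproblem is $\langle \mathbf{y}, \boldsymbol{\psi}^k\rangle + \xi\langle \mathbf{y}, \mathbf{q}^{k+1} - \mathbf{1} + N\mathbf{w}^{k+1} + b^{k+1}\mathbf{y}\rangle = 0$; in the limit this gives $\langle \mathbf{y}, \boldsymbol{\psi}^*\rangle = 0$.

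\textbf{Step 4: the prox inclusion.} From $\mathbf{q}^{k+1} \in \mathrm{prox}_{\nu C L_{\mathrm{ht}}}(\mathbf{p}^k)$ with $\mathbf{p}^k \to \mathbf{p}^*$ and $\mathbf{q}^{k+1}\to\mathbf{q}^*$, I use closedness of the graph of the (set-valued) proximal mapping of a proper lsc function bounded below. Concretely, $\nu C L_{\mathrm{ht}}(\mathbf{q}^{k+1}) + \frac12\|\mathbf{q}^{k+1}-\mathbf{p}^k\|^2 \le \nu C L_{\mathrm{ht}}(\mathbf{s}) + \frac12\|\mathbf{s}-\mathbf{p}^k\|^2$ for all $\mathbf{s}$, and continuity of $\ell_{\mathrm{ht}}$ lets me pass to the limit. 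This yields $\mathbf{q}^* \in \mathrm{prox}_{\nu C L_{\mathrm{ht}}}(\mathbf{p}^*)$. Since feasibility gives $\mathbf{p}^* = \mathbf{q}^* - \nu\boldsymbol{\psi}^*$, this is the fourth condition.

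The main obstacle is Step 2: the reset $\boldsymbol{\psi}_{\overline{F}_k}=\mathbf{0}$ and the fact that \eqref{eq:w-equation} uses $Z_k$ mean feasibility on $\overline F_k$ is not directly given by the dual update. It must be extracted from the prox being the identity off the working set, with care at the threshold boundaries.
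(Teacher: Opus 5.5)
Your proposal is correct and follows essentially the same route as the paper's proof. You take a subsequence with constant working set $F$, use the multiplier update to get $\boldsymbol{\Lambda}^*_F=\mathbf{0}$ and $\boldsymbol{\psi}^*_{\overline{F}}=\mathbf{0}$, use the proximal map acting as the identity off $F$ to get feasibility on $\overline{F}$, pass to the limit in the $\mathbf{q}$-, $\mathbf{w}$- and $b$-updates, and finish with Theorem~\ref{thm:sufficient}.

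Your individual limit arguments are somewhat more careful than the paper's: you pass the exact identity $q_i^{k+1}=p_i^k$ to the limit, you use the subproblem optimality conditions, and you give a closed-graph argument for the set-valued prox. The one loose point, which the paper shares, is the case $\nu C\in[5/18,25/18)$ with $p_i^k=\widehat{\beta}$ and $\psi_i^k=0$. There Lemma~\ref{lem:prox-1d}(ii) as written returns $z-\widetilde{\beta}$, not $z$. So you need to note that $z$ is also a minimizer at that tie point, and that the prox selection for indices outside the working set returns $z$.
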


\noindent\textbf{Proof.} The proof is presented in \ref{app:convergence}. 
\hfill$\square$

\subsection{Computational complexity analysis}
\label{subsec:complexity}

We now examine the per-iteration cost of $L_{\mathrm{ht}}$-ADMM. 
Constructing the working set $F_k$ requires $\mathcal{O}(m)$ operations. 
For the $\mathbf{q}$-update, the dominant cost is the matrix-vector product 
$N\mathbf{w}^k$, requiring $\mathcal{O}(mn)$ operations. 
The $\mathbf{w}$-update is dominated by computing $N\mathbf{w}^{k+1}$, 
costing $\mathcal{O}(mn)$. 
The $b$-update reuses the product $N\mathbf{w}^{k+1}$, adding $\mathcal{O}(m)$ cost. 
The linear system solve for $\mathbf{w}^{k+1}$ (matrix inversion) costs 
$\mathcal{O}(\min\{n^2, |F_k|^2\} \max\{n, |F_k|\})$.
The Gram matrix $N_{F_k} N_{F_k}^\top$ requires $\mathcal{O}(n|F_k|^2)$ operations, 
and its inverse $(I + \xi N_{F_k} N_{F_k}^\top)^{-1}$ costs $\mathcal{O}(|F_k|^\omega)$ 
where $\omega \in (2, 3)$ denotes the matrix multiplication exponent. Summing up, the per-iteration complexity of $\mathbf{w}^{k+1}$ is
\[
\mathcal{O}(\min\{n^2, |F_k|^2\} \max\{n, |F_k|\}).
\]
The overall per-iteration complexity is therefore
\[
\mathcal{O}\big(mn + \min\{n^2, |F_k|^2\} \max\{n, |F_k|\}\big).
\]
When the working set is small, i.e., $\max\{|F_k|, n\} \ll m$, the algorithm achieves 
significant computational savings compared to methods that process all samples.

\section{Application in multi-view learning}
\label{sec:multiview}

This section generalises $L_{\mathrm{ht}}$-SVM to the multi-view 
setting. We first describe the multi-view task formulation, then 
construct the structural information matrices, and finally detail 
the optimisation procedure.

\subsection{Multi-view learning task}
\label{subsec:mv-task}

Consider a multi-view training set 
$\mathcal{D} = \bigl\{(\mathbf{x}_i^{(1)},\ldots,\mathbf{x}_i^{(V)},y_i)
\bigr\}_{i=1}^{m}$, where 
$\mathbf{x}_i^{(v)} \in \mathbb{R}^{n_v}$ is the feature 
representation of sample $i$ in view $v$ 
($v \in [V] := \{1,\ldots,V\}$) and 
$y_i \in \{-1,+1\}$ is the label. Each view may encode distinct 
yet complementary discriminative cues.

For each view $v$, we define an affine classifier
\[
f^{(v)}(\mathbf{x}^{(v)}) = \bigl(\mathbf{w}^{(v)}\bigr)^{\!\top}
\mathbf{x}^{(v)} + b^{(v)},
\quad \mathbf{w}^{(v)} \in \mathbb{R}^{n_v},\ b^{(v)} \in \mathbb{R}.
\]
A weight coefficient $\theta^{(v)} \ge 0$ is assigned to each view 
to reflect its contribution to the final decision. The view weights 
satisfy the simplex constraint
\begin{equation}
\theta^{(v)} \ge 0\ (v \in [V]),
\quad
\sum_{v=1}^{V} \theta^{(v)} = 1,
\label{eq:weight-constraint}
\end{equation}
which ensures a convex combination across views. The prediction rule is
\begin{equation}
f(\mathbf{x}) = \mathrm{sign}\!\left(
\sum_{v=1}^{V} \theta^{(v)}
\Bigl[\bigl(\mathbf{w}^{(v)}\bigr)^{\!\top} \mathbf{x}^{(v)} + b^{(v)}\Bigr]
\right).
\label{eq:mv-decision}
\end{equation}
The weight $\theta^{(v)}$ adaptively adjusts the importance of view $v$ 
during training.

\subsection{\texorpdfstring{Mv$L_{\mathrm{ht}}$}{MvLht}-SVM model formulation}
\label{subsec:mv-model}

Extending the single-view $L_{\mathrm{ht}}$-SVM of 
Section~\ref{sec:optimality}, we formulate the multi-view 
$L_{\mathrm{ht}}$-SVM (Mv$L_{\mathrm{ht}}$-SVM). 
Fig.~\ref{fig:mv-flowchart} depicts the overall framework. 
The resulting optimisation problem is
\begin{align}
\min_{\substack{\mathbf{w}^{(v)} \in \mathbb{R}^{n_v}, b^{(v)} \in \mathbb{R},\\
\theta^{(v)} \ge 0,\ v \in [V]}}
&\; \frac{1}{2}\sum_{v=1}^{V} \theta^{(v)} \bigl\|\mathbf{w}^{(v)}\bigr\|^2
+ \frac{\alpha}{2}\|\boldsymbol{\theta}\|^2
+ \frac{\eta}{2}\sum_{v=1}^{V} 
\bigl(\mathbf{w}^{(v)}\bigr)^{\!\top} \Sigma^{(v)} \mathbf{w}^{(v)} \notag\\
&\; + C\sum_{v=1}^{V}\sum_{i=1}^{m} 
\ell_{\mathrm{ht}}\Bigl(1 - y_i f^{(v)}\bigl(\mathbf{x}_i^{(v)}\bigr)\Bigr) \notag\\
\text{s.t.} \quad
&\; \sum_{v=1}^{V} \theta^{(v)} = 1,
\label{eq:mvht-svm}
\end{align}
where $C > 0$ is the penalty parameter for the $L_{\mathrm{ht}}$-loss, $\alpha > 0$ 
regularizes the view weights, $\eta > 0$ controls the structural 
regularization strength, and $\Sigma^{(v)} \in \mathbb{R}^{n_v \times n_v}$ 
is a positive semi-definite covariance matrix capturing local structure 
in view $v$ (see Section~\ref{subsec:structural-info}).

The objective in \eqref{eq:mvht-svm} consists of four terms: 
(i) $\frac{1}{2}\sum_v \theta^{(v)}\|\mathbf{w}^{(v)}\|^2$ for weighted 
margin regularization across views; 
(ii) $\frac{\alpha}{2}\|\boldsymbol{\theta}\|^2$ as the weight complexity 
penalty to avoid degenerate solutions; 
(iii) $C\sum_v\sum_i \ell_{\mathrm{ht}}(\cdot)$ for the $L_{\mathrm{ht}}$-loss over all views, 
retaining sparsity and robustness; 
(iv) $\frac{\eta}{2}\sum_v (\mathbf{w}^{(v)})^\top \Sigma^{(v)} \mathbf{w}^{(v)}$ 
for structural information regularization.

Introducing auxiliary variables $\mathbf{t}^{(v)} \in \mathbb{R}^m$ 
for each view, problem \eqref{eq:mvht-svm} can be rewritten as
\begin{align}
\min_{\substack{\mathbf{w}^{(v)}, b^{(v)}, \mathbf{t}^{(v)},\\
\theta^{(v)} \ge 0,\ v \in [V]}}
&\; \frac{1}{2}\sum_{v=1}^{V} \theta^{(v)} \bigl\|\mathbf{w}^{(v)}\bigr\|^2
+ \frac{\alpha}{2}\|\boldsymbol{\theta}\|^2
+ \frac{\eta}{2}\sum_{v=1}^{V} 
\bigl(\mathbf{w}^{(v)}\bigr)^{\!\top} \Sigma^{(v)} \mathbf{w}^{(v)} \notag\\
&\; + C\sum_{v=1}^{V} L_{\mathrm{ht}}\bigl(\mathbf{t}^{(v)}\bigr) \notag\\
\text{s.t.} \quad
&\; \mathbf{t}^{(v)} + M^{(v)}\mathbf{w}^{(v)} + b^{(v)}\mathbf{y} = \mathbf{1},\quad
\sum_{v=1}^{V} \theta^{(v)} = 1, \quad v \in [V],
\label{eq:mvht-svm-constrained}
\end{align}
where $L_{\mathrm{ht}}(\mathbf{t}^{(v)}) := \sum_{i=1}^m 
\ell_{\mathrm{ht}}(t_i^{(v)})$ and 
$M^{(v)} := [y_1\mathbf{x}_1^{(v)} \cdots y_m\mathbf{x}_m^{(v)}]^\top 
\in \mathbb{R}^{m \times n_v}$. The linear constraints separate the 
non-smooth loss from the regularization terms, which facilitates the 
ADMM-based solver in Section~\ref{subsec:mv-algorithm}.

\begin{figure*}[!t]
\centering
\includegraphics[width=0.95\textwidth]{"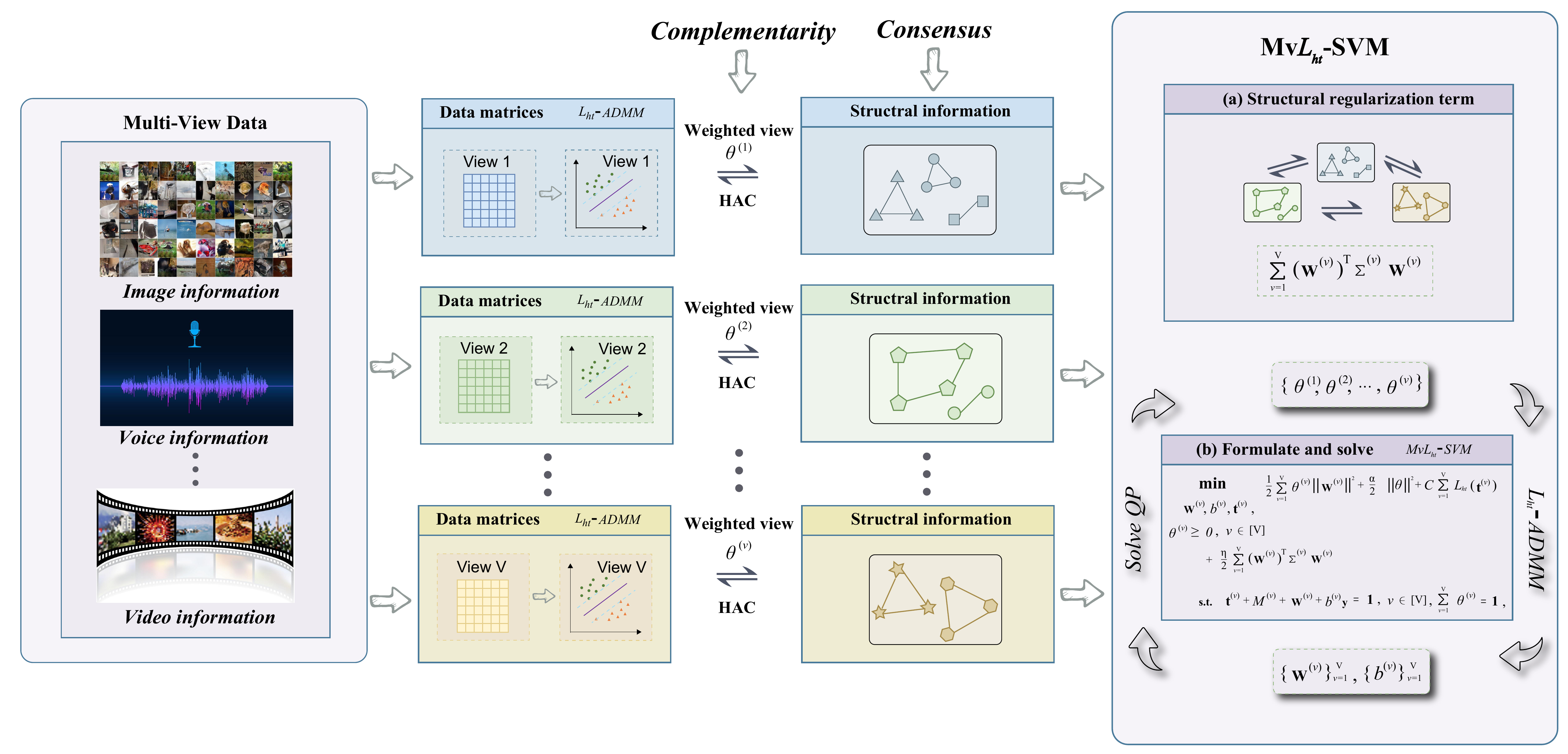"}
\caption{Framework of the proposed Mv$L_{\mathrm{ht}}$-SVM. Multi-view data (e.g., 
image, voice, video) are processed through view-specific $L_{\mathrm{ht}}$-ADMM 
solvers to obtain classifiers $(\mathbf{w}^{(v)}, b^{(v)})$. Structural 
information is constructed via HAC and exchanged across views to capture 
both complementarity and consensus principles. The optimization alternates 
between solving QP for view weights $\boldsymbol{\theta}$ and running 
$L_{\mathrm{ht}}$-ADMM for classifier parameters until convergence.}
\label{fig:mv-flowchart}
\end{figure*}

\subsection{Structural information and construction of \texorpdfstring{$\Sigma^{(v)}$}{Sigma(v)}}
\label{subsec:structural-info}

The quadratic term 
$(\mathbf{w}^{(v)})^\top \Sigma^{(v)} \mathbf{w}^{(v)}$ in 
\eqref{eq:mvht-svm} incorporates local geometric structure, 
drawing on the idea of manifold regularisation 
\citep{belkin2006manifold} and its multi-view extensions 
\citep{liu2025multiview}.

We construct $\Sigma^{(v)}$ with hierarchical agglomerative 
clustering (HAC) \citep{ward1963hierarchical}. Within each 
view $u$, positive and negative samples are clustered 
separately; the number of clusters $K_u$ is determined by 
the L-method. Denoting the index set of cluster $j$ in 
view $u$ by $\mathcal{C}_j^{(u)}$, we form the within-cluster 
covariance of view $v$'s features as
\begin{equation}
\Sigma_j^{(v,u)} = \frac{1}{|\mathcal{C}_j^{(u)}|}
\sum_{i \in \mathcal{C}_j^{(u)}} 
\bigl(\mathbf{x}_i^{(v)} - \boldsymbol{\mu}_j^{(v,u)}\bigr)
\bigl(\mathbf{x}_i^{(v)} - \boldsymbol{\mu}_j^{(v,u)}\bigr)^{\!\top},
\label{eq:local-cov}
\end{equation}
where $\boldsymbol{\mu}_j^{(v,u)} = |\mathcal{C}_j^{(u)}|^{-1} 
\sum_{i \in \mathcal{C}_j^{(u)}} \mathbf{x}_i^{(v)}$ is the centroid.

The aggregate structural matrix for view $v$ is obtained by 
summing over all views and clusters, weighted by the view 
coefficients:
\begin{equation}
\Sigma^{(v)} = \sum_{u=1}^{V} \theta^{(u)} \sum_{j=1}^{K_u} \Sigma_j^{(v,u)}.
\label{eq:global-cov}
\end{equation}
Because every $\Sigma_j^{(v,u)}$ is positive semi-definite, so 
is $\Sigma^{(v)}$. The resulting regulariser discourages large 
projections along high-variance directions within clusters, 
steering the classifier towards locally smooth decision 
boundaries.

\subsection{Optimization algorithm}
\label{subsec:mv-algorithm}

We solve \eqref{eq:mvht-svm-constrained} via a two-level 
alternating scheme. In the \emph{inner loop}, the view weights 
$\boldsymbol{\theta}$ are held fixed and every view-specific 
classifier $(\mathbf{w}^{(v)}, b^{(v)})$ is updated independently. 
In the \emph{outer loop}, the classifiers are frozen and 
$\boldsymbol{\theta}$ is re-estimated. The two loops alternate 
until convergence.

After fixing $\boldsymbol{\theta}$, 
the sub-problem for each view becomes independent and shares the same 
structure as the single-view $L_{\mathrm{ht}}$-SVM. For view $v$, let 
$(\mathbf{w}^{(v)*}, b^{(v)*}, \mathbf{t}^{(v)*})$ with 
$\boldsymbol{\lambda}^{(v)*}$ be a P-stationary point. Define 
$\mathbf{p}^{(v)*} := \mathbf{t}^{(v)*} - \nu\boldsymbol{\lambda}^{(v)*}$ 
with $\nu := 1/\xi$. According to Theorems~\ref{thm:sv-case1}--\ref{thm:sv-case3}, 
the support vector index set for view $v$ is
\begin{equation}
\mathcal{I}^{(v)*} := 
\begin{cases}
\mathcal{G}^{(v)*}_2 \cup \mathcal{G}^{(v)*}_3 \cup \mathcal{G}^{(v)*}_4, 
& \nu C \in (0, 5/18),\\[3pt]
\mathcal{I}^{(v)*}_2 \cup \mathcal{I}^{(v)*}_3, 
& \nu C \in [5/18, 25/18),\\[3pt]
\mathcal{K}^{(v)*}_2 \cup \mathcal{K}^{(v)*}_3, & \nu C \ge 25/18,
\end{cases}
\label{eq:mv-sv-index}
\end{equation}
where the index sets $\mathcal{G}^{(v)*}_j$, $\mathcal{I}^{(v)*}_j$, 
$\mathcal{K}^{(v)*}_j$ are defined analogously to the single-view case 
(see Theorems~\ref{thm:sv-case1}--\ref{thm:sv-case3}) with view-specific variables. The classifier satisfies
\begin{equation}
\mathbf{w}^{(v)*} = -\bigl(\theta^{(v)}I + \eta\Sigma^{(v)}\bigr)^{-1} 
\sum_{i \in \mathcal{I}^{(v)*}} \lambda_i^{(v)*} y_i \mathbf{x}_i^{(v)}.
\label{eq:mv-w-sv}
\end{equation}
Compared with the single-view case \eqref{eq:sv-case1}, the regularization 
matrix changes from $I$ to $\theta^{(v)}I + \eta\Sigma^{(v)}$, while the 
sparsity property is preserved.

\subsubsection{Update classifiers via \texorpdfstring{$L_{\mathrm{ht}}$}{Lht}-ADMM-WS}

For view $v$, the augmented Lagrangian function is defined as
\begin{align}
&\mathcal{L}_\xi^{(v)}(\mathbf{w}^{(v)}, b^{(v)}, \mathbf{t}^{(v)}, 
\boldsymbol{\lambda}^{(v)}) \notag\\
={}&\frac{\theta^{(v)}}{2}\|\mathbf{w}^{(v)}\|^2 
+ C L_{\mathrm{ht}}(\mathbf{t}^{(v)})
+ \langle\boldsymbol{\lambda}^{(v)}, \mathbf{t}^{(v)} - \mathbf{1} 
+ M^{(v)}\mathbf{w}^{(v)} + b^{(v)}\mathbf{y}\rangle \notag\\
&+ \frac{\xi}{2}\|\mathbf{t}^{(v)} - \mathbf{1} 
+ M^{(v)}\mathbf{w}^{(v)} + b^{(v)}\mathbf{y}\|^2
+ \frac{\eta}{2}(\mathbf{w}^{(v)})^\top\Sigma^{(v)}\mathbf{w}^{(v)},
\label{eq:mv-aug-lagrangian}
\end{align}
where $\boldsymbol{\lambda}^{(v)} \in \mathbb{R}^m$ is the Lagrangian 
multiplier and $\xi > 0$ is the penalty parameter. Starting from 
an initial point, the ADMM iteration proceeds as
\begin{align}
\mathbf{t}^{(v)^{k+1}} &= \argmin_{\mathbf{t}^{(v)}} 
\mathcal{L}_\xi^{(v)}(\mathbf{w}^{(v)^k}, b^{(v)^k}, \mathbf{t}^{(v)}, 
\boldsymbol{\lambda}^{(v)^k}), \notag\\
\mathbf{w}^{(v)^{k+1}} &= \argmin_{\mathbf{w}^{(v)} \in \mathbb{R}^{n_v}} 
\mathcal{L}_\xi^{(v)}(\mathbf{w}^{(v)}, b^{(v)^k}, \mathbf{t}^{(v)^{k+1}}, 
\boldsymbol{\lambda}^{(v)^k}) + \frac{\xi}{2}\big\|\mathbf{w}^{(v)} 
- \mathbf{w}^{(v)^k}\big\|_{Z_k^{(v)}}^2, \notag\\
b^{(v)^{k+1}} &= \argmin_{b^{(v)} \in \mathbb{R}} 
\mathcal{L}_\xi^{(v)}(\mathbf{w}^{(v)^{k+1}}, b^{(v)}, 
\mathbf{t}^{(v)^{k+1}}, \boldsymbol{\lambda}^{(v)^k}), \notag\\
\boldsymbol{\lambda}^{(v)^{k+1}} &= \boldsymbol{\lambda}^{(v)^k} 
+ \tau\xi\bigl(\mathbf{t}^{(v)^{k+1}} - \mathbf{1} + M^{(v)}\mathbf{w}^{(v)^{k+1}} 
+ b^{(v)^{k+1}}\mathbf{y}\bigr).
\label{eq:mv-admm-iteration}
\end{align}
where $Z_k^{(v)} \in \mathbb{R}^{n_v \times n_v}$ is a symmetric matrix 
and $\tau \in (0, (1+\sqrt{5})/2)$ is the dual step-size.

Following the single-view case, we construct the working set 
$F_k^{(v)}$ for each view. Let 
$\mathbf{p}^{(v)^k} := \mathbf{1} - M^{(v)}\mathbf{w}^{(v)^k} - b^{(v)^k}\mathbf{y} 
- \nu\boldsymbol{\lambda}^{(v)^k}$. 
The working set $F_k^{(v)}$ is constructed according to \eqref{eq:working-set} 
with view-specific variables. Denote 
$\overline{F}_k^{(v)} := [m] \backslash F_k^{(v)}$ and set
\begin{equation}
Z_k^{(v)} = -M_{\overline{F}_k^{(v)}}^{(v)\top} M_{\overline{F}_k^{(v)}}^{(v)}.
\label{eq:mv-Z-matrix}
\end{equation}
This choice excludes non-support-vector samples from computation, 
leading to significant speedup when $|F_k^{(v)}| \ll m$.

\textbf{(i) Updating $\mathbf{t}^{(v)^{k+1}}$.} The $\mathbf{t}^{(v)}$-subproblem 
reduces to
\begin{equation}
\mathbf{t}^{(v)^{k+1}} = \argmin_{\mathbf{t}^{(v)}} C L_{\mathrm{ht}}(\mathbf{t}^{(v)}) 
+ \frac{\xi}{2}\|\mathbf{t}^{(v)} - \mathbf{p}^{(v)^k}\|^2 
= \mathrm{prox}_{\nu C L_{\mathrm{ht}}}(\mathbf{p}^{(v)^k}),
\label{eq:mv-t-update}
\end{equation}
where $\mathbf{p}^{(v)^k} = \mathbf{1} - M^{(v)}\mathbf{w}^{(v)^k} - b^{(v)^k}\mathbf{y} 
- \nu\boldsymbol{\lambda}^{(v)^k}$ as defined above. By Lemma~\ref{lem:prox-1d}, 
the closed-form solution is obtained component-wise.

\textbf{(ii) Updating $\mathbf{w}^{(v)^{k+1}}$.} Setting the gradient of 
the $\mathbf{w}^{(v)}$-subproblem to zero yields the linear system
\begin{equation}
\bigl(\theta^{(v)}I + \eta\Sigma^{(v)} + \xi M_{F_k^{(v)}}^{(v)\top} 
M_{F_k^{(v)}}^{(v)}\bigr)\mathbf{w}^{(v)} = \xi M_{F_k^{(v)}}^{(v)\top}
\boldsymbol{\chi}_{F_k^{(v)}}^{(v)^k},
\label{eq:mv-w-linear-system}
\end{equation}
where $\boldsymbol{\chi}^{(v)^k} := -(\mathbf{t}^{(v)^{k+1}} + b^{(v)^k}\mathbf{y} 
- \mathbf{1} + \boldsymbol{\lambda}^{(v)^k}/\xi)$. The derivation uses the identity
\[
M_{F_k^{(v)}}^{(v)\top} M_{F_k^{(v)}}^{(v)} = M^{(v)\top} M^{(v)} 
- M_{\overline{F}_k^{(v)}}^{(v)\top} M_{\overline{F}_k^{(v)}}^{(v)}.
\]
Depending on the relationship between $n_v$ and $|F_k^{(v)}|$, we have two cases. 
When $n_v \le |F_k^{(v)}|$, we solve \eqref{eq:mv-w-linear-system} directly:
\begin{equation}
\mathbf{w}^{(v)^{k+1}} = \bigl(\theta^{(v)}I + \eta\Sigma^{(v)} 
+ \xi M_{F_k^{(v)}}^{(v)\top} M_{F_k^{(v)}}^{(v)}\bigr)^{-1} 
\xi M_{F_k^{(v)}}^{(v)\top}\boldsymbol{\chi}_{F_k^{(v)}}^{(v)^k}.
\label{eq:mv-w-case1}
\end{equation}
When $n_v > |F_k^{(v)}|$, applying the Sherman--Morrison--Woodbury formula gives
\begin{align}
\mathbf{w}^{(v)^{k+1}} &= \xi(\theta^{(v)}I + \eta\Sigma^{(v)})^{-1}
M_{F_k^{(v)}}^{(v)\top} \notag\\
&\; \times \bigl(I + \xi M_{F_k^{(v)}}^{(v)}(\theta^{(v)}I 
+ \eta\Sigma^{(v)})^{-1}M_{F_k^{(v)}}^{(v)\top}\bigr)^{-1}
\boldsymbol{\chi}_{F_k^{(v)}}^{(v)^k}.
\label{eq:mv-w-case2}
\end{align}

\textbf{(iii) Updating $b^{(v)^{k+1}}$.} The $b^{(v)}$-subproblem admits 
the closed-form solution
\begin{equation}
b^{(v)^{k+1}} = \langle\mathbf{y}, \mathbf{r}^{(v)^k}\rangle / m,
\label{eq:mv-b-update}
\end{equation}
where $\mathbf{r}^{(v)^k} := -M^{(v)}\mathbf{w}^{(v)^{k+1}} + \mathbf{1} 
- \mathbf{t}^{(v)^{k+1}} - \boldsymbol{\lambda}^{(v)^k}/\xi$.

\textbf{(iv) Updating $\boldsymbol{\lambda}^{(v)^{k+1}}$.} The multiplier 
update follows from \eqref{eq:mv-admm-iteration}:
\begin{equation}
\boldsymbol{\lambda}_{F_k^{(v)}}^{(v)^{k+1}} = \boldsymbol{\lambda}_{F_k^{(v)}}^{(v)^k} 
+ \tau\xi\boldsymbol{\Lambda}_{F_k^{(v)}}^{(v)^{k+1}}, \quad 
\boldsymbol{\lambda}_{\overline{F}_k^{(v)}}^{(v)^{k+1}} = \mathbf{0},
\label{eq:mv-lambda-update}
\end{equation}
where $\boldsymbol{\Lambda}^{(v)^{k+1}} := \mathbf{t}^{(v)^{k+1}} - \mathbf{1} 
+ M^{(v)}\mathbf{w}^{(v)^{k+1}} + b^{(v)^{k+1}}\mathbf{y}$.

For the inner ADMM loop, we terminate when
\[
\max\{\epsilon_1^{(v)^k}, \epsilon_2^{(v)^k}, \epsilon_3^{(v)^k}, 
\epsilon_4^{(v)^k}\} < \epsilon,
\]
where the error measures are defined analogously to those of Algorithm~\ref{alg:admm}.

\subsubsection{Update view weights \texorpdfstring{$\boldsymbol{\theta}$}{theta}}

After fixing all classifiers 
$\{\mathbf{w}^{(v)}, b^{(v)}\}_{v=1}^V$, the $\boldsymbol{\theta}$ 
optimization becomes a small-scale convex quadratic program:
\begin{equation}
\min_{\boldsymbol{\theta} \in \mathbb{R}^V} 
\frac{1}{2}\boldsymbol{\theta}^\top\boldsymbol{\pi} 
+ \frac{\eta}{2}\boldsymbol{\theta}^\top\boldsymbol{\rho} 
+ \frac{\alpha}{2}\|\boldsymbol{\theta}\|_2^2, \quad 
\text{s.t.} \sum_{v=1}^V \theta^{(v)} = 1, \; \theta^{(v)} \ge 0,
\label{eq:mv-theta-qp}
\end{equation}
where $\boldsymbol{\pi} = [\|\mathbf{w}^{(1)}\|^2, \ldots, \|\mathbf{w}^{(V)}\|^2]^\top$ 
and $\boldsymbol{\rho} = (\rho_1, \ldots, \rho_V)^\top$ with 
$\rho_v = \sum_{u=1}^V (\mathbf{w}^{(u)})^\top \Sigma^{(u,v)} \mathbf{w}^{(u)}$, 
where $\Sigma^{(u,v)} := \sum_{j=1}^{K_v} \Sigma_j^{(u,v)}$ and 
$\Sigma_j^{(u,v)}$ is defined analogously to \eqref{eq:local-cov} 
by computing the covariance of view $u$ features over cluster $j$ 
from view $v$. This can be solved efficiently by standard QP solvers.

For the outer loop, we stop when 
$\|\boldsymbol{\theta}^{(l+1)} - \boldsymbol{\theta}^{(l)}\| < \epsilon$. 
The complete procedure is summarized in Algorithm~\ref{alg:mv-admm}.

\begin{algorithm}[t]
\caption{Mv$L_{\mathrm{ht}}$-SVM: Alternating ADMM for solving \eqref{eq:mvht-svm-constrained}}
\label{alg:mv-admm}
\begin{algorithmic}[1]
\STATE \textbf{Input:} Multi-view data $\{\mathbf{X}^{(v)}\}_{v=1}^V$, labels $\mathbf{y}$, 
parameters $C, \eta, \alpha, \xi$, maximum outer iterations $l_{\max}$, maximum inner iterations $K$.
\STATE Initialize $\theta^{(v)} = 1/V$ for $v \in [V]$, $l = 0$, $\Delta\theta = +\infty$.
\WHILE{$\Delta\theta \ge \epsilon$ and $l \le l_{\max}$}
\FOR{$v = 1$ \textbf{to} $V$}
\STATE Compute $\Sigma^{(v)}$ by \eqref{eq:global-cov}.
\STATE Initialize $\mathbf{w}^{(v)^0}, b^{(v)^0}, \mathbf{t}^{(v)^0}, 
\boldsymbol{\lambda}^{(v)^0}$, $k = 0$.
\WHILE{$\max\{\epsilon_1^{(v)^k}, \epsilon_2^{(v)^k}, \epsilon_3^{(v)^k}, 
\epsilon_4^{(v)^k}\} \ge \epsilon$ and $k \le K$}
\STATE Compute $F_k^{(v)}$ as in \eqref{eq:working-set}.
\STATE Compute $\mathbf{t}^{(v)^{k+1}}$ by \eqref{eq:mv-t-update}.
\STATE Compute $\mathbf{w}^{(v)^{k+1}}$ by \eqref{eq:mv-w-case1} if 
$n_v \le |F_k^{(v)}|$, or by \eqref{eq:mv-w-case2} otherwise.
\STATE Compute $b^{(v)^{k+1}}$ by \eqref{eq:mv-b-update}.
\STATE Compute $\boldsymbol{\lambda}^{(v)^{k+1}}$ by \eqref{eq:mv-lambda-update}.
\STATE $k \leftarrow k + 1$.
\ENDWHILE
\STATE \textbf{return} $(\mathbf{w}^{(v)*}, b^{(v)*}) \leftarrow 
(\mathbf{w}^{(v)^k}, b^{(v)^k})$.
\ENDFOR
\STATE Solve QP \eqref{eq:mv-theta-qp} to obtain $\boldsymbol{\theta}^{\mathrm{new}}$.
\STATE $\Delta\theta \leftarrow \|\boldsymbol{\theta}^{\mathrm{new}} - \boldsymbol{\theta}\|$; 
$\boldsymbol{\theta} \leftarrow \boldsymbol{\theta}^{\mathrm{new}}$; $l \leftarrow l + 1$.
\ENDWHILE
\STATE \textbf{Output:} $\boldsymbol{\theta}^*$, 
$\{\mathbf{w}^{(v)*}, b^{(v)*}\}_{v=1}^V$.
\end{algorithmic}
\end{algorithm}

\subsection{Computational complexity analysis}
\label{subsec:mv-complexity}

The computational cost of Algorithm~\ref{alg:mv-admm} consists of two parts: 
the inner ADMM loop for each view and the outer loop for updating 
$\boldsymbol{\theta}$.

For the inner ADMM in view $v$, let $|F_k^{(v)}|$ denote the working set size 
at iteration $k$. The main costs per iteration are as follows. 
Computing $\mathbf{t}^{(v)^{k+1}}$ via the proximal operator requires 
$\mathcal{O}(m)$ operations. 
When $n_v \le |F_k^{(v)}|$, solving \eqref{eq:mv-w-case1} costs 
$\mathcal{O}(n_v^2 |F_k^{(v)}| + n_v^3)$ for forming and inverting the 
$(n_v \times n_v)$ matrix. 
When $n_v > |F_k^{(v)}|$, solving \eqref{eq:mv-w-case2} costs 
$\mathcal{O}(n_v |F_k^{(v)}|^2 + |F_k^{(v)}|^3)$ for inverting the 
$(|F_k^{(v)}| \times |F_k^{(v)}|)$ matrix. 
Computing $b^{(v)^{k+1}}$ and $\boldsymbol{\lambda}^{(v)^{k+1}}$ requires 
$\mathcal{O}(m)$ and $\mathcal{O}(|F_k^{(v)}|)$ operations, respectively.

Due to the sparsity of $L_{\mathrm{ht}}$ support vectors, $|F_k^{(v)}| \ll m$ typically 
holds after a few iterations. Suppose the inner loop converges in $K$ 
iterations on average. The per-view cost is 
$\mathcal{O}(K(n_v^2 |F_k^{(v)}| + n_v^3))$ when $n_v \le |F_k^{(v)}|$, 
or $\mathcal{O}(K(n_v |F_k^{(v)}|^2 + |F_k^{(v)}|^3))$ otherwise.

For the view weight update, solving the QP \eqref{eq:mv-theta-qp} involves 
a $(V \times V)$ quadratic program, which costs $\mathcal{O}(V^3)$. Since 
$V$ is typically small (e.g., $V \le 10$), this step is negligible.

Let $L$ be the number of outer iterations. The total complexity of 
Algorithm~\ref{alg:mv-admm} is
\[
\mathcal{O}\Bigl(L \cdot \sum_{v=1}^V K \cdot \max\{n_v^2 |F_k^{(v)}|, 
n_v |F_k^{(v)}|^2\}\Bigr).
\]
Compared with methods that process all $m$ samples, the working set 
strategy yields significant speedup when the support vector ratio is low.

\section{Numerical experiments}
\label{sec:experiment}

\subsection{Single-view experiments}
\label{subsec:sv-exp}

\subsubsection{Experimental settings}
\label{subsubsec:sv-settings}

This subsection benchmarks the proposed $L_{\mathrm{ht}}$-SVM 
against five representative single-view SVM solvers in terms of 
classification accuracy and training efficiency. The platform 
is a Windows~11 workstation (Intel Core i9-13980HX, 32\,GB RAM) 
running Python~3.11.

\paragraph{Datasets}
We assemble 15 real-world datasets drawn from the 
LIBSVM\footnote{\url{https://www.csie.ntu.edu.tw/~cjlin/libsvmtools/datasets/}} 
and 
UCI\footnote{\url{https://archive.ics.uci.edu/ml/index.php}} 
repositories (Table~\ref{tab:sv-datasets}). All multi-class 
labels are binarised by mapping non-``$+1$'' classes to $-1$, 
and each feature is normalised to $[-1,1]$. 
For UCI datasets, results are averaged over 5-fold 
cross-validation; for LIBSVM datasets, the official train/test 
partitions are adopted directly.

For the synthetic experiments, we sample two-dimensional data 
from a pair of Gaussian distributions. Specifically, $m$ 
positive samples are generated from 
$\mathcal{N}(\boldsymbol{\theta}_1, \Phi_1)$ with 
$\boldsymbol{\theta}_1 = (0.5, -3)^\top$, and $m$ negative 
samples from $\mathcal{N}(\boldsymbol{\theta}_2, \Phi_2)$ 
with $\boldsymbol{\theta}_2 = (-0.5, 3)^\top$; both classes 
share the covariance matrix 
$\Phi_1 = \Phi_2 = \bigl(\begin{smallmatrix} 0.2 & 0 \\ 0 & 3 \end{smallmatrix}\bigr)$. 
The pooled dataset is then evenly split into training and 
testing subsets.

\begin{table}[!t]
\centering
\caption{Details of the 15 real datasets.}
\footnotesize
\label{tab:sv-datasets}
\begin{tabular}{ccccc}
\toprule
Datasets & Source & $m$ & $m_t$ & $n$ \\
\midrule
australian    & UCI      & 690    & --     & 14  \\
breast-cancer & UCI      & 683    & --     & 9   \\
mushroom      & UCI      & 8124   & --     & 22  \\
wine-red      & UCI      & 1599   & --     & 11  \\
wine-white    & UCI      & 4898   & --     & 11  \\
a1a           & LIBSVM$^*$ & 1605   & 30956  & 123 \\
a2a           & LIBSVM$^*$ & 2265   & 30296  & 123 \\
a4a           & LIBSVM$^*$ & 4781   & 27780  & 123 \\
a6a           & LIBSVM$^*$ & 11220  & 21341  & 123 \\
a7a           & LIBSVM$^*$ & 16100  & 16461  & 123 \\
a8a           & LIBSVM$^*$ & 22696  & 9865   & 123 \\
a9a           & LIBSVM$^*$ & 32561  & 16281  & 123 \\
w2a           & LIBSVM$^*$ & 3470   & 46279  & 300 \\
w5a           & LIBSVM$^*$ & 9888   & 39861  & 300 \\
w6a           & LIBSVM$^*$ & 17188  & 32561  & 300 \\
\bottomrule
\end{tabular}\\[4pt]
{\raggedright\scriptsize $m$: number of training samples; $m_t$: number of testing samples; $n$: number of features.\\
$^*$\,LIBSVM datasets use the predefined train/test split.\par}
\end{table}

\paragraph{Comparison methods}

(1)~Hinge-SVM (hinge loss): the standard SVM with the hinge 
soft-margin loss, implemented by LIBSVM \citep{cortes1995support}.
(2)~LS-SVM (least squares loss): SVM with the least squares 
soft-margin loss \citep{suykens1999least}.
(3)~Pin-SVM (pinball loss): SVM with the pinball soft-margin loss 
\citep{huang2013support}.
(4)~Ramp-SVM (ramp loss): SVM with the ramp soft-margin loss, 
solved by the concave-convex procedure \citep{collobert2006trading}.
(5)~$L_{0/1}$-SVM ($L_{0/1}$ loss): SVM with the $L_{0/1}$ 
soft-margin loss, solved by ADMM \citep{wang2021support}.

\paragraph{Experimental setup and evaluation metrics}
Hyper-parameters of every method are selected by grid search 
over $\{2^i \mid i = -4, -3, \ldots, 4\}$ for all penalty 
terms. The proposed $L_{\mathrm{ht}}$-ADMM starts from 
$\mathbf{w}^0 = \mathbf{1}/100$, $b^0 = 0$, with tolerance 
$\epsilon = 10^{-3}$ and iteration budget $K = 10^3$; the 
P-stationary point condition (Theorem~\ref{thm:sufficient}) 
serves as the stopping rule. To guarantee a fair comparison, 
all baseline solvers are tuned over the same parameter grid.

Three metrics are reported alongside training time: 
(i)~\texttt{ACC}, the test-set classification accuracy; 
(ii)~\texttt{TIME}, the mean wall-clock training time; 
(iii)~\texttt{NSV}, the number of support vectors retained. 
A superior solver should exhibit high \texttt{ACC} together 
with low \texttt{TIME} and \texttt{NSV}.

\subsubsection{Results on real datasets}
\label{subsubsec:sv-real}

The average results of \texttt{ACC}, \texttt{NSV}, and 
\texttt{TIME} of six solvers are recorded in 
Tables~\ref{tab:uci-acc}--\ref{tab:libsvm-time}, where the best 
results are highlighted in bold.

On the UCI datasets, $L_{\mathrm{ht}}$-SVM ranks first in 
average \texttt{ACC}. For example, on breast-cancer it achieves 
97.22\%, whereas all other solvers remain below 97.10\%. It 
also attains the smallest average \texttt{NSV}, indicating that 
the working-set strategy successfully confines the active set 
and thus lowers computational cost. In terms of \texttt{TIME}, 
both $L_{\mathrm{ht}}$-SVM and $L_{0/1}$-SVM are the fastest 
solvers, with a clear margin over the remaining four baselines.

The LIBSVM benchmarks range from 1\,605 to 32\,561 training 
samples. As shown in 
Tables~\ref{tab:libsvm-acc}--\ref{tab:libsvm-time}, 
$L_{\mathrm{ht}}$-SVM and $L_{0/1}$-SVM remain the top 
performers in most cases, and even when they are not the best, 
their results are close to the optimum. Their training-time 
advantage is particularly pronounced on larger datasets, 
confirming the scalability of the working-set ADMM framework.

\begin{table*}[!t]
\centering
\caption{Accuracy rates (\%) with standard deviations for the UCI datasets.}
\label{tab:uci-acc}
\footnotesize
\begin{tabular}{ccccccc}
\toprule
Dataset & Hinge-SVM & LS-SVM & Pin-SVM & Ramp-SVM & $L_{0/1}$-SVM & $L_{\mathrm{ht}}$-SVM \\
\midrule
australian    & 85.51$\pm$3.14 & 86.09$\pm$3.25 & 85.51$\pm$3.14 & 85.51$\pm$3.14 & 85.80$\pm$3.32 & \textbf{86.09$\pm$2.35} \\
breast-cancer & 96.92$\pm$0.97 & 96.05$\pm$1.50 & 96.49$\pm$1.25 & 96.39$\pm$1.03 & 97.07$\pm$1.22 & \textbf{97.22$\pm$0.86} \\
mushroom      & \textbf{100.00$\pm$0.00} & \textbf{100.00$\pm$0.00} & \textbf{100.00$\pm$0.00} & \textbf{100.00$\pm$0.00} & \textbf{100.00$\pm$0.00} & \textbf{100.00$\pm$0.00} \\
wine-red      & 73.98$\pm$1.34 & 74.55$\pm$1.58 & 74.05$\pm$1.44 & 74.16$\pm$1.93 & \textbf{74.73$\pm$1.80} & 74.61$\pm$1.54 \\
wine-white    & 75.11$\pm$0.44 & 75.05$\pm$0.38 & 75.03$\pm$0.72 & 75.05$\pm$0.48 & 75.31$\pm$0.27 & \textbf{75.38$\pm$0.53} \\
\midrule
Avg.~Acc.     & 86.30 & 86.35 & 86.22 & 86.22 & 86.58 & \textbf{86.66} \\
Avg.~Rank     & 4.10 & 3.70 & 4.70 & 4.40 & 2.30 & \textbf{1.80} \\
\bottomrule
\end{tabular}
\end{table*}

\begin{table*}[!t]
\centering
\caption{Number of support vectors (\texttt{NSV}) for the UCI datasets.}
\label{tab:uci-nsv}
\footnotesize
\begin{tabular}{ccccccc}
\toprule
Dataset & Hinge-SVM & LS-SVM & Pin-SVM & Ramp-SVM & $L_{0/1}$-SVM & $L_{\mathrm{ht}}$-SVM \\
\midrule
australian    & 207  & 690   & 384  & 194  & 56   & \textbf{44}  \\
breast-cancer & 259  & 683   & 546  & 175  & \textbf{16}   & 25  \\
mushroom      & 1008 & 8124  & 3555 & 824  & 395  & \textbf{347} \\
wine-red      & 770  & 1599  & 1279 & 616  & 177  & \textbf{153} \\
wine-white    & 2282 & 4898  & 3918 & 1697 & \textbf{136}  & 188 \\
\midrule
Avg.~NSV      & 905  & 3199  & 1936 & 701  & 156  & \textbf{151} \\
Avg.~Rank     & 4.00 & 6.00  & 5.00 & 3.00 & 1.60 & \textbf{1.40} \\
\bottomrule
\end{tabular}
\end{table*}

\begin{table*}[!t]
\centering
\caption{Running time (s) for the UCI datasets.}
\label{tab:uci-time}
\footnotesize
\begin{tabular}{ccccccc}
\toprule
Dataset & Hinge-SVM & LS-SVM & Pin-SVM & Ramp-SVM & $L_{0/1}$-SVM & $L_{\mathrm{ht}}$-SVM \\
\midrule
australian    & 0.024  & 0.017  & 0.015  & 0.613   & 0.006  & \textbf{0.005}  \\
breast-cancer & 0.047  & 0.029  & 0.198  & 7.871   & 0.009  & \textbf{0.004}  \\
mushroom      & 8.266  & 11.387 & 3.692  & 205.892 & \textbf{0.570}  & 0.579  \\
wine-red      & 0.729  & 1.063  & 0.607  & 5.845   & 0.126  & \textbf{0.101}  \\
wine-white    & 0.693  & 1.027  & 0.704  & 28.087  & \textbf{0.139}  & 0.199  \\
\midrule
Avg.~Time     & 1.952  & 2.705  & 1.043  & 49.662  & \textbf{0.172}  & 0.176  \\
Avg.~Rank     & 4.00   & 4.40   & 3.60   & 6.00    & 1.60   & \textbf{1.40}   \\
\bottomrule
\end{tabular}
\end{table*}

\begin{table*}[!t]
\centering
\caption{Accuracy rates (\%) for the LIBSVM datasets.}
\label{tab:libsvm-acc}
\footnotesize
\begin{tabular}{ccccccc}
\toprule
Dataset & Hinge-SVM & LS-SVM & Pin-SVM & Ramp-SVM & $L_{0/1}$-SVM & $L_{\mathrm{ht}}$-SVM \\
\midrule
a1a & 83.74 & 83.99 & 83.51 & 83.60 & 83.89 & \textbf{84.08} \\
a2a & 84.33 & 84.37 & 84.30 & 84.37 & 84.51 & \textbf{84.68} \\
a4a & 84.45 & 84.51 & 84.34 & 84.35 & \textbf{84.63} & 84.54 \\
a6a & 84.74 & 84.58 & 84.24 & 84.41 & 84.82 & \textbf{84.93} \\
a7a & 84.53 & 84.72 & 84.39 & 84.27 & 84.74 & \textbf{84.81} \\
a8a & 85.20 & 85.17 & 84.94 & 84.74 & 85.23 & \textbf{85.40} \\
a9a & 85.03 & 84.55 & 84.42 & 84.39 & \textbf{85.05} & 84.96 \\
w2a & 98.07 & 98.17 & 98.14 & 98.10 & 98.03 & \textbf{98.21} \\
w5a & 98.50 & 98.28 & \textbf{98.51} & 98.32 & \textbf{98.51} & \textbf{98.51} \\
w6a & 98.59 & 98.58 & 98.41 & 97.71 & 98.62 & \textbf{98.67} \\
\midrule
Avg.~Acc.  & 88.72 & 88.69 & 88.52 & 88.43 & 88.80 & \textbf{88.88} \\
Avg.~Rank  & 3.70  & 3.55  & 4.90  & 5.15  & 2.30  & \textbf{1.40}  \\
\bottomrule
\end{tabular}
\end{table*}

\begin{table*}[!t]
\centering
\caption{Number of support vectors (\texttt{NSV}) for the LIBSVM datasets.}
\label{tab:libsvm-nsv}
\footnotesize
\begin{tabular}{ccccccc}
\toprule
Dataset & Hinge-SVM & LS-SVM & Pin-SVM & Ramp-SVM & $L_{0/1}$-SVM & $L_{\mathrm{ht}}$-SVM \\
\midrule
a1a & 632   & 1605  & 1605  & 837   & \textbf{74}    & 110   \\
a2a & 904   & 2265  & 2265  & 697   & \textbf{96}    & 120   \\
a4a & 1762  & 4781  & 4781  & 855   & 227   & \textbf{213}  \\
a6a & 4038  & 11204 & 8388  & 1151  & 448   & \textbf{397}  \\
a7a & 5824  & 16052 & 13626 & 4816  & 1217  & \textbf{918}  \\
a8a & 8315  & 22694 & 14231 & 4112  & 1313  & \textbf{625}  \\
a9a & 11664 & 32552 & 14281 & 632   & \textbf{411}   & 549   \\
w2a & 256   & 3470  & 1386  & 384   & 120   & \textbf{115}  \\
w5a & 472   & 9869  & 2740  & 422   & \textbf{169}   & 208   \\
w6a & 882   & 17186 & 3401  & 723   & 380   & \textbf{230}  \\
\midrule
Avg.~NSV   & 3475  & 12168 & 6670  & 1463  & 446   & \textbf{348}  \\
Avg.~Rank  & 3.80  & 6.00  & 5.00  & 3.20  & 1.60  & \textbf{1.40} \\
\bottomrule
\end{tabular}
\end{table*}

\begin{table*}[!t]
\centering
\caption{Running time (s) for the LIBSVM datasets.}
\label{tab:libsvm-time}
\footnotesize
\begin{tabular}{ccccccc}
\toprule
Dataset & Hinge-SVM & LS-SVM & Pin-SVM & Ramp-SVM & $L_{0/1}$-SVM & $L_{\mathrm{ht}}$-SVM \\
\midrule
a1a & 2.17   & 0.44   & 0.20   & 12.82   & 0.19   & \textbf{0.11}   \\
a2a & 5.78   & 1.32   & \textbf{0.30}   & 89.61   & 0.41   & 0.38   \\
a4a & 1.19   & 2.12   & 3.92   & 388.21  & 1.60   & \textbf{0.61}   \\
a6a & 12.24  & 14.98  & 7.13   & 499.66  & 4.47   & \textbf{3.16}   \\
a7a & 27.21  & 27.17  & 7.28   & 839.18  & 5.61   & \textbf{5.43}   \\
a8a & 44.95  & 75.84  & 12.35  & 1230.23 & 5.24   & \textbf{4.78}   \\
a9a & 75.13  & 220.78 & 22.77  & 1713.84 & \textbf{5.40}   & 7.44   \\
w2a & 2.34   & 1.59   & 4.33   & 218.15  & \textbf{0.86}   & 0.96   \\
w5a & 14.32  & 41.61  & 6.68   & 388.95  & \textbf{1.30}   & 1.43   \\
w6a & 31.66  & 40.07  & 9.10   & 911.92  & 7.08   & \textbf{2.53}   \\
\midrule
Avg.~Time  & 21.70  & 42.59  & 7.41   & 629.26  & 3.22   & \textbf{2.68}   \\
Avg.~Rank  & 4.10   & 4.40   & 3.20   & 6.00    & 1.90   & \textbf{1.40}   \\
\bottomrule
\end{tabular}
\end{table*}

\subsubsection{Results on synthetic datasets}
\label{subsubsec:sv-synthetic}

We conduct an experiment to address the synthetic data with 
$n = 2$ and $m \in \{5000, 10000, 15000, 20000, 25000, 30000\}$. 
The average results and standard deviations are calculated based 
on data gathered from 50 independent runs, as shown in 
Tables~\ref{tab:syn-acc}--\ref{tab:syn-time}. Regarding the 
\texttt{ACC}, we observe that all six solvers are able to 
successfully classify the testing data. In this case, 
$L_{\mathrm{ht}}$-SVM exhibits the highest accuracy in 
classification and the lowest standard deviations, indicating its 
effectiveness. In addition, when considering \texttt{NSV}, it 
becomes clear that the LS-SVM and Pin-SVM utilize entire samples 
as \texttt{NSV}. In contrast, other solvers tend to operate with 
a smaller \texttt{NSV}. Notably, $L_{\mathrm{ht}}$-SVM exhibits 
a significantly smaller \texttt{NSV} compared to other solvers. 
This further confirms the effectiveness of the working set 
strategy in reducing the number of support vectors. Regarding 
the \texttt{TIME}, $L_{\mathrm{ht}}$-SVM and $L_{0/1}$-SVM 
achieve comparable and the fastest execution times, significantly 
outperforming the other four solvers. For instance, when 
$m = 30000$, $L_{\mathrm{ht}}$-SVM only takes 0.031~s while 
Ramp-SVM needs 2886.044~s. To summarize, the 
superiority of $L_{\mathrm{ht}}$-SVM becomes more evident with 
larger $m$. This discovery demonstrates the efficiency and 
scalability of our proposed algorithm for large-scale datasets.

\begin{table*}[!t]
\centering
\caption{Accuracy rates (\%) with standard deviations on synthetic 
datasets ($n = 2$), where the SD represents the standard deviation.}
\label{tab:syn-acc}
\footnotesize
\begin{tabular}{ccccccc}
\toprule
$m$ & Hinge-SVM & LS-SVM & Pin-SVM & Ramp-SVM & $L_{0/1}$-SVM & $L_{\mathrm{ht}}$-SVM \\
\midrule
5000  & 97.96$\pm$0.30 & 97.99$\pm$0.30 & 97.89$\pm$1.92 & 97.81$\pm$0.50 & 98.01$\pm$0.30 & \textbf{98.04$\pm$0.30} \\
10000 & 98.01$\pm$0.17 & 98.05$\pm$0.16 & 98.01$\pm$0.39 & 98.05$\pm$0.33 & \textbf{98.07$\pm$0.18} & \textbf{98.07$\pm$0.18} \\
15000 & 97.99$\pm$0.15 & 97.96$\pm$0.16 & 97.97$\pm$2.19 & 97.75$\pm$0.32 & 98.05$\pm$0.19 & \textbf{98.06$\pm$0.16} \\
20000 & 98.01$\pm$0.13 & 98.01$\pm$0.12 & 97.95$\pm$3.20 & 97.77$\pm$0.31 & 97.99$\pm$0.15 & \textbf{98.05$\pm$0.12} \\
25000 & 97.93$\pm$0.13 & 98.01$\pm$0.13 & 97.98$\pm$9.12 & 97.70$\pm$0.41 & 98.04$\pm$0.13 & \textbf{98.07$\pm$0.13} \\
30000 & 98.06$\pm$0.11 & 98.06$\pm$0.11 & 97.96$\pm$8.01 & 97.92$\pm$0.31 & 98.06$\pm$0.11 & \textbf{98.09$\pm$0.11} \\
\bottomrule
\end{tabular}
\end{table*}

\begin{table*}[!t]
\centering
\caption{Number of support vectors (\texttt{NSV}) $\pm$ SD on 
synthetic datasets.}
\label{tab:syn-nsv}
\footnotesize
\begin{tabular}{ccccccc}
\toprule
$m$ & Hinge-SVM & LS-SVM & Pin-SVM & Ramp-SVM & $L_{0/1}$-SVM & $L_{\mathrm{ht}}$-SVM \\
\midrule
5000  & 432$\pm$12  & 5000$\pm$0  & 2480$\pm$136  & 172$\pm$13  & 20$\pm$19  & \textbf{17$\pm$23}  \\
10000 & 660$\pm$19  & 10000$\pm$0 & 4960$\pm$199  & 441$\pm$27  & 17$\pm$11  & \textbf{12$\pm$9}   \\
15000 & 1085$\pm$28 & 15000$\pm$0 & 7305$\pm$862  & 705$\pm$18  & 72$\pm$6   & \textbf{69$\pm$6}   \\
20000 & 1975$\pm$25 & 20000$\pm$0 & 9684$\pm$969  & 1054$\pm$12 & 140$\pm$41 & \textbf{132$\pm$34} \\
25000 & 2029$\pm$33 & 25000$\pm$0 & 7909$\pm$823  & 1430$\pm$15 & \textbf{137$\pm$64} & 175$\pm$65 \\
30000 & 2755$\pm$43 & 30000$\pm$0 & 10129$\pm$93  & 1972$\pm$25 & 218$\pm$85 & \textbf{212$\pm$55} \\
\bottomrule
\end{tabular}
\end{table*}

\begin{table*}[!t]
\centering
\caption{Running time (s) $\pm$ SD on synthetic datasets.}
\label{tab:syn-time}
\footnotesize
\begin{tabular}{ccccccc}
\toprule
$m$ & Hinge-SVM & LS-SVM & Pin-SVM & Ramp-SVM & $L_{0/1}$-SVM & $L_{\mathrm{ht}}$-SVM \\
\midrule
5000  & 0.105$\pm$0.027 & 1.345$\pm$0.003 & 0.461$\pm$0.082 & 21.019$\pm$21.545  & 0.010$\pm$0.003 & \textbf{0.002$\pm$0.000} \\
10000 & 0.317$\pm$0.054 & 12.151$\pm$0.024 & 1.385$\pm$0.113 & 229.475$\pm$188.209 & 0.017$\pm$0.004 & \textbf{0.011$\pm$0.017} \\
15000 & 0.704$\pm$0.135 & 40.673$\pm$0.072 & 2.351$\pm$0.269 & 494.150$\pm$420.492 & \textbf{0.015$\pm$0.009} & 0.025$\pm$0.081 \\
20000 & 2.021$\pm$0.327 & 94.595$\pm$0.311 & 4.295$\pm$0.385 & 1215.696$\pm$819.898 & 0.031$\pm$0.009 & \textbf{0.010$\pm$0.001} \\
25000 & 2.219$\pm$0.321 & 181.278$\pm$0.370 & 4.845$\pm$0.415 & 1861.817$\pm$1316.984 & 0.032$\pm$0.005 & \textbf{0.014$\pm$0.001} \\
30000 & 2.544$\pm$0.270 & 420.441$\pm$2.672 & 5.044$\pm$0.497 & 2886.044$\pm$1533.268 & \textbf{0.012$\pm$0.003} & 0.031$\pm$0.005 \\
\bottomrule
\end{tabular}
\end{table*}

\subsubsection{Performance analysis of \texorpdfstring{$L_{\mathrm{ht}}$}{Lht}-ADMM}
\label{subsubsec:sv-performance}

We illustrate the convergence behaviour of $L_{\mathrm{ht}}$-ADMM 
on dataset w5a ($m = 9\,888$) in Fig.~\ref{fig:sv-convergence}; 
analogous patterns are observed on other datasets and omitted 
for brevity.

Fig.~\ref{fig:sv-convergence}(a) shows that the algorithm 
reaches a P-stationary point within roughly 20 iterations, 
confirming global convergence. 
Fig.~\ref{fig:sv-convergence}(b) indicates that the active set 
$F_k$ stabilises at a small cardinality as $k$ grows: the ratio 
$|F_k|/m$ drops below 5\% in several iterations, which 
validates the per-iteration complexity reduction achieved by 
the working-set rule. 
Fig.~\ref{fig:sv-convergence}(c) reveals that the training 
accuracy (\texttt{TACC}) climbs steadily during the first 17 
iterations and plateaus above 98.3\%, demonstrating both rapid 
convergence and reliable generalisation on large-scale data. 
Finally, Fig.~\ref{fig:sv-convergence}(d) depicts the per-iteration 
wall-clock time, which remains low throughout. This efficiency 
stems from the closed-form $\mathbf{w}$-subproblem solution, 
the compact working set, and the P-stationary point termination 
criterion.

\begin{figure*}[!t]
\centering
\includegraphics[width=0.85\textwidth]{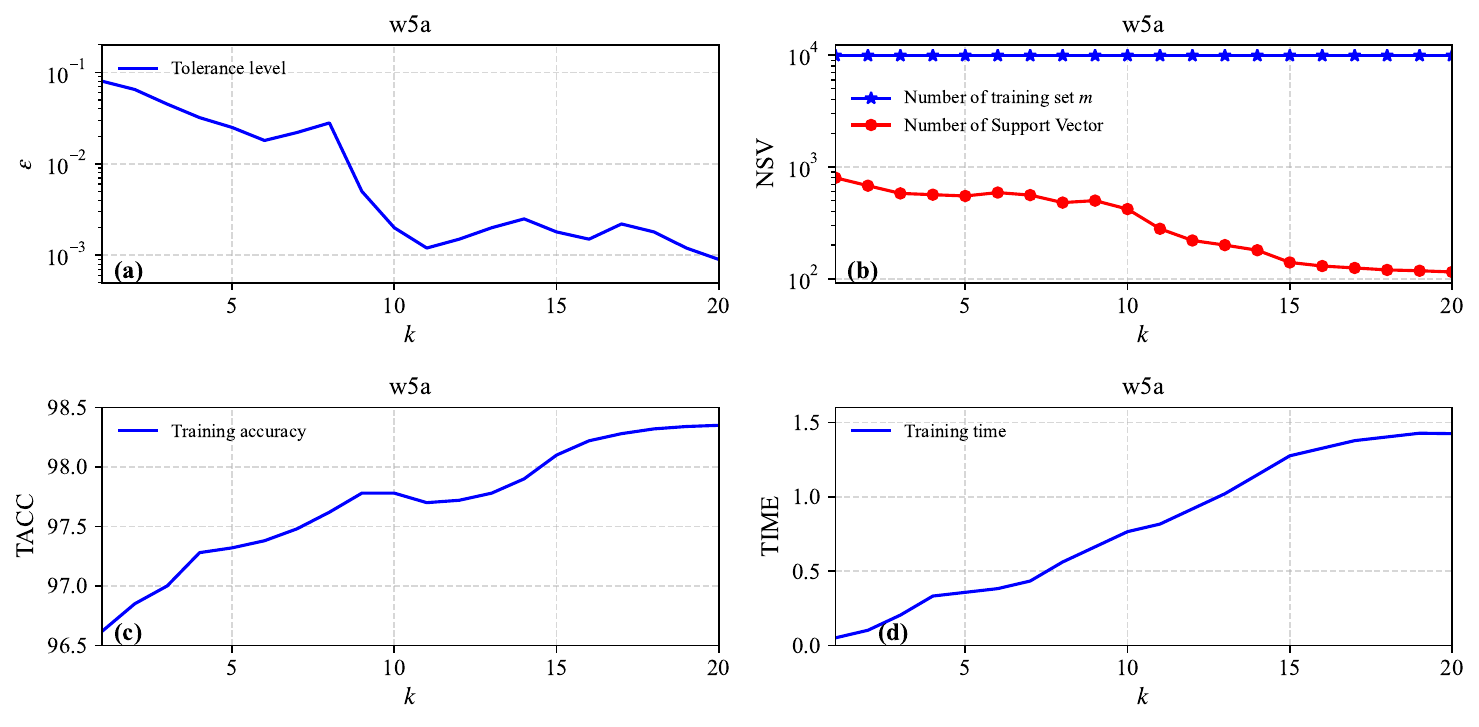}
\caption{The influence of $L_{\mathrm{ht}}$-ADMM on dataset w5a.}
\label{fig:sv-convergence}
\end{figure*}

\subsubsection{Robustness to label noise}
\label{subsubsec:sv-robustness}

We probe robustness by injecting label noise into the 
breast-cancer and australian datasets. For each class, a 
fraction $r \in \{0.00, 0.02, 0.04, 0.06, 0.08, 0.10\}$ of 
labels is randomly flipped; results are averaged over 5-fold 
cross-validation and reported in 
Tables~\ref{tab:robust-breast}--\ref{tab:robust-aust}.

As the noise rate rises, the accuracy of all solvers 
degrades, yet $L_{\mathrm{ht}}$-SVM maintains the highest 
\texttt{ACC} throughout, confirming its robustness advantage. 
In terms of \texttt{NSV}, $L_{\mathrm{ht}}$-SVM and 
$L_{0/1}$-SVM stabilise at a low support-vector count even 
under heavy corruption, whereas LS-SVM and Pin-SVM eventually 
retain nearly all training samples as support vectors. 
Hinge-SVM also exhibits a noticeable increase in \texttt{NSV}. 
These observations corroborate that the bounded 
$L_{\mathrm{ht}}$ loss effectively limits the influence of 
mislabelled samples. 
As for \texttt{TIME}, $L_{\mathrm{ht}}$-SVM and $L_{0/1}$-SVM 
consistently achieve the fastest execution times, significantly 
outperforming the other four solvers, indicating their 
computational efficiency. It is worth noting that, although 
$L_{0/1}$-SVM often yields the smallest \texttt{NSV}, the smallest 
support-vector count does not necessarily imply the best 
\texttt{ACC} or \texttt{TIME}. The running time is also affected by 
the numerical behaviour of the optimization procedure and the number 
of iterations required for convergence, while the classification 
accuracy depends on whether the retained active samples still 
preserve sufficient boundary information under label corruption. In 
contrast, $L_{\mathrm{ht}}$-SVM typically keeps a similarly sparse 
yet more informative active set, thereby achieving a better balance 
between sparsity, robustness, and computational efficiency. In 
summary, the experimental results 
demonstrate that $L_{\mathrm{ht}}$-SVM has better computational 
efficiency and robustness against outliers compared to the other 
solvers.

\begin{table*}[!t]
\centering
\caption{Robustness comparison on the breast-cancer dataset 
under different label noise ratios.}
\label{tab:robust-breast}
\footnotesize
\begin{tabular}{ccccccc}
\toprule
\multicolumn{7}{c}{\texttt{ACC} (\%)} \\
\midrule
$r$ & Hinge-SVM & LS-SVM & Pin-SVM & Ramp-SVM & $L_{0/1}$-SVM & $L_{\mathrm{ht}}$-SVM \\
\midrule
0.00 & 96.92 & 96.05 & 96.49 & 96.39 & 97.07 & \textbf{97.22} \\
0.02 & 95.17 & 94.43 & 95.02 & 95.46 & 95.46 & \textbf{95.75} \\
0.04 & 92.98 & 92.54 & 92.83 & 93.42 & \textbf{94.00} & 93.71 \\
0.06 & 91.66 & 90.78 & 91.22 & 91.51 & 92.09 & \textbf{92.24} \\
0.08 & 89.16 & 88.72 & 89.31 & 89.45 & \textbf{89.75} & \textbf{89.75} \\
0.10 & 87.26 & 87.26 & 87.26 & 87.85 & 88.00 & \textbf{88.29} \\
\midrule
\multicolumn{7}{c}{\texttt{NSV}} \\
\midrule
$r$ & Hinge-SVM & LS-SVM & Pin-SVM & Ramp-SVM & $L_{0/1}$-SVM & $L_{\mathrm{ht}}$-SVM \\
\midrule
0.00 & 259  & 683 & 546 & 175  & \textbf{16}  & 25  \\
0.02 & 276  & 683 & 546 & 196  & \textbf{17}  & 19  \\
0.04 & 207 & 683 & 546 & 196 & \textbf{17}  & 25  \\
0.06 & 258  & 683 & 546 & 182  & \textbf{17}  & 18  \\
0.08 & 234  & 683 & 546 & 124  & \textbf{17}  & \textbf{17}  \\
0.10 & 265  & 683 & 546 & 149  & 18  & \textbf{16}  \\
\midrule
\multicolumn{7}{c}{\texttt{TIME} (seconds)} \\
\midrule
$r$ & Hinge-SVM & LS-SVM & Pin-SVM & Ramp-SVM & $L_{0/1}$-SVM & $L_{\mathrm{ht}}$-SVM \\
\midrule
0.00 & 0.047 & 0.029 & 0.198 & 7.871 & 0.009 & \textbf{0.004} \\
0.02 & 0.051 & 0.034 & 0.221 & 7.293 & 0.010 & \textbf{0.009} \\
0.04 & 0.049 & 0.031 & 0.198 & 7.716 & 0.007 & \textbf{0.006} \\
0.06 & 0.049 & 0.034 & 0.210 & 7.921 & 0.010 & \textbf{0.006} \\
0.08 & 0.049 & 0.026 & 0.206 & 7.854 & 0.009 & \textbf{0.005} \\
0.10 & 0.049 & 0.031 & 0.201 & 7.270 & 0.009 & \textbf{0.008} \\
\bottomrule
\end{tabular}
\end{table*}

\begin{table*}[!t]
\centering
\caption{Robustness comparison on the australian dataset 
under different label noise ratios.}
\label{tab:robust-aust}
\footnotesize
\begin{tabular}{ccccccc}
\toprule
\multicolumn{7}{c}{\texttt{ACC} (\%)} \\
\midrule
$r$ & Hinge-SVM & LS-SVM & Pin-SVM & Ramp-SVM & $L_{0/1}$-SVM & $L_{\mathrm{ht}}$-SVM \\
\midrule
0.00 & 85.51 & \textbf{86.09} & 85.51 & 85.51 & 85.80 & \textbf{86.09} \\
0.02 & 84.78 & \textbf{85.36} & 84.78 & 84.78 & 84.35 & 84.93 \\
0.04 & 83.33 & 83.62 & 83.33 & 83.33 & 83.62 & \textbf{83.91} \\
0.06 & 82.03 & 82.09 & 82.03 & 82.03 & 81.30 & \textbf{82.17} \\
0.08 & 80.00 & 80.15 & 80.00 & 80.00 & 79.57 & \textbf{80.29} \\
0.10 & 78.55 & 78.64 & 78.55 & 78.55 & 77.68 & \textbf{78.84} \\
\midrule
\multicolumn{7}{c}{\texttt{NSV}} \\
\midrule
$r$ & Hinge-SVM & LS-SVM & Pin-SVM & Ramp-SVM & $L_{0/1}$-SVM & $L_{\mathrm{ht}}$-SVM \\
\midrule
0.00 & 207 & 690 & 384 & 194 & 56  & \textbf{44} \\
0.02 & 220 & 690 & 384 & 172 & 52 & \textbf{49} \\
0.04 & 189 & 690 & 384 & 201 & 29  & \textbf{18}  \\
0.06 & 216 & 690 & 384 & 190 & 84  & \textbf{36} \\
0.08 & 209 & 690 & 384 & 172 & \textbf{21}  & 38  \\
0.10 & 217 & 690 & 384 & 184 & 34  & \textbf{23}  \\
\midrule
\multicolumn{7}{c}{\texttt{TIME} (seconds)} \\
\midrule
$r$ & Hinge-SVM & LS-SVM & Pin-SVM & Ramp-SVM & $L_{0/1}$-SVM & $L_{\mathrm{ht}}$-SVM \\
\midrule
0.00 & 0.024 & 0.017 & 0.015 & 0.613 & 0.006 & \textbf{0.005} \\
0.02 & 0.033 & 0.024 & 0.067 & 0.557 & \textbf{0.009} & \textbf{0.009} \\
0.04 & 0.012 & 0.024 & 0.021 & 0.616 & 0.005 & \textbf{0.004} \\
0.06 & 0.029 & 0.034 & 0.016 & 0.527 & \textbf{0.007} & 0.009 \\
0.08 & 0.015 & 0.019 & 0.027 & 0.592 & 0.011 & \textbf{0.006} \\
0.10 & 0.026 & 0.023 & 0.017 & 0.609 & 0.009 & \textbf{0.005} \\
\bottomrule
\end{tabular}
\end{table*}

\subsection{Multi-view experiments}
\label{subsec:mv-exp}

\subsubsection{Experimental settings}
\label{subsubsec:mv-settings}

This subsection compares Mv$L_{\mathrm{ht}}$-SVM with six 
multi-view SVM baselines on two families of datasets.

\textbf{Datasets.} We use 10 UCI datasets and 45 binary tasks 
derived from 
STL-10\footnote{\url{https://cs.stanford.edu/~acoates/stl10/}}.

\textbf{(1) UCI datasets:} Table~\ref{tab:mv-datasets} summarises 
the 10 UCI datasets employed. Because these datasets are originally 
single-view, view~1 consists of the raw features and view~2 
comprises the leading principal components that retain 95\% of 
the variance.

\textbf{(2) STL-10:} STL-10 contains $96 \times 96$-pixel images 
from ten object categories (see Fig.~\ref{fig:stl10-samples}). We 
extract HOG and LBP descriptors as two views and randomly sample 
500 images per class. All $\binom{10}{2} = 45$ one-vs-one binary 
tasks are constructed to test generalisation across diverse class 
pairs.

\begin{table}[!htbp]
\centering
\caption{Details of 10 UCI datasets.}
\label{tab:mv-datasets}
\footnotesize
\begin{tabular}{cccc}
\toprule
Datasets & \#Data ($m$) & \#View~1 $n^{(1)}$ & \#View~2 $n^{(2)}$ \\
\midrule
Balance    & 576  & 4  & 4  \\
Bankruptcy & 250  & 6  & 5  \\
BUPA       & 345  & 6  & 5  \\
German     & 1000 & 24 & 18 \\
Haberman   & 306  & 3  & 3  \\
Heart      & 270  & 13 & 10 \\
Hepatitis  & 155  & 19 & 15 \\
Iris       & 150  & 4  & 2  \\
Vote       & 435  & 16 & 13 \\
Wine       & 130  & 13 & 1  \\
\bottomrule
\end{tabular}
\end{table}

\begin{figure}[!htbp]
\centering
\includegraphics[width=0.5\textwidth]{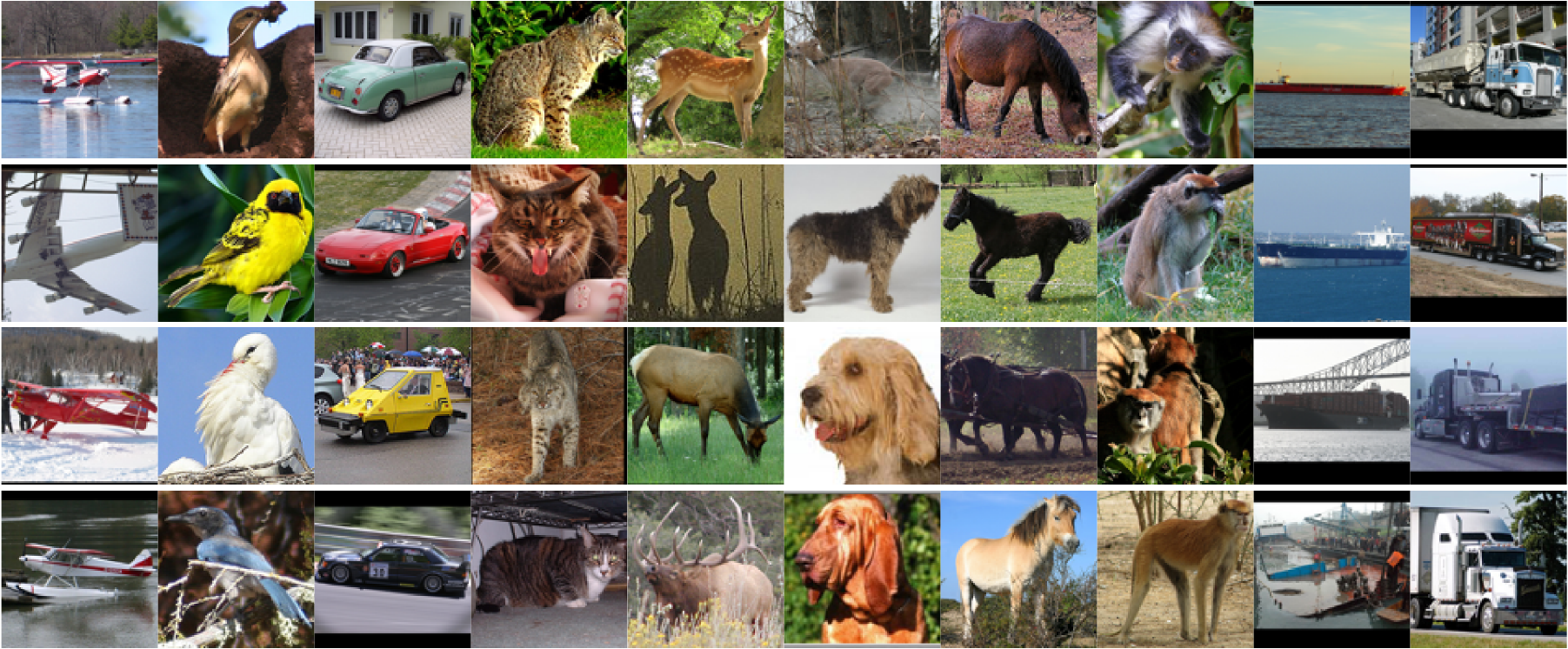}
\caption{STL-10 dataset.}
\label{fig:stl10-samples}
\end{figure}

\textbf{Benchmark methods.} Six multi-view SVM baselines are 
considered:
(1)~SVM-2K (hinge loss): enforces inter-view consistency via 
KCCA \citep{farquhar2005two}.
(2)~MvSVM-2C (hinge loss): jointly captures consensus and 
complementarity through adaptive view weighting 
\citep{xie2019multi}.
(3)~MvLSSVC-2C (least squares loss): couples multiple LS-SVM 
classifiers with a view-interaction term 
\citep{zhang2025multi}.
(4)~Wave-MvSVM (wave loss): applies a bounded wave loss for 
robustness under the 2C framework \citep{quadir2025enhancing}.
(5)~MVASY-BX (bounded LINEX loss): employs a bounded asymmetric 
loss to resist outliers \citep{tang2023robust}.
(6)~Mv$SL_{0/1}$-SVM ($L_{0/1}$ loss): adopts the 0/1 
soft-margin loss in multi-view classification 
\citep{chen2025multi}.

\textbf{Evaluation metrics.} We measure accuracy, precision, 
recall, and F1-score, all estimated via 5-fold cross-validation. 
Hyper-parameters are tuned by grid search over 
$\{2^i \mid i = -4, -2, 0, 2, 4\}$. On the larger STL-10 
tasks the search grid is narrowed to 
$\{2^i \mid i = -2, 0, 2\}$ to keep the total cost manageable.

\subsubsection{Results on UCI multi-view datasets}
\label{subsubsec:mv-uci}

Tables~\ref{tab:mv-uci-acc}--\ref{tab:mv-uci-time} show that 
Mv$L_{\mathrm{ht}}$-SVM achieves the best or tied-best accuracy 
on 8 of the 10 UCI datasets and attains the highest average rank 
across accuracy, precision, and F1-score. Although it does not 
top every dataset individually, its consistently strong 
performance across all four metrics underscores its robustness 
and generality. MvLSSVC-2C is the fastest solver on these 
small-to-medium datasets, yet 
Mv$L_{\mathrm{ht}}$-SVM (1.48\,s average) is the second fastest, 
considerably outpacing MvSVM-2C (38.95\,s), Wave-MvSVM 
(11.64\,s), MVASY-BX (5.03\,s), SVM-2K (3.03\,s), and 
Mv$SL_{0/1}$-SVM (1.82\,s). This speed advantage originates 
from the working-set ADMM that restricts each iteration to a 
compact active subset. Furthermore, methods that neglect one 
of the two 2C principles tend to exhibit weaker generalisation, 
confirming the benefit of jointly enforcing consensus and 
complementarity.

\begin{table*}[!htbp]
\centering
\caption{Accuracy rates (\%) with standard deviations for the UCI multi-view datasets.}
\label{tab:mv-uci-acc}
\footnotesize
\begin{tabular}{cccccccc}
\toprule
Dataset & SVM-2K & MvSVM-2C & MvLSSVC-2C & Wave-MvSVM & MVASY-BX & Mv$SL_{0/1}$-SVM & Mv$L_{\mathrm{ht}}$-SVM \\
\midrule
Balance    & 93.75$\pm$2.01 & 94.97$\pm$1.77 & 94.62$\pm$1.85 & 92.19$\pm$2.63 & 95.66$\pm$1.45 & 96.18$\pm$2.10 & \textbf{96.52$\pm$2.13} \\
Bankruptcy & 95.20$\pm$3.49 & 98.80$\pm$1.60 & 99.20$\pm$1.60 & \textbf{100.00$\pm$0.00} & 97.60$\pm$2.33 & \textbf{100.00$\pm$0.00} & \textbf{100.00$\pm$0.00} \\
BUPA       & 72.75$\pm$4.43 & 62.61$\pm$4.24 & 73.04$\pm$4.45 & 62.03$\pm$3.93 & 69.86$\pm$2.96 & \textbf{74.78$\pm$3.73} & 74.20$\pm$2.32 \\
German     & 70.80$\pm$0.75 & 72.20$\pm$2.42 & \textbf{75.00$\pm$1.38} & 72.80$\pm$1.54 & 70.00$\pm$0.00 & 74.50$\pm$1.55 & 74.50$\pm$1.14 \\
Haberman   & 73.53$\pm$0.48 & 73.53$\pm$0.48 & 73.20$\pm$1.26 & 73.20$\pm$0.89 & 73.53$\pm$0.48 & 74.18$\pm$0.81 & \textbf{74.19$\pm$1.09} \\
Heart      & 82.96$\pm$3.39 & 85.56$\pm$3.19 & 84.81$\pm$3.19 & 75.19$\pm$4.77 & 85.19$\pm$4.22 & 85.56$\pm$5.90 & \textbf{85.93$\pm$3.43} \\
Hepatitis  & 92.90$\pm$3.16 & 94.19$\pm$3.76 & 94.84$\pm$2.58 & 96.77$\pm$2.04 & 81.94$\pm$3.29 & 97.42$\pm$2.41 & \textbf{98.06$\pm$1.58} \\
Iris       & \textbf{100.00$\pm$0.00} & \textbf{100.00$\pm$0.00} & \textbf{100.00$\pm$0.00} & \textbf{100.00$\pm$0.00} & \textbf{100.00$\pm$0.00} & \textbf{100.00$\pm$0.00} & \textbf{100.00$\pm$0.00} \\
Vote       & 93.56$\pm$4.15 & 95.63$\pm$3.12 & 95.63$\pm$2.56 & 93.33$\pm$2.23 & 92.18$\pm$3.87 & \textbf{96.09$\pm$2.37} & \textbf{96.09$\pm$2.37} \\
Wine       & 91.54$\pm$2.88 & 93.08$\pm$2.88 & 96.92$\pm$2.88 & 90.00$\pm$3.08 & 95.38$\pm$2.88 & \textbf{100.00$\pm$0.00} & \textbf{100.00$\pm$0.00} \\
\midrule
Avg.~Acc.  & 86.70 & 87.06 & 88.73 & 85.55 & 86.13 & 89.87 & \textbf{89.95} \\
Avg.~Rank  & 5.40  & 4.40  & 3.90  & 5.35  & 5.10  & 2.10  & \textbf{1.75}  \\
\bottomrule
\end{tabular}
\end{table*}

\begin{table*}[!htbp]
\centering
\caption{Precision rates (\%) with standard deviations for the UCI multi-view datasets.}
\label{tab:mv-uci-prec}
\footnotesize
\begin{tabular}{cccccccc}
\toprule
Dataset & SVM-2K & MvSVM-2C & MvLSSVC-2C & Wave-MvSVM & MVASY-BX & Mv$SL_{0/1}$-SVM & Mv$L_{\mathrm{ht}}$-SVM \\
\midrule
Balance    & 94.22$\pm$3.48 & 95.43$\pm$4.28 & 94.00$\pm$3.33 & 91.49$\pm$2.93 & 95.75$\pm$3.78 & 95.30$\pm$2.83 & \textbf{97.20$\pm$2.35} \\
Bankruptcy & 99.00$\pm$2.00 & 98.14$\pm$2.28 & 99.05$\pm$1.90 & \textbf{100.00$\pm$0.00} & 97.14$\pm$2.34 & \textbf{100.00$\pm$0.00} & \textbf{100.00$\pm$0.00} \\
BUPA       & 69.87$\pm$7.31 & 53.54$\pm$3.74 & 70.79$\pm$7.07 & 54.41$\pm$4.09 & 70.07$\pm$5.12 & 70.01$\pm$5.51 & \textbf{70.93$\pm$3.12} \\
German     & 71.04$\pm$0.70 & 72.65$\pm$2.72 & 77.84$\pm$0.73 & 78.69$\pm$0.77 & 70.00$\pm$0.00 & \textbf{78.76$\pm$1.31} & 77.48$\pm$0.95 \\
Haberman   & 73.53$\pm$0.48 & 73.53$\pm$0.48 & 73.60$\pm$0.80 & 73.60$\pm$0.21 & 73.53$\pm$0.48 & \textbf{75.12$\pm$1.97} & 74.02$\pm$0.91 \\
Heart      & 84.07$\pm$4.34 & \textbf{86.57$\pm$4.57} & 85.58$\pm$5.20 & 80.20$\pm$6.13 & 85.24$\pm$3.36 & 85.73$\pm$7.55 & 85.59$\pm$3.25 \\
Hepatitis  & 85.48$\pm$13.31 & 91.67$\pm$10.54 & 89.64$\pm$9.48 & 95.56$\pm$8.89 & 40.00$\pm$48.99 & 94.29$\pm$7.00 & \textbf{100.00$\pm$0.00} \\
Iris       & \textbf{100.00$\pm$0.00} & \textbf{100.00$\pm$0.00} & \textbf{100.00$\pm$0.00} & \textbf{100.00$\pm$0.00} & \textbf{100.00$\pm$0.00} & \textbf{100.00$\pm$0.00} & \textbf{100.00$\pm$0.00} \\
Vote       & 91.04$\pm$6.97 & \textbf{93.89$\pm$5.56} & 92.38$\pm$4.86 & 92.79$\pm$6.06 & 89.10$\pm$7.78 & 92.47$\pm$4.85 & 93.01$\pm$5.16 \\
Wine       & 92.03$\pm$6.90 & 91.97$\pm$4.60 & 98.46$\pm$3.08 & 88.83$\pm$6.29 & 98.33$\pm$3.33 & \textbf{100.00$\pm$0.00} & \textbf{100.00$\pm$0.00} \\
\midrule
Avg.~Prec. & 86.03 & 85.74 & 88.13 & 85.56 & 81.92 & 89.17 & \textbf{89.82} \\
Avg.~Rank  & 5.40  & 4.30  & 3.95  & 4.35  & 5.20  & 2.65  & \textbf{2.15}  \\
\bottomrule
\end{tabular}
\end{table*}

\begin{table*}[!htbp]
\centering
\caption{Recall rates (\%) with standard deviations for the UCI multi-view datasets.}
\label{tab:mv-uci-rec}
\footnotesize
\begin{tabular}{cccccccc}
\toprule
Dataset & SVM-2K & MvSVM-2C & MvLSSVC-2C & Wave-MvSVM & MVASY-BX & Mv$SL_{0/1}$-SVM & Mv$L_{\mathrm{ht}}$-SVM \\
\midrule
Balance    & 93.39$\pm$3.38 & 94.77$\pm$3.52 & 95.50$\pm$2.81 & 93.04$\pm$2.71 & 95.84$\pm$3.54 & \textbf{97.22$\pm$2.09} & 95.84$\pm$2.58 \\
Bankruptcy & 89.78$\pm$7.88 & 99.05$\pm$1.90 & 99.05$\pm$1.90 & \textbf{100.00$\pm$0.00} & 97.14$\pm$3.81 & \textbf{100.00$\pm$0.00} & \textbf{100.00$\pm$0.00} \\
BUPA       & 63.45$\pm$4.14 & \textbf{92.41$\pm$6.69} & 62.07$\pm$3.78 & 60.00$\pm$10.37 & 49.66$\pm$6.01 & 71.03$\pm$7.10 & 65.52$\pm$3.08 \\
German     & 98.43$\pm$1.23 & 97.14$\pm$2.47 & 89.86$\pm$1.46 & 83.86$\pm$2.62 & \textbf{100.00$\pm$0.00} & 87.14$\pm$3.50 & 89.71$\pm$3.82 \\
Haberman   & \textbf{100.00$\pm$0.00} & \textbf{100.00$\pm$0.00} & 99.11$\pm$1.09 & 99.11$\pm$1.09 & \textbf{100.00$\pm$0.00} & 97.33$\pm$3.56 & \textbf{100.00$\pm$0.00} \\
Heart      & 86.00$\pm$5.33 & 88.00$\pm$4.00 & 88.00$\pm$2.67 & 74.00$\pm$6.46 & 88.67$\pm$4.52 & \textbf{90.00$\pm$4.71} & \textbf{90.00$\pm$5.96} \\
Hepatitis  & 84.29$\pm$10.61 & 80.48$\pm$16.45 & 87.14$\pm$12.38 & 90.48$\pm$7.82 & 11.90$\pm$16.77 & \textbf{93.81$\pm$7.62} & 90.48$\pm$7.82 \\
Iris       & \textbf{100.00$\pm$0.00} & \textbf{100.00$\pm$0.00} & \textbf{100.00$\pm$0.00} & \textbf{100.00$\pm$0.00} & \textbf{100.00$\pm$0.00} & \textbf{100.00$\pm$0.00} & \textbf{100.00$\pm$0.00} \\
Vote       & 92.82$\pm$4.91 & 95.22$\pm$4.89 & 97.01$\pm$3.81 & 90.48$\pm$6.29 & 91.62$\pm$4.06 & \textbf{98.20$\pm$2.42} & 97.61$\pm$2.93 \\
Wine       & 89.85$\pm$3.27 & 93.18$\pm$6.28 & 95.00$\pm$6.67 & 89.70$\pm$8.43 & 91.52$\pm$5.28 & \textbf{100.00$\pm$0.00} & \textbf{100.00$\pm$0.00} \\
\midrule
Avg.~Rec.  & 89.80 & 94.03 & 91.27 & 88.07 & 82.64 & \textbf{93.47} & 92.92 \\
Avg.~Rank  & 4.75  & 3.85  & 4.15  & 5.50  & 4.40  & \textbf{2.65}  & 2.70  \\
\bottomrule
\end{tabular}
\end{table*}

\begin{table*}[!htbp]
\centering
\caption{F1-score rates (\%) with standard deviations for the UCI multi-view datasets.}
\label{tab:mv-uci-f1}
\footnotesize
\begin{tabular}{cccccccc}
\toprule
Dataset & SVM-2K & MvSVM-2C & MvLSSVC-2C & Wave-MvSVM & MVASY-BX & Mv$SL_{0/1}$-SVM & Mv$L_{\mathrm{ht}}$-SVM \\
\midrule
Balance    & 93.72$\pm$2.05 & 94.97$\pm$1.75 & 94.68$\pm$1.81 & 92.25$\pm$2.60 & 95.68$\pm$1.44 & 96.23$\pm$2.07 & \textbf{96.50$\pm$2.15} \\
Bankruptcy & 93.97$\pm$4.49 & 98.58$\pm$1.90 & 99.05$\pm$1.90 & \textbf{100.00$\pm$0.00} & 97.12$\pm$2.83 & \textbf{100.00$\pm$0.00} & \textbf{100.00$\pm$0.00} \\
BUPA       & 66.29$\pm$4.38 & 67.56$\pm$2.40 & 66.03$\pm$4.75 & 56.70$\pm$6.01 & 57.91$\pm$5.25 & \textbf{70.23$\pm$4.44} & 68.10$\pm$2.86 \\
German     & 82.52$\pm$0.39 & 83.05$\pm$1.06 & \textbf{83.42$\pm$0.97} & 81.17$\pm$1.31 & 82.35$\pm$0.00 & 82.69$\pm$1.35 & 83.09$\pm$1.18 \\
Haberman   & 84.75$\pm$0.32 & 84.75$\pm$0.32 & 84.47$\pm$0.74 & 84.47$\pm$0.54 & 84.75$\pm$0.32 & 84.71$\pm$0.25 & \textbf{85.07$\pm$0.60} \\
Heart      & 84.85$\pm$3.09 & 87.15$\pm$2.70 & 86.63$\pm$2.34 & 76.76$\pm$4.61 & 86.90$\pm$3.82 & 87.54$\pm$4.47 & \textbf{87.60$\pm$3.23} \\
Hepatitis  & 83.47$\pm$5.46 & 84.18$\pm$11.14 & 87.31$\pm$6.02 & 92.33$\pm$4.15 & 17.71$\pm$23.86 & 93.79$\pm$5.53 & \textbf{94.83$\pm$4.26} \\
Iris       & \textbf{100.00$\pm$0.00} & \textbf{100.00$\pm$0.00} & \textbf{100.00$\pm$0.00} & \textbf{100.00$\pm$0.00} & \textbf{100.00$\pm$0.00} & \textbf{100.00$\pm$0.00} & \textbf{100.00$\pm$0.00} \\
Vote       & 91.82$\pm$5.22 & 94.42$\pm$3.94 & 94.54$\pm$3.22 & 91.29$\pm$2.98 & 90.17$\pm$4.78 & \textbf{95.17$\pm$2.88} & 95.15$\pm$2.88 \\
Wine       & 90.69$\pm$2.80 & 92.37$\pm$3.33 & 96.51$\pm$3.37 & 88.86$\pm$4.03 & 94.69$\pm$3.25 & \textbf{100.00$\pm$0.00} & \textbf{100.00$\pm$0.00} \\
\midrule
Avg.~F1    & 87.21 & 88.70 & 89.26 & 86.38 & 80.73 & \textbf{91.04} & 91.03 \\
Avg.~Rank  & 5.20  & 3.90  & 4.05  & 5.65  & 5.00  & 2.45  & \textbf{1.75}  \\
\bottomrule
\end{tabular}
\end{table*}

\begin{table*}[!htbp]
\centering
\caption{Running time (s) for the UCI multi-view datasets.}
\label{tab:mv-uci-time}
\footnotesize
\begin{tabular}{cccccccc}
\toprule
Dataset & SVM-2K & MvSVM-2C & MvLSSVC-2C & Wave-MvSVM & MVASY-BX & Mv$SL_{0/1}$-SVM & Mv$L_{\mathrm{ht}}$-SVM \\
\midrule
Balance    & 7.22   & 53.32   & \textbf{0.54}  & 15.54  & 3.11   & 2.79   & 2.13   \\
Bankruptcy & 0.36   & 3.65    & \textbf{0.08}  & 7.89   & 2.80   & 1.09   & 0.69   \\
BUPA       & 2.04   & 7.53    & \textbf{0.11}  & 10.06  & 3.52   & 2.36   & 1.08   \\
German     & 16.70  & 222.64  & \textbf{0.30}  & 37.18  & 18.18  & 4.00   & 3.89   \\
Haberman   & 0.54   & 2.10    & \textbf{0.05}  & 8.77   & 2.21   & 2.18   & 1.97   \\
Heart      & 0.47   & 6.26    & \textbf{0.22}  & 6.42   & 3.82   & 1.25   & 1.42   \\
Hepatitis  & 0.22   & 1.41    & \textbf{0.02}  & 7.03   & 4.09   & 0.79   & 0.50   \\
Iris       & 0.23   & 1.91    & \textbf{0.04}  & 6.04   & 1.34   & 0.48   & 0.39   \\
Vote       & 2.45   & 88.65   & \textbf{0.16}  & 11.77  & 9.54   & 2.32   & 1.79   \\
Wine       & 0.11   & 2.02    & \textbf{0.11}  & 5.71   & 1.64   & 0.90   & 0.96   \\
\midrule
Avg.~Time  & 3.03   & 38.95   & \textbf{0.16}  & 11.64  & 5.03   & 1.82   & 1.48   \\
Avg.~Rank  & 2.75   & 6.00    & \textbf{1.05}  & 6.70   & 5.10   & 3.60   & 2.80   \\
\bottomrule
\end{tabular}
\end{table*}

\subsubsection{Results on STL-10 datasets}
\label{subsubsec:mv-stl10}

Tables~\ref{tab:stl10-acc}--\ref{tab:stl10-f1} report the 
results across all 45 STL-10 binary tasks. 
Mv$L_{\mathrm{ht}}$-SVM leads with an average accuracy of 
88.60\%, followed by MvLSSVC-2C (87.88\%), MvSVM-2C (86.70\%), 
Mv$SL_{0/1}$-SVM (86.27\%), Wave-MvSVM (86.18\%), MVASY-BX 
(86.17\%), and SVM-2K (85.33\%). SVM-2K, which enforces only 
the consensus constraint, consistently trails the 2C-complete 
models, underscoring the value of complementarity. On 
particularly challenging pairs such as cat vs.\ dog and 
dog vs.\ monkey, Mv$L_{\mathrm{ht}}$-SVM retains a distinct 
accuracy edge, suggesting that the bounded $L_{\mathrm{ht}}$ 
loss is especially beneficial when class boundaries are 
ambiguous.

Mv$L_{\mathrm{ht}}$-SVM also ranks first in average precision 
(89.18\%), recall (87.84\%), and F1-score (88.29\%), 
demonstrating well-rounded performance on this large-scale image 
classification benchmark.

\begin{table*}[!htbp]
\centering
\caption{Accuracy rates (\%) with standard deviations for the STL-10 dataset.}
\label{tab:stl10-acc}
{\scriptsize
\begin{tabular}{cccccccc}
\toprule
Datasets & SVM-2K & MvSVM-2C & MvLSSVC-2C & Wave-MvSVM & MVASY-BX & Mv$SL_{0/1}$-SVM & Mv$L_{\mathrm{ht}}$-SVM \\
\midrule
air.\ vs bird.       & 84.40$\pm$2.50 & 89.80$\pm$1.40 & 89.30$\pm$1.50 & 89.20$\pm$2.29 & 90.00$\pm$2.17 & 90.80$\pm$2.46 & \textbf{91.60$\pm$1.16} \\
air.\ vs car.        & 89.20$\pm$0.60 & 91.20$\pm$1.29 & 92.50$\pm$1.38 & 92.20$\pm$1.60 & 90.40$\pm$0.58 & 94.70$\pm$0.93 & \textbf{95.30$\pm$0.68} \\
air.\ vs cat.        & 91.00$\pm$1.48 & 93.60$\pm$0.97 & \textbf{94.50$\pm$0.84} & 93.50$\pm$1.30 & 93.60$\pm$1.36 & 93.80$\pm$1.50 & 94.10$\pm$1.39 \\
air.\ vs deer.       & 93.20$\pm$1.12 & 93.70$\pm$1.36 & \textbf{96.10$\pm$0.97} & 93.70$\pm$0.75 & 95.40$\pm$0.97 & 94.80$\pm$2.66 & 95.70$\pm$0.40 \\
air.\ vs dog.        & 92.60$\pm$0.86 & 92.40$\pm$1.46 & \textbf{95.40$\pm$1.62} & 93.30$\pm$1.08 & 93.80$\pm$0.98 & 94.30$\pm$0.93 & 95.20$\pm$1.50 \\
air.\ vs horse.      & 92.90$\pm$1.11 & 94.20$\pm$0.40 & 96.20$\pm$0.93 & 94.20$\pm$2.89 & 95.00$\pm$0.63 & 95.70$\pm$1.17 & \textbf{96.60$\pm$0.73} \\
air.\ vs monkey.     & 93.40$\pm$0.58 & 97.10$\pm$1.28 & 97.10$\pm$1.56 & 95.40$\pm$1.28 & 95.70$\pm$0.93 & 95.50$\pm$1.30 & \textbf{97.60$\pm$0.49} \\
air.\ vs ship.       & \textbf{83.70$\pm$2.48} & 80.30$\pm$2.79 & 80.40$\pm$2.67 & 79.90$\pm$3.07 & 79.50$\pm$1.26 & 78.80$\pm$1.29 & 80.20$\pm$3.31 \\
air.\ vs truck.      & 89.90$\pm$1.53 & 91.50$\pm$1.00 & 91.70$\pm$1.50 & 84.70$\pm$2.16 & 90.50$\pm$0.89 & 91.10$\pm$2.27 & \textbf{92.40$\pm$2.29} \\
bird.\ vs car.       & 94.60$\pm$1.59 & 95.10$\pm$1.28 & 96.20$\pm$0.98 & 94.90$\pm$1.02 & 94.70$\pm$1.57 & 92.90$\pm$1.50 & \textbf{96.60$\pm$1.07} \\
bird.\ vs cat.       & 73.30$\pm$1.96 & 74.50$\pm$3.79 & 77.40$\pm$2.06 & 71.80$\pm$3.30 & 72.70$\pm$2.16 & 70.90$\pm$3.31 & \textbf{77.60$\pm$2.40} \\
bird.\ vs deer.      & 71.70$\pm$2.14 & 73.00$\pm$1.26 & 76.40$\pm$2.67 & 76.60$\pm$0.97 & \textbf{78.70$\pm$2.42} & 76.60$\pm$3.77 & 76.00$\pm$3.30 \\
bird.\ vs dog.       & 70.80$\pm$1.44 & 70.50$\pm$3.05 & \textbf{72.60$\pm$1.50} & 71.30$\pm$2.50 & 71.30$\pm$2.66 & 71.00$\pm$4.83 & 72.50$\pm$3.10 \\
bird.\ vs horse.     & 78.10$\pm$2.40 & 84.70$\pm$1.50 & 82.10$\pm$1.07 & 81.50$\pm$2.41 & 80.40$\pm$2.27 & 82.60$\pm$1.36 & \textbf{87.20$\pm$1.91} \\
bird.\ vs monkey.    & 71.60$\pm$0.73 & 73.60$\pm$3.07 & 76.20$\pm$2.73 & 75.80$\pm$3.33 & 75.00$\pm$3.32 & 77.90$\pm$1.98 & \textbf{79.20$\pm$3.19} \\
bird.\ vs ship.      & 91.60$\pm$1.96 & 94.00$\pm$2.47 & \textbf{95.40$\pm$1.96} & 91.60$\pm$0.73 & 91.10$\pm$1.56 & 92.80$\pm$1.25 & 93.20$\pm$1.50 \\
bird.\ vs truck.     & 94.80$\pm$1.21 & \textbf{96.20$\pm$0.98} & 93.60$\pm$1.39 & 94.50$\pm$2.70 & 94.20$\pm$1.21 & 92.20$\pm$1.50 & 95.30$\pm$1.17 \\
car.\ vs cat.        & 93.80$\pm$1.72 & 95.40$\pm$1.46 & 95.20$\pm$1.33 & 94.20$\pm$1.47 & 95.20$\pm$1.03 & 94.30$\pm$1.50 & \textbf{95.70$\pm$1.63} \\
car.\ vs deer.       & 92.50$\pm$1.05 & 96.00$\pm$0.89 & 96.70$\pm$0.60 & 95.80$\pm$1.47 & 93.40$\pm$0.86 & 96.70$\pm$0.87 & \textbf{96.80$\pm$1.29} \\
car.\ vs dog.        & 84.20$\pm$2.32 & 94.60$\pm$1.16 & 93.10$\pm$1.53 & 94.60$\pm$1.71 & 92.80$\pm$1.91 & \textbf{95.10$\pm$1.11} & 94.80$\pm$1.86 \\
car.\ vs horse.      & 93.40$\pm$1.02 & 95.40$\pm$1.83 & 96.60$\pm$0.66 & 94.40$\pm$1.36 & 93.50$\pm$1.34 & 93.80$\pm$1.21 & \textbf{97.70$\pm$0.98} \\
car.\ vs monkey.     & 96.00$\pm$1.22 & 96.30$\pm$1.33 & 95.70$\pm$1.21 & 95.90$\pm$1.16 & 96.00$\pm$0.55 & 96.30$\pm$1.40 & \textbf{96.70$\pm$0.93} \\
car.\ vs ship.       & \textbf{89.10$\pm$2.24} & 88.20$\pm$1.36 & 86.60$\pm$3.12 & 87.70$\pm$2.36 & 88.30$\pm$1.91 & 86.70$\pm$3.23 & 88.40$\pm$1.98 \\
car.\ vs truck.      & 85.30$\pm$1.54 & 87.00$\pm$2.43 & 88.30$\pm$1.63 & 84.70$\pm$1.60 & 84.90$\pm$2.62 & 87.80$\pm$2.18 & \textbf{88.60$\pm$2.42} \\
cat.\ vs deer.       & 74.70$\pm$3.20 & 74.50$\pm$2.76 & \textbf{77.40$\pm$2.13} & 72.50$\pm$3.36 & 73.70$\pm$3.20 & 74.60$\pm$4.14 & 76.90$\pm$2.20 \\
cat.\ vs dog.        & 60.10$\pm$4.85 & 62.60$\pm$3.48 & 68.60$\pm$1.28 & 64.30$\pm$1.89 & 66.00$\pm$2.86 & 60.80$\pm$2.77 & \textbf{68.90$\pm$2.33} \\
cat.\ vs horse.      & 79.60$\pm$2.06 & 77.30$\pm$3.14 & 82.90$\pm$1.88 & 79.90$\pm$2.87 & 80.80$\pm$1.47 & 82.60$\pm$2.92 & \textbf{83.30$\pm$1.86} \\
cat.\ vs monkey.     & 72.30$\pm$3.80 & 73.00$\pm$3.18 & 76.60$\pm$3.51 & 73.90$\pm$3.56 & 73.10$\pm$3.60 & 76.00$\pm$4.00 & \textbf{78.30$\pm$4.63} \\
cat.\ vs ship.       & 92.80$\pm$1.44 & 93.80$\pm$2.50 & \textbf{94.20$\pm$1.03} & 92.60$\pm$1.39 & 91.90$\pm$1.77 & 91.50$\pm$1.38 & 93.10$\pm$3.40 \\
cat.\ vs truck.      & 92.90$\pm$1.36 & 94.50$\pm$0.89 & 93.50$\pm$1.84 & 94.30$\pm$0.81 & 92.80$\pm$1.50 & 91.10$\pm$2.03 & \textbf{95.60$\pm$1.93} \\
deer.\ vs dog.       & 74.50$\pm$1.00 & 74.70$\pm$2.14 & 76.00$\pm$2.90 & 72.70$\pm$2.04 & 75.70$\pm$1.72 & 75.10$\pm$2.13 & \textbf{76.60$\pm$2.27} \\
deer.\ vs horse.     & 79.90$\pm$2.60 & 80.10$\pm$2.58 & \textbf{84.20$\pm$2.66} & 77.00$\pm$1.90 & 81.70$\pm$5.69 & 82.10$\pm$2.13 & 83.90$\pm$1.83 \\
deer.\ vs monkey.    & 77.50$\pm$2.35 & 77.60$\pm$3.51 & 82.00$\pm$2.12 & 81.20$\pm$2.23 & 77.20$\pm$1.89 & 80.70$\pm$2.16 & \textbf{82.40$\pm$3.04} \\
deer.\ vs ship.      & 94.90$\pm$1.39 & 96.60$\pm$1.53 & 97.20$\pm$1.60 & 94.70$\pm$1.36 & 94.20$\pm$1.44 & 93.70$\pm$2.77 & \textbf{97.70$\pm$1.75} \\
deer.\ vs truck.     & 95.80$\pm$1.17 & 96.70$\pm$1.29 & 97.60$\pm$0.73 & 94.60$\pm$1.16 & 94.70$\pm$1.44 & 93.10$\pm$1.88 & \textbf{97.90$\pm$0.66} \\
dog.\ vs horse.      & 71.60$\pm$2.40 & 69.50$\pm$2.07 & 73.50$\pm$3.11 & 73.20$\pm$2.25 & 72.30$\pm$3.50 & 72.70$\pm$2.84 & \textbf{73.80$\pm$3.52} \\
dog.\ vs monkey.     & 66.20$\pm$1.86 & 67.20$\pm$3.75 & 69.60$\pm$3.41 & 64.80$\pm$4.23 & 66.60$\pm$1.91 & 62.70$\pm$1.57 & \textbf{70.00$\pm$2.85} \\
dog.\ vs ship.       & \textbf{96.10$\pm$1.50} & 94.00$\pm$1.14 & 93.00$\pm$1.14 & 94.00$\pm$1.22 & 92.20$\pm$1.08 & 91.20$\pm$0.87 & 95.00$\pm$1.38 \\
dog.\ vs truck.      & 93.20$\pm$0.68 & 94.50$\pm$1.05 & 94.60$\pm$1.59 & 94.30$\pm$2.14 & 92.70$\pm$1.75 & \textbf{96.60$\pm$1.39} & 93.90$\pm$1.28 \\
horse.\ vs monkey.   & 74.30$\pm$3.17 & 77.70$\pm$3.01 & \textbf{81.90$\pm$3.34} & 78.50$\pm$2.35 & 76.60$\pm$2.96 & 72.50$\pm$2.39 & \textbf{81.90$\pm$3.47} \\
horse.\ vs ship.     & 93.10$\pm$1.36 & 94.10$\pm$1.39 & \textbf{95.70$\pm$1.36} & 93.90$\pm$1.39 & 92.20$\pm$0.68 & 92.20$\pm$0.81 & 94.60$\pm$2.01 \\
horse.\ vs truck.    & 91.90$\pm$2.24 & 93.60$\pm$0.80 & 91.50$\pm$1.67 & 92.90$\pm$3.01 & 91.20$\pm$1.21 & 91.10$\pm$2.01 & \textbf{96.10$\pm$1.59} \\
monkey.\ vs ship.    & 95.40$\pm$1.98 & 95.60$\pm$1.53 & \textbf{96.90$\pm$1.32} & 95.40$\pm$1.16 & 95.00$\pm$0.84 & 93.10$\pm$1.02 & 96.40$\pm$0.80 \\
monkey.\ vs truck.   & 95.30$\pm$1.29 & 95.60$\pm$1.24 & 94.80$\pm$1.54 & \textbf{95.90$\pm$0.58} & 94.50$\pm$1.10 & 94.30$\pm$2.01 & 94.80$\pm$0.68 \\
ship.\ vs truck.     & 76.50$\pm$2.49 & 79.80$\pm$1.17 & 77.70$\pm$1.78 & 76.20$\pm$3.14 & 76.40$\pm$1.32 & 77.50$\pm$1.58 & \textbf{80.80$\pm$1.54} \\
\midrule
Avg.~Acc.            & 85.33 & 86.70 & 87.88 & 86.18 & 86.17 & 86.27 & \textbf{88.60} \\
Avg.~Rank            & 5.42  & 3.93  & 2.69  & 4.73  & 4.98  & 4.58  & \textbf{1.67}  \\
\bottomrule
\end{tabular}
}
\end{table*}

\begin{table*}[!htbp]
\centering
\caption{Precision rates (\%) with standard deviations for the STL-10 dataset.}
\label{tab:stl10-prec}
{\scriptsize
\begin{tabular}{cccccccc}
\toprule
Datasets & SVM-2K & MvSVM-2C & MvLSSVC-2C & Wave-MvSVM & MVASY-BX & Mv$SL_{0/1}$-SVM & Mv$L_{\mathrm{ht}}$-SVM \\
\midrule
air.\ vs bird. & 87.26$\pm$2.74 & \textbf{92.93$\pm$2.20} & 90.49$\pm$2.89 & 85.58$\pm$2.69 & 92.28$\pm$3.34 & 91.07$\pm$3.42 & 92.52$\pm$2.40 \\
air.\ vs car. & \textbf{98.28$\pm$0.98} & 91.41$\pm$2.07 & 95.89$\pm$0.85 & 94.14$\pm$1.45 & 96.99$\pm$1.15 & 95.91$\pm$1.40 & 95.58$\pm$0.99 \\
air.\ vs cat. & 94.21$\pm$1.61 & 97.68$\pm$2.29 & 97.26$\pm$1.36 & 94.20$\pm$2.50 & \textbf{98.11$\pm$2.29} & 96.85$\pm$2.52 & 94.63$\pm$2.20 \\
air.\ vs deer. & 96.18$\pm$1.33 & 95.69$\pm$2.12 & 98.12$\pm$0.79 & 95.11$\pm$1.63 & \textbf{98.93$\pm$0.66} & 96.85$\pm$2.24 & 94.94$\pm$1.50 \\
air.\ vs dog. & 94.97$\pm$1.24 & 95.33$\pm$2.08 & 96.38$\pm$2.49 & 92.00$\pm$2.67 & \textbf{97.44$\pm$1.44} & 95.69$\pm$1.33 & 96.37$\pm$2.10 \\
air.\ vs horse. & 96.82$\pm$2.03 & 97.26$\pm$1.33 & 97.95$\pm$1.55 & 94.77$\pm$2.85 & \textbf{98.73$\pm$1.00} & 95.97$\pm$0.66 & 95.34$\pm$1.09 \\
air.\ vs monkey. & 98.02$\pm$0.80 & 98.15$\pm$0.42 & 97.98$\pm$1.81 & 96.34$\pm$1.36 & \textbf{98.51$\pm$0.52} & 95.25$\pm$1.44 & 98.39$\pm$0.77 \\
air.\ vs ship. & 86.08$\pm$2.50 & \textbf{88.73$\pm$3.26} & 86.74$\pm$3.18 & 81.84$\pm$2.70 & 84.70$\pm$3.88 & 78.97$\pm$1.31 & 84.35$\pm$1.59 \\
air.\ vs truck. & 92.50$\pm$2.90 & 91.94$\pm$3.14 & \textbf{93.09$\pm$2.96} & 89.31$\pm$3.73 & 92.28$\pm$3.25 & 90.76$\pm$4.31 & 90.23$\pm$3.15 \\
bird.\ vs car. & 93.73$\pm$1.30 & 94.36$\pm$2.07 & 95.00$\pm$1.92 & \textbf{97.50$\pm$1.68} & 92.46$\pm$2.20 & 90.58$\pm$2.02 & 95.71$\pm$1.53 \\
bird.\ vs cat. & 83.80$\pm$3.60 & 79.70$\pm$5.06 & 81.68$\pm$3.61 & 74.33$\pm$2.45 & 80.79$\pm$1.92 & 71.97$\pm$4.55 & \textbf{84.19$\pm$3.29} \\
bird.\ vs deer. & \textbf{95.04$\pm$1.64} & 90.24$\pm$2.65 & 86.71$\pm$2.98 & 88.08$\pm$3.35 & 89.20$\pm$3.75 & 76.96$\pm$5.53 & 81.32$\pm$8.18 \\
bird.\ vs dog. & \textbf{90.43$\pm$3.88} & 82.33$\pm$3.90 & 78.16$\pm$1.56 & 74.20$\pm$1.41 & 78.05$\pm$4.37 & 76.06$\pm$8.21 & 82.18$\pm$2.56 \\
bird.\ vs horse. & 89.76$\pm$1.20 & 84.53$\pm$2.87 & 88.90$\pm$0.93 & 86.85$\pm$1.59 & \textbf{91.96$\pm$1.43} & 85.39$\pm$5.70 & 89.99$\pm$2.34 \\
bird.\ vs monkey. & \textbf{92.41$\pm$2.57} & 85.73$\pm$4.47 & 78.15$\pm$3.18 & 74.60$\pm$5.06 & 81.84$\pm$4.11 & 80.15$\pm$3.02 & 84.37$\pm$3.58 \\
bird.\ vs ship. & 89.25$\pm$1.79 & 93.23$\pm$3.24 & 93.69$\pm$2.27 & \textbf{94.07$\pm$0.48} & 86.91$\pm$3.04 & 91.73$\pm$2.30 & 92.07$\pm$1.90 \\
bird.\ vs truck. & 93.46$\pm$1.72 & 95.49$\pm$1.24 & 91.80$\pm$1.60 & \textbf{97.22$\pm$2.20} & 92.20$\pm$1.50 & 88.99$\pm$2.14 & 92.98$\pm$0.98 \\
car.\ vs cat. & 96.19$\pm$0.56 & 97.15$\pm$2.20 & 97.69$\pm$1.53 & 95.30$\pm$1.47 & \textbf{98.93$\pm$1.14} & 95.40$\pm$2.83 & 94.25$\pm$3.83 \\
car.\ vs deer. & 97.33$\pm$0.52 & 98.34$\pm$1.06 & \textbf{98.96$\pm$0.66} & 98.12$\pm$1.21 & 96.81$\pm$1.28 & 95.63$\pm$2.67 & 94.54$\pm$2.19 \\
car.\ vs dog. & 76.23$\pm$2.60 & \textbf{97.68$\pm$1.17} & 96.77$\pm$0.55 & 95.21$\pm$2.55 & 97.58$\pm$1.24 & 96.92$\pm$1.60 & 93.96$\pm$2.39 \\
car.\ vs horse. & 96.99$\pm$1.19 & 95.99$\pm$2.45 & 98.16$\pm$1.34 & 94.04$\pm$0.92 & 96.81$\pm$1.42 & 95.43$\pm$1.06 & \textbf{99.18$\pm$0.76} \\
car.\ vs monkey. & 98.35$\pm$1.36 & 97.73$\pm$1.21 & 98.74$\pm$1.01 & 95.46$\pm$1.42 & \textbf{99.57$\pm$0.52} & 95.47$\pm$3.28 & 96.81$\pm$1.40 \\
car.\ vs ship. & 84.39$\pm$2.38 & 87.64$\pm$3.35 & 85.61$\pm$8.44 & \textbf{88.04$\pm$3.41} & 84.46$\pm$2.72 & 87.85$\pm$6.87 & 87.34$\pm$3.07 \\
car.\ vs truck. & \textbf{90.04$\pm$2.63} & 88.94$\pm$4.47 & 89.43$\pm$2.84 & 84.81$\pm$2.08 & 89.38$\pm$2.51 & 89.46$\pm$3.17 & 89.97$\pm$3.32 \\
cat.\ vs deer. & 79.34$\pm$4.50 & 77.22$\pm$3.26 & \textbf{80.95$\pm$2.92} & 73.70$\pm$3.49 & \textbf{80.95$\pm$2.32} & 79.95$\pm$5.22 & 79.82$\pm$3.26 \\
cat.\ vs dog. & 58.53$\pm$3.78 & 62.14$\pm$2.57 & 67.81$\pm$0.49 & 66.19$\pm$2.43 & 63.17$\pm$2.18 & 62.34$\pm$3.43 & \textbf{68.31$\pm$1.69} \\
cat.\ vs horse. & 76.44$\pm$2.29 & 77.22$\pm$2.78 & 83.17$\pm$2.52 & 82.11$\pm$3.05 & 80.37$\pm$3.24 & 82.02$\pm$3.29 & \textbf{83.52$\pm$1.79} \\
cat.\ vs monkey. & 72.90$\pm$3.30 & 73.66$\pm$2.02 & 77.52$\pm$2.90 & 74.62$\pm$2.27 & 75.07$\pm$2.73 & 76.77$\pm$3.32 & \textbf{79.27$\pm$4.53} \\
cat.\ vs ship. & 90.57$\pm$1.88 & 91.75$\pm$3.56 & 92.05$\pm$1.48 & 91.75$\pm$3.08 & 87.13$\pm$2.21 & \textbf{92.32$\pm$2.28} & 90.68$\pm$5.53 \\
cat.\ vs truck. & 91.41$\pm$2.49 & 93.23$\pm$1.25 & 91.84$\pm$2.97 & \textbf{97.30$\pm$2.31} & 89.24$\pm$1.92 & 93.25$\pm$4.08 & 95.81$\pm$2.43 \\
deer.\ vs dog. & 71.01$\pm$0.73 & \textbf{73.81$\pm$2.14} & 72.83$\pm$3.13 & 71.39$\pm$1.68 & 72.98$\pm$2.20 & 71.11$\pm$2.47 & 73.55$\pm$2.37 \\
deer.\ vs horse. & 75.57$\pm$2.86 & 78.72$\pm$3.58 & 82.12$\pm$3.26 & 78.43$\pm$3.52 & 81.18$\pm$8.82 & 81.56$\pm$3.22 & \textbf{82.56$\pm$3.04} \\
deer.\ vs monkey. & 74.35$\pm$2.19 & 74.86$\pm$3.97 & \textbf{78.59$\pm$2.10} & 77.90$\pm$2.40 & 74.88$\pm$2.06 & 78.12$\pm$2.48 & 78.30$\pm$3.29 \\
deer.\ vs ship. & 92.46$\pm$1.47 & 94.50$\pm$2.11 & 95.78$\pm$2.40 & 95.53$\pm$1.48 & 90.82$\pm$2.10 & 96.63$\pm$3.53 & \textbf{97.79$\pm$1.61} \\
deer.\ vs truck. & 93.92$\pm$2.07 & 95.72$\pm$1.77 & 96.31$\pm$0.93 & 95.17$\pm$1.61 & 91.58$\pm$2.93 & 95.21$\pm$3.14 & \textbf{97.81$\pm$0.95} \\
dog.\ vs horse. & 71.56$\pm$2.57 & 68.93$\pm$1.97 & 73.34$\pm$3.36 & 73.05$\pm$3.01 & \textbf{75.90$\pm$5.30} & 72.62$\pm$3.14 & 74.10$\pm$4.03 \\
dog.\ vs monkey. & 68.02$\pm$2.13 & 68.18$\pm$3.73 & 71.05$\pm$4.91 & 61.59$\pm$3.26 & 67.61$\pm$2.05 & 65.67$\pm$5.54 & \textbf{73.24$\pm$5.07} \\
dog.\ vs ship. & \textbf{96.02$\pm$1.75} & 93.02$\pm$2.07 & 91.24$\pm$1.98 & 94.58$\pm$1.73 & 87.99$\pm$1.75 & 94.65$\pm$1.69 & 93.46$\pm$1.58 \\
dog.\ vs truck. & 91.71$\pm$1.05 & 93.94$\pm$2.17 & 92.29$\pm$2.43 & 95.66$\pm$1.70 & 88.81$\pm$2.50 & \textbf{95.89$\pm$1.87} & 93.13$\pm$1.20 \\
horse.\ vs monkey. & 81.91$\pm$3.16 & 79.53$\pm$3.91 & \textbf{84.41$\pm$4.12} & 78.48$\pm$4.00 & 75.64$\pm$3.41 & 78.44$\pm$4.60 & 84.31$\pm$3.41 \\
horse.\ vs ship. & 89.78$\pm$2.43 & 93.05$\pm$2.30 & 93.75$\pm$2.28 & 93.75$\pm$2.75 & 87.85$\pm$1.25 & 91.28$\pm$1.58 & \textbf{98.11$\pm$1.67} \\
horse.\ vs truck. & 89.22$\pm$2.81 & 92.50$\pm$2.24 & 88.15$\pm$2.65 & 93.25$\pm$3.84 & 86.82$\pm$1.77 & 93.56$\pm$1.89 & \textbf{94.14$\pm$2.61} \\
monkey.\ vs ship. & 93.87$\pm$2.69 & 93.90$\pm$2.31 & 95.22$\pm$2.14 & 93.84$\pm$1.60 & 91.52$\pm$1.14 & 95.13$\pm$2.51 & \textbf{95.67$\pm$0.98} \\
monkey.\ vs truck. & 92.86$\pm$2.11 & 95.09$\pm$1.81 & 92.85$\pm$2.92 & \textbf{96.77$\pm$1.27} & 90.87$\pm$1.86 & 94.16$\pm$4.29 & 95.22$\pm$1.76 \\
ship.\ vs truck. & \textbf{86.23$\pm$3.87} & 80.45$\pm$1.53 & 79.83$\pm$3.15 & 77.78$\pm$1.78 & 79.02$\pm$1.88 & 79.50$\pm$1.43 & 82.93$\pm$3.16 \\
\midrule
Avg.~Prec. & 87.76 & 88.21 & 88.77 & 87.20 & 87.43 & 87.23 & \textbf{89.18} \\
Avg.~Rank & 4.44 & 3.87 & 3.20 & 4.73 & 4.24 & 4.42 & \textbf{3.09} \\
\bottomrule
\end{tabular}
}
\end{table*}

\begin{table*}[!htbp]
\centering
\caption{Recall rates (\%) with standard deviations for the STL-10 dataset.}
\label{tab:stl10-rec}
{\scriptsize
\begin{tabular}{cccccccc}
\toprule
Datasets & SVM-2K & MvSVM-2C & MvLSSVC-2C & Wave-MvSVM & MVASY-BX & Mv$SL_{0/1}$-SVM & Mv$L_{\mathrm{ht}}$-SVM \\
\midrule
air.\ vs bird. & 80.60$\pm$3.72 & 86.20$\pm$1.33 & 88.00$\pm$2.68 & \textbf{94.40$\pm$3.01} & 87.40$\pm$1.20 & 90.60$\pm$2.65 & 90.60$\pm$0.80 \\
air.\ vs car. & 79.80$\pm$0.75 & 91.00$\pm$1.10 & 88.80$\pm$2.14 & 90.00$\pm$2.10 & 83.40$\pm$1.02 & 93.40$\pm$0.80 & \textbf{95.00$\pm$0.63} \\
air.\ vs cat. & 87.40$\pm$2.58 & 89.40$\pm$1.85 & 91.60$\pm$1.36 & 92.80$\pm$1.94 & 89.00$\pm$3.03 & 90.60$\pm$1.20 & \textbf{93.60$\pm$3.01} \\
air.\ vs deer. & 90.00$\pm$2.28 & 91.60$\pm$2.73 & 94.00$\pm$1.41 & 92.20$\pm$2.32 & 91.80$\pm$1.94 & 92.60$\pm$3.72 & \textbf{96.60$\pm$1.85} \\
air.\ vs dog. & 90.00$\pm$2.19 & 89.20$\pm$1.60 & 94.40$\pm$1.36 & \textbf{95.00$\pm$2.10} & 90.00$\pm$2.19 & 92.80$\pm$1.17 & 94.00$\pm$2.76 \\
air.\ vs horse. & 88.80$\pm$2.93 & 91.00$\pm$1.79 & 94.40$\pm$1.02 & 93.60$\pm$4.08 & 91.20$\pm$2.04 & 95.40$\pm$1.85 & \textbf{98.00$\pm$0.63} \\
air.\ vs monkey. & 88.60$\pm$1.36 & 96.00$\pm$2.28 & 96.20$\pm$1.60 & 94.40$\pm$1.74 & 92.80$\pm$1.72 & 95.80$\pm$2.14 & \textbf{96.80$\pm$1.33} \\
air.\ vs ship. & \textbf{80.40$\pm$3.20} & 69.40$\pm$3.38 & 71.80$\pm$3.31 & 76.80$\pm$4.45 & 72.40$\pm$4.03 & 78.60$\pm$4.18 & 74.20$\pm$7.73 \\
air.\ vs truck. & 87.00$\pm$3.46 & 91.20$\pm$2.99 & 90.20$\pm$1.17 & 79.00$\pm$1.79 & 88.60$\pm$1.96 & 91.80$\pm$0.98 & \textbf{95.20$\pm$1.72} \\
bird.\ vs car. & 95.60$\pm$2.65 & 96.00$\pm$2.28 & \textbf{97.60$\pm$1.85} & 92.20$\pm$1.94 & 97.40$\pm$1.96 & 95.80$\pm$1.17 & \textbf{97.60$\pm$1.36} \\
bird.\ vs cat. & 58.00$\pm$4.43 & 66.20$\pm$6.24 & \textbf{71.00$\pm$5.18} & 66.40$\pm$5.68 & 59.60$\pm$4.76 & 69.60$\pm$8.09 & 68.20$\pm$5.42 \\
bird.\ vs deer. & 45.80$\pm$4.45 & 51.80$\pm$4.71 & 62.40$\pm$4.80 & 61.80$\pm$3.76 & 65.60$\pm$6.12 & \textbf{77.00$\pm$6.07} & 69.80$\pm$9.06 \\
bird.\ vs dog. & 46.80$\pm$4.31 & 52.20$\pm$4.96 & 62.80$\pm$3.97 & \textbf{65.20$\pm$5.56} & 59.80$\pm$6.40 & 63.20$\pm$9.66 & 57.60$\pm$8.11 \\
bird.\ vs horse. & 63.40$\pm$4.72 & \textbf{85.20$\pm$4.17} & 73.40$\pm$3.01 & 74.20$\pm$4.02 & 66.60$\pm$4.13 & 79.80$\pm$8.26 & 83.80$\pm$3.82 \\
bird.\ vs monkey. & 47.20$\pm$2.93 & 56.80$\pm$6.05 & 72.80$\pm$3.49 & \textbf{79.20$\pm$2.40} & 64.40$\pm$6.05 & 74.40$\pm$3.67 & 71.80$\pm$5.31 \\
bird.\ vs ship. & 94.60$\pm$2.65 & 95.00$\pm$3.16 & \textbf{97.40$\pm$2.65} & 88.80$\pm$1.47 & 97.00$\pm$2.00 & 94.20$\pm$2.99 & 94.60$\pm$2.58 \\
bird.\ vs truck. & 96.40$\pm$2.42 & 97.00$\pm$1.67 & 95.80$\pm$2.71 & 91.60$\pm$3.50 & 96.60$\pm$1.96 & 96.40$\pm$2.73 & \textbf{98.00$\pm$1.41} \\
car.\ vs cat. & 91.20$\pm$3.25 & 93.60$\pm$2.24 & 92.60$\pm$1.85 & 93.00$\pm$2.19 & 91.40$\pm$1.85 & 93.20$\pm$1.94 & \textbf{97.60$\pm$1.62} \\
car.\ vs deer. & 87.40$\pm$2.15 & 93.60$\pm$2.06 & 94.40$\pm$1.02 & 93.40$\pm$2.58 & 89.80$\pm$2.32 & 98.00$\pm$1.41 & \textbf{99.40$\pm$0.80} \\
car.\ vs dog. & \textbf{99.60$\pm$0.49} & 91.40$\pm$2.65 & 89.20$\pm$3.66 & 94.00$\pm$2.10 & 87.80$\pm$3.87 & 93.20$\pm$2.32 & 95.80$\pm$1.60 \\
car.\ vs horse. & 89.60$\pm$2.06 & 94.80$\pm$1.60 & 95.00$\pm$0.63 & 94.80$\pm$2.04 & 90.00$\pm$2.97 & 92.00$\pm$1.55 & \textbf{96.20$\pm$1.72} \\
car.\ vs monkey. & 93.60$\pm$2.50 & 94.80$\pm$1.60 & 92.60$\pm$2.50 & 96.40$\pm$1.36 & 92.40$\pm$1.02 & \textbf{97.40$\pm$2.50} & 96.60$\pm$1.02 \\
car.\ vs ship. & \textbf{96.00$\pm$2.19} & 89.20$\pm$1.72 & 90.00$\pm$6.23 & 87.40$\pm$1.85 & 94.00$\pm$0.63 & 86.80$\pm$10.17 & 90.00$\pm$3.16 \\
car.\ vs truck. & 79.60$\pm$4.72 & 84.80$\pm$1.72 & \textbf{87.00$\pm$2.00} & 84.60$\pm$1.85 & 79.20$\pm$3.66 & 85.80$\pm$1.47 & \textbf{87.00$\pm$2.10} \\
cat.\ vs deer. & 67.00$\pm$3.63 & 69.60$\pm$4.59 & 71.80$\pm$3.31 & 70.00$\pm$5.02 & 62.00$\pm$7.24 & 65.80$\pm$4.66 & \textbf{72.20$\pm$2.71} \\
cat.\ vs dog. & 69.40$\pm$8.50 & 64.00$\pm$7.69 & 70.80$\pm$4.07 & 58.80$\pm$4.02 & \textbf{77.00$\pm$8.22} & 55.00$\pm$2.28 & 70.40$\pm$4.76 \\
cat.\ vs horse. & \textbf{85.80$\pm$4.92} & 77.60$\pm$6.56 & 82.60$\pm$2.58 & 76.60$\pm$5.75 & 82.00$\pm$5.10 & 83.60$\pm$3.01 & 83.00$\pm$3.35 \\
cat.\ vs monkey. & 70.80$\pm$5.53 & 71.40$\pm$5.68 & 74.80$\pm$4.87 & 72.20$\pm$6.94 & 70.00$\pm$12.99 & 74.40$\pm$5.82 & \textbf{76.60$\pm$5.39} \\
cat.\ vs ship. & 95.60$\pm$1.96 & 96.40$\pm$1.85 & 96.80$\pm$1.83 & 93.80$\pm$1.83 & \textbf{98.40$\pm$2.06} & 90.60$\pm$1.85 & 96.60$\pm$2.42 \\
cat.\ vs truck. & 94.80$\pm$1.47 & 96.00$\pm$1.79 & 95.60$\pm$1.20 & 91.20$\pm$1.17 & \textbf{97.40$\pm$2.58} & 89.00$\pm$5.80 & 95.40$\pm$1.62 \\
deer.\ vs dog. & 82.80$\pm$2.04 & 76.60$\pm$2.58 & 83.20$\pm$2.64 & 75.80$\pm$4.45 & 81.80$\pm$2.64 & \textbf{84.80$\pm$2.79} & 83.20$\pm$3.37 \\
deer.\ vs horse. & \textbf{88.60$\pm$3.07} & 82.80$\pm$3.87 & 87.60$\pm$3.20 & 74.80$\pm$2.04 & 84.80$\pm$2.56 & 83.20$\pm$2.56 & 86.20$\pm$3.06 \\
deer.\ vs monkey. & 84.00$\pm$2.45 & 83.40$\pm$2.33 & 88.00$\pm$2.10 & 87.20$\pm$1.60 & 82.00$\pm$3.85 & 85.40$\pm$2.06 & \textbf{89.80$\pm$2.32} \\
deer.\ vs ship. & 97.80$\pm$2.04 & \textbf{99.00$\pm$1.26} & 98.80$\pm$0.75 & 93.80$\pm$1.94 & 98.40$\pm$0.80 & 90.80$\pm$6.08 & 97.60$\pm$2.06 \\
deer.\ vs truck. & 98.00$\pm$0.89 & 97.80$\pm$1.17 & \textbf{99.00$\pm$0.63} & 94.00$\pm$1.67 & 98.60$\pm$1.36 & 91.00$\pm$4.94 & 98.00$\pm$0.63 \\
dog.\ vs horse. & 71.80$\pm$3.82 & 71.00$\pm$2.45 & \textbf{74.00$\pm$4.29} & 73.80$\pm$2.93 & 66.00$\pm$5.33 & 73.00$\pm$3.41 & 73.40$\pm$4.84 \\
dog.\ vs monkey. & 61.20$\pm$2.86 & 64.40$\pm$4.88 & 66.80$\pm$3.76 & \textbf{78.80$\pm$6.76} & 64.60$\pm$10.37 & 57.60$\pm$12.24 & 63.80$\pm$5.11 \\
dog.\ vs ship. & 96.20$\pm$1.47 & 95.20$\pm$1.17 & 95.20$\pm$1.72 & 93.40$\pm$2.24 & \textbf{97.80$\pm$0.75} & 87.40$\pm$2.42 & 96.80$\pm$1.94 \\
dog.\ vs truck. & 95.00$\pm$0.89 & 95.20$\pm$0.40 & 97.40$\pm$0.49 & 92.80$\pm$2.93 & \textbf{97.80$\pm$0.75} & 97.40$\pm$1.36 & 94.80$\pm$1.94 \\
horse.\ vs monkey. & 62.40$\pm$6.62 & 74.80$\pm$2.40 & 78.40$\pm$3.93 & \textbf{79.00$\pm$2.37} & 78.80$\pm$5.56 & 62.60$\pm$4.03 & 78.40$\pm$4.50 \\
horse.\ vs ship. & 97.40$\pm$1.85 & 95.40$\pm$1.85 & \textbf{98.00$\pm$1.67} & 94.20$\pm$2.48 & \textbf{98.00$\pm$1.79} & 93.40$\pm$3.01 & 91.00$\pm$4.15 \\
horse.\ vs truck. & 95.40$\pm$2.42 & 95.00$\pm$1.67 & 96.00$\pm$1.10 & 92.60$\pm$3.14 & 97.20$\pm$0.75 & 88.40$\pm$5.39 & \textbf{98.40$\pm$0.80} \\
monkey.\ vs ship. & 97.20$\pm$1.33 & 97.60$\pm$1.02 & 98.80$\pm$0.75 & 97.20$\pm$0.75 & \textbf{99.20$\pm$0.75} & 91.00$\pm$3.35 & 97.20$\pm$0.75 \\
monkey.\ vs truck. & 98.20$\pm$0.40 & 96.20$\pm$1.47 & 97.20$\pm$0.40 & 95.00$\pm$1.67 & \textbf{99.00$\pm$0.63} & 94.80$\pm$3.25 & 94.40$\pm$1.85 \\
ship.\ vs truck. & 63.20$\pm$3.97 & \textbf{78.80$\pm$2.71} & 74.40$\pm$2.73 & 73.20$\pm$5.81 & 72.00$\pm$3.03 & 74.20$\pm$4.40 & 77.80$\pm$2.23 \\
\midrule
Avg.~Rec. & 82.89 & 84.79 & 86.86 & 85.10 & 84.91 & 85.15 & \textbf{87.84} \\
Avg.~Rank & 5.16 & 4.40 & 3.02 & 4.44 & 4.20 & 4.18 & \textbf{2.60} \\
\bottomrule
\end{tabular}
}
\end{table*}

\begin{table*}[!htbp]
\centering
\caption{F1-score (\%) with standard deviations for the STL-10 dataset.}
\label{tab:stl10-f1}
{\scriptsize
\begin{tabular}{cccccccc}
\toprule
Datasets & SVM-2K & MvSVM-2C & MvLSSVC-2C & Wave-MvSVM & MVASY-BX & Mv$SL_{0/1}$-SVM & Mv$L_{\mathrm{ht}}$-SVM \\
\midrule
air.\ vs bird. & 83.76$\pm$2.76 & 89.43$\pm$1.38 & 89.16$\pm$1.48 & 89.74$\pm$2.18 & 89.76$\pm$2.10 & 90.79$\pm$2.40 & \textbf{91.53$\pm$1.06} \\
air.\ vs car. & 88.08$\pm$0.66 & 91.19$\pm$1.23 & 92.20$\pm$1.49 & 92.02$\pm$1.68 & 89.68$\pm$0.63 & 94.63$\pm$0.92 & \textbf{95.29$\pm$0.67} \\
air.\ vs cat. & 90.65$\pm$1.63 & 93.32$\pm$0.99 & \textbf{94.34$\pm$0.87} & 93.46$\pm$1.27 & 93.28$\pm$1.48 & 93.61$\pm$1.50 & 94.06$\pm$1.45 \\
air.\ vs deer. & 92.96$\pm$1.23 & 93.56$\pm$1.45 & \textbf{96.01$\pm$1.01} & 93.60$\pm$0.81 & 95.22$\pm$1.06 & 94.66$\pm$2.76 & 95.74$\pm$0.42 \\
air.\ vs dog. & 92.39$\pm$0.96 & 92.15$\pm$1.48 & \textbf{95.36$\pm$1.60} & 93.42$\pm$0.99 & 93.55$\pm$1.07 & 94.21$\pm$0.93 & 95.13$\pm$1.52 \\
air.\ vs horse. & 92.58$\pm$1.27 & 94.00$\pm$0.47 & 96.13$\pm$0.92 & 94.14$\pm$3.00 & 94.79$\pm$0.73 & 95.68$\pm$1.20 & \textbf{96.65$\pm$0.72} \\
air.\ vs monkey. & 93.06$\pm$0.66 & 97.05$\pm$1.33 & 97.08$\pm$1.57 & 95.35$\pm$1.32 & 95.56$\pm$0.99 & 95.51$\pm$1.33 & \textbf{97.58$\pm$0.51} \\
air.\ vs ship. & \textbf{83.13$\pm$2.66} & 77.87$\pm$3.27 & 78.54$\pm$3.00 & 79.21$\pm$3.47 & 77.90$\pm$1.54 & 78.71$\pm$1.79 & 78.73$\pm$4.43 \\
air.\ vs truck. & 89.58$\pm$1.65 & 91.47$\pm$0.97 & 91.59$\pm$1.41 & 83.80$\pm$2.06 & 90.33$\pm$0.70 & 91.21$\pm$2.01 & \textbf{92.63$\pm$2.16} \\
bird.\ vs car. & 94.64$\pm$1.64 & 95.14$\pm$1.29 & 96.25$\pm$0.97 & 94.75$\pm$1.06 & 94.84$\pm$1.50 & 93.11$\pm$1.42 & \textbf{96.64$\pm$1.06} \\
bird.\ vs cat. & 68.40$\pm$2.99 & 72.10$\pm$4.54 & \textbf{75.79$\pm$2.66} & 70.08$\pm$4.20 & 68.49$\pm$3.27 & 70.34$\pm$4.04 & 75.18$\pm$3.34 \\
bird.\ vs deer. & 61.68$\pm$4.10 & 65.61$\pm$2.99 & 72.47$\pm$3.76 & 72.48$\pm$1.70 & 75.34$\pm$3.79 & \textbf{76.67$\pm$3.51} & 74.23$\pm$3.80 \\
bird.\ vs dog. & 61.46$\pm$3.11 & 63.79$\pm$4.43 & \textbf{69.56$\pm$2.41} & 69.32$\pm$3.60 & 67.40$\pm$4.08 & 68.24$\pm$6.17 & 67.37$\pm$5.40 \\
bird.\ vs horse. & 74.23$\pm$3.55 & 84.75$\pm$1.68 & 80.36$\pm$1.56 & 79.99$\pm$2.87 & 77.19$\pm$3.18 & 81.94$\pm$2.35 & \textbf{86.72$\pm$2.15} \\
bird.\ vs monkey. & 62.38$\pm$1.99 & 68.12$\pm$4.66 & 75.35$\pm$2.98 & 76.68$\pm$2.27 & 71.91$\pm$4.42 & 77.08$\pm$2.15 & \textbf{77.47$\pm$3.73} \\
bird.\ vs ship. & 91.84$\pm$1.95 & 94.06$\pm$2.45 & \textbf{95.49$\pm$1.94} & 91.35$\pm$0.81 & 91.62$\pm$1.31 & 92.89$\pm$1.28 & 93.29$\pm$1.52 \\
bird.\ vs truck. & 94.87$\pm$1.21 & \textbf{96.23$\pm$0.98} & 93.73$\pm$1.41 & 94.31$\pm$2.82 & 94.33$\pm$1.18 & 92.51$\pm$1.48 & 95.42$\pm$1.14 \\
car.\ vs cat. & 93.61$\pm$1.90 & 95.31$\pm$1.48 & 95.07$\pm$1.39 & 94.12$\pm$1.50 & 95.00$\pm$1.11 & 94.25$\pm$1.46 & \textbf{95.82$\pm$1.43} \\
car.\ vs deer. & 92.08$\pm$1.19 & 95.89$\pm$0.96 & 96.62$\pm$0.62 & 95.68$\pm$1.55 & 93.14$\pm$0.97 & 96.76$\pm$0.78 & \textbf{96.89$\pm$1.22} \\
car.\ vs dog. & 86.34$\pm$1.75 & 94.40$\pm$1.28 & 92.78$\pm$1.77 & 94.57$\pm$1.69 & 92.38$\pm$2.18 & \textbf{95.00$\pm$1.18} & 94.86$\pm$1.82 \\
car.\ vs horse. & 93.13$\pm$1.12 & 95.38$\pm$1.83 & 96.55$\pm$0.65 & 94.41$\pm$1.38 & 93.24$\pm$1.50 & 93.68$\pm$1.24 & \textbf{97.66$\pm$1.00} \\
car.\ vs monkey. & 95.89$\pm$1.31 & 96.24$\pm$1.36 & 95.55$\pm$1.32 & 95.92$\pm$1.15 & 95.85$\pm$0.58 & 96.35$\pm$1.36 & \textbf{96.70$\pm$0.92} \\
car.\ vs ship. & \textbf{89.81$\pm$2.07} & 88.34$\pm$1.02 & 87.16$\pm$2.13 & 87.69$\pm$2.19 & 88.96$\pm$1.65 & 86.56$\pm$3.87 & 88.58$\pm$1.90 \\
car.\ vs truck. & 84.35$\pm$2.12 & 86.75$\pm$2.23 & 88.16$\pm$1.56 & 84.69$\pm$1.54 & 83.96$\pm$2.94 & 87.57$\pm$2.08 & \textbf{88.43$\pm$2.32} \\
cat.\ vs deer. & 72.59$\pm$3.46 & 73.14$\pm$3.26 & \textbf{76.04$\pm$2.36} & 71.74$\pm$3.84 & 69.98$\pm$5.14 & 72.13$\pm$4.54 & 75.76$\pm$2.21 \\
cat.\ vs dog. & 63.33$\pm$5.12 & 62.92$\pm$4.82 & 69.22$\pm$2.07 & 62.17$\pm$2.36 & 69.20$\pm$3.65 & 58.40$\pm$2.45 & \textbf{69.29$\pm$2.95} \\
cat.\ vs horse. & 80.75$\pm$2.22 & 77.26$\pm$3.62 & 82.85$\pm$1.85 & 79.13$\pm$3.57 & 80.99$\pm$1.65 & 82.78$\pm$2.81 & \textbf{83.23$\pm$2.03} \\
cat.\ vs monkey. & 71.80$\pm$4.39 & 72.46$\pm$3.88 & 76.12$\pm$3.90 & 73.29$\pm$4.62 & 71.58$\pm$6.97 & 75.53$\pm$4.46 & \textbf{77.89$\pm$4.83} \\
cat.\ vs ship. & 93.00$\pm$1.40 & 93.99$\pm$2.35 & \textbf{94.35$\pm$1.01} & 92.71$\pm$1.21 & 92.40$\pm$1.62 & 91.43$\pm$1.35 & 93.42$\pm$3.00 \\
cat.\ vs truck. & 93.05$\pm$1.25 & 94.58$\pm$0.90 & 93.66$\pm$1.72 & 94.13$\pm$0.78 & 93.11$\pm$1.45 & 90.85$\pm$2.29 & \textbf{95.60$\pm$1.92} \\
deer.\ vs dog. & 76.44$\pm$1.10 & 75.17$\pm$2.12 & 77.63$\pm$2.44 & 73.47$\pm$2.49 & 77.10$\pm$1.54 & 77.31$\pm$1.76 & \textbf{78.04$\pm$2.17} \\
deer.\ vs horse. & 81.52$\pm$2.21 & 80.62$\pm$2.49 & \textbf{84.73$\pm$2.50} & 76.50$\pm$1.57 & 82.55$\pm$3.86 & 82.31$\pm$1.85 & 84.26$\pm$1.67 \\
deer.\ vs monkey. & 78.87$\pm$2.18 & 78.87$\pm$3.02 & 83.02$\pm$1.96 & 82.28$\pm$1.96 & 78.22$\pm$2.02 & 81.58$\pm$1.92 & \textbf{83.64$\pm$2.66} \\
deer.\ vs ship. & 95.04$\pm$1.37 & 96.69$\pm$1.47 & 97.26$\pm$1.54 & 94.65$\pm$1.40 & 94.45$\pm$1.34 & 93.44$\pm$3.02 & \textbf{97.69$\pm$1.76} \\
deer.\ vs truck. & 95.90$\pm$1.10 & 96.74$\pm$1.26 & 97.63$\pm$0.72 & 94.57$\pm$1.16 & 94.92$\pm$1.28 & 92.91$\pm$2.11 & \textbf{97.90$\pm$0.66} \\
dog.\ vs horse. & 71.63$\pm$2.61 & 69.94$\pm$2.12 & 73.61$\pm$3.19 & 73.36$\pm$2.04 & 70.41$\pm$3.76 & 72.78$\pm$2.83 & \textbf{73.67$\pm$3.66} \\
dog.\ vs monkey. & 64.40$\pm$2.18 & 66.21$\pm$4.19 & 68.74$\pm$3.17 & \textbf{69.05$\pm$4.06} & 65.50$\pm$4.88 & 60.05$\pm$5.06 & 67.97$\pm$3.07 \\
dog.\ vs ship. & \textbf{96.11$\pm$1.49} & 94.08$\pm$1.07 & 93.15$\pm$1.09 & 93.96$\pm$1.27 & 92.62$\pm$0.95 & 90.84$\pm$1.00 & 95.09$\pm$1.37 \\
dog.\ vs truck. & 93.32$\pm$0.65 & 94.55$\pm$0.96 & 94.76$\pm$1.48 & 94.20$\pm$2.22 & 93.07$\pm$1.58 & \textbf{96.63$\pm$1.37} & 93.95$\pm$1.30 \\
horse.\ vs monkey. & 70.66$\pm$4.62 & 77.06$\pm$2.83 & \textbf{81.24$\pm$3.44} & 78.64$\pm$1.85 & 77.05$\pm$3.14 & 69.45$\pm$2.55 & 81.21$\pm$3.71 \\
horse.\ vs ship. & 93.40$\pm$1.21 & 94.18$\pm$1.34 & \textbf{95.80$\pm$1.31} & 93.92$\pm$1.37 & 92.63$\pm$0.65 & 92.28$\pm$0.95 & 94.36$\pm$2.21 \\
horse.\ vs truck. & 92.18$\pm$2.14 & 93.70$\pm$0.72 & 91.88$\pm$1.53 & 92.89$\pm$3.00 & 91.71$\pm$1.05 & 90.77$\pm$2.41 & \textbf{96.21$\pm$1.52} \\
monkey.\ vs ship. & 95.50$\pm$1.92 & 95.70$\pm$1.45 & \textbf{96.97$\pm$1.28} & 95.49$\pm$1.11 & 95.20$\pm$0.79 & 92.94$\pm$1.16 & 96.43$\pm$0.79 \\
monkey.\ vs truck. & 95.45$\pm$1.20 & 95.63$\pm$1.22 & 94.95$\pm$1.40 & \textbf{95.86$\pm$0.61} & 94.75$\pm$0.98 & 94.36$\pm$1.90 & 94.78$\pm$0.68 \\
ship.\ vs truck. & 72.85$\pm$3.16 & 79.58$\pm$1.38 & 76.94$\pm$1.68 & 75.36$\pm$3.86 & 75.29$\pm$1.63 & 76.67$\pm$2.31 & \textbf{80.22$\pm$1.35} \\
\midrule
Avg.~F1 & 84.42 & 86.12 & 87.60 & 85.96 & 85.70 & 85.94 & \textbf{88.29} \\
Avg.~Rank & 5.40 & 4.02 & 2.71 & 4.47 & 5.13 & 4.47 & \textbf{1.80} \\
\bottomrule
\end{tabular}
}
\end{table*}

\subsubsection{Parameter sensitivity analysis}
\label{subsubsec:mv-param}

Here, we use the UCI datasets to examine the parameter 
sensitivity of Mv$L_{\mathrm{ht}}$-SVM in order to analyze the impact of 
parameters. The accuracy of Mv$L_{\mathrm{ht}}$-SVM on the Heart dataset is 
shown in 
Figs.~\ref{fig:param-fix-alpha}--\ref{fig:param-fix-sigma}, 
where the color of the dot indicates the accuracy for the 
relevant parameter combination. We fix four parameters separately 
and explore combinations of the remaining three varying 
parameters. These graphs show that the accuracy of Mv$L_{\mathrm{ht}}$-SVM 
is sensitive to all four parameters on all 
datasets. The number of SVs in our 
algorithm is mostly controlled by the values of $C$ and $\alpha$. 
This emphasizes the importance of carefully adjusting these 
parameters for optimal performance.

\begin{figure*}[!htbp]
\centering
\includegraphics[width=0.85\textwidth]{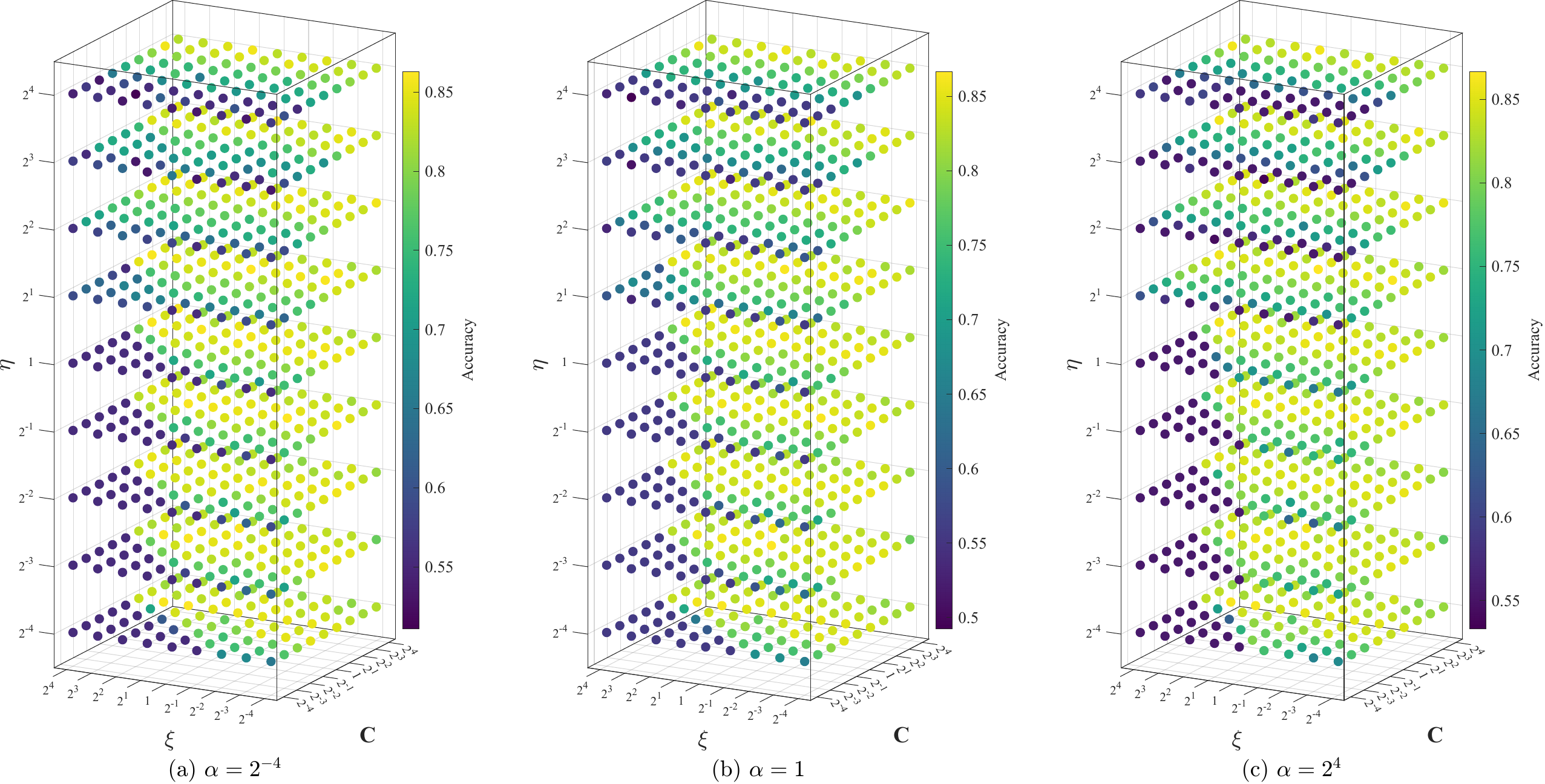}
\caption{Parameter sensitivity on Heart: fix $\alpha$ at 
$2^{-4}$, $1$, $2^4$.}
\label{fig:param-fix-alpha}
\end{figure*}

\begin{figure*}[!htbp]
\centering
\includegraphics[width=0.85\textwidth]{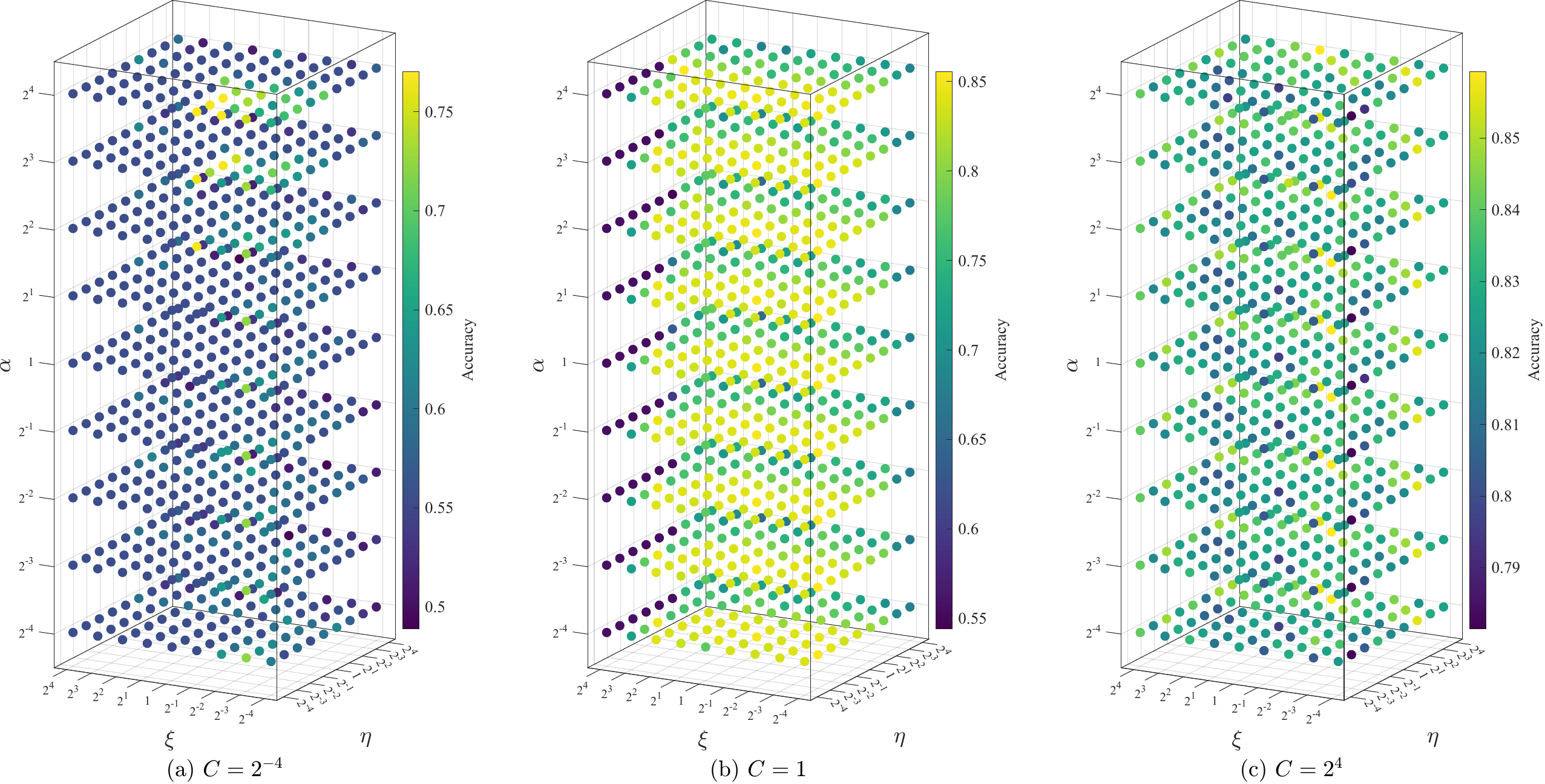}
\caption{Parameter sensitivity on Heart: fix $C$ at 
$2^{-4}$, $1$, $2^4$.}
\label{fig:param-fix-C}
\end{figure*}

\begin{figure*}[!htbp]
\centering
\includegraphics[width=0.85\textwidth]{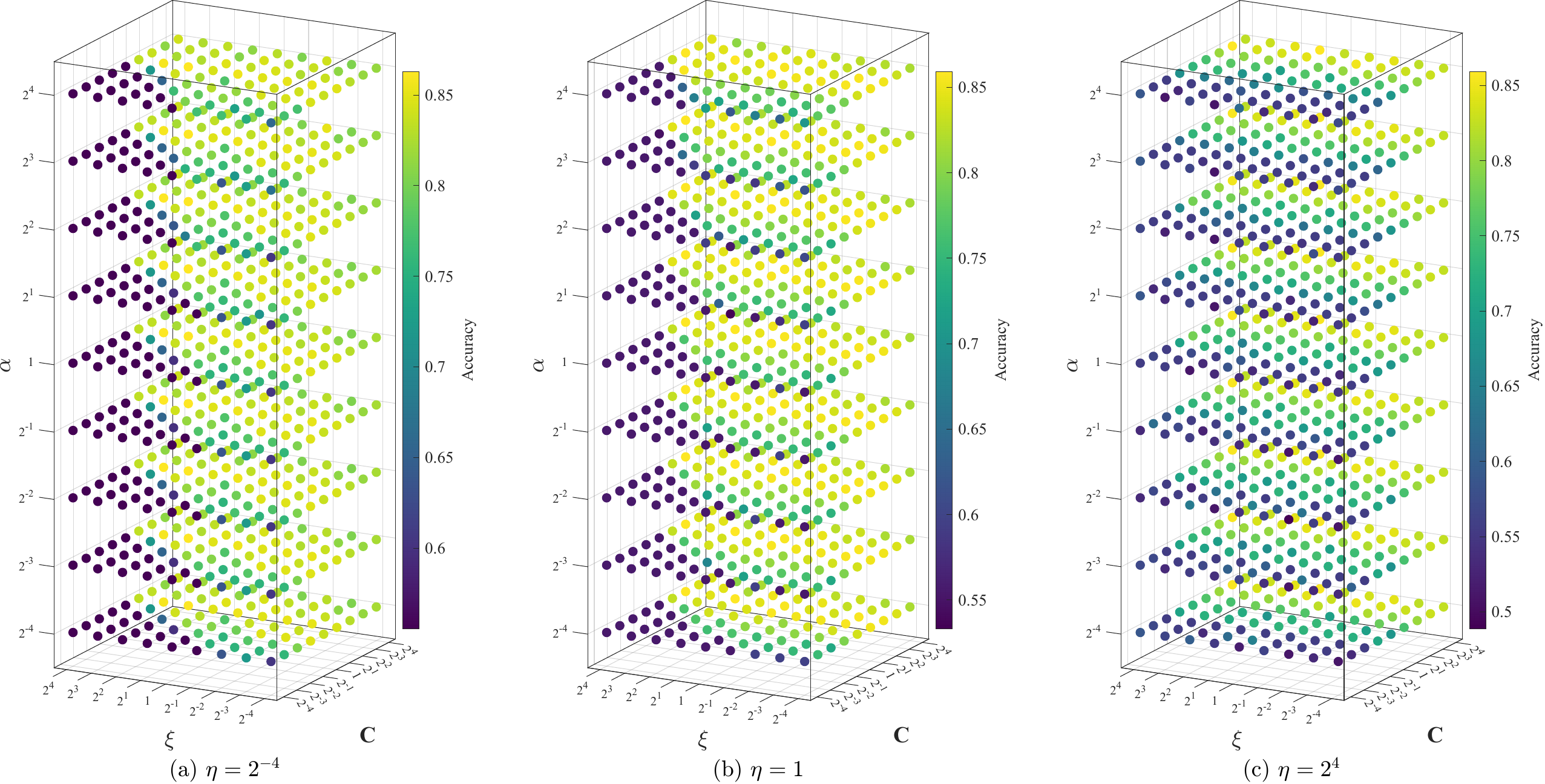}
\caption{Parameter sensitivity on Heart: fix $\eta$ at 
$2^{-4}$, $1$, $2^4$.}
\label{fig:param-fix-eta}
\end{figure*}

\begin{figure*}[!htbp]
\centering
\includegraphics[width=0.85\textwidth]{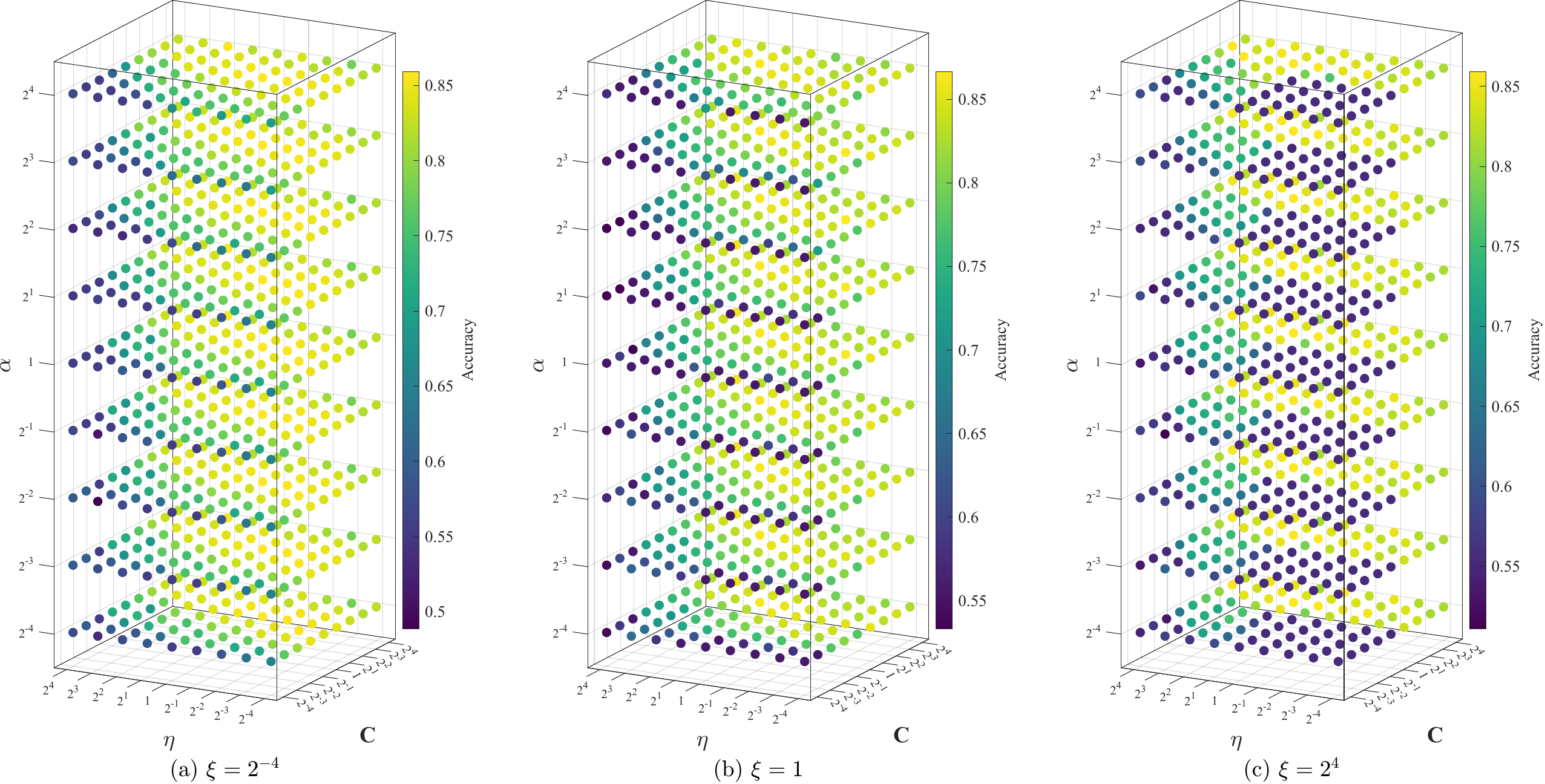}
\caption{Parameter sensitivity on Heart: fix $\xi$ at 
$2^{-4}$, $1$, $2^4$.}
\label{fig:param-fix-sigma}
\end{figure*}

\subsubsection{Ablation study}
\label{subsubsec:mv-ablation}

In order to verify the influence of consistency term and 
complementarity term in Mv$L_{\mathrm{ht}}$-SVM, we compare the performance of 
Mv$L_{\mathrm{ht}}$-SVM in different cases on the first 10 STL-10 datasets: 
(1)~Mv$L_{\mathrm{ht}}$-SVM as in the proposed model; 
(2)~Mv$L_{\mathrm{ht}}$-SVM with the parameter $\eta = 0$, that is, the model 
that only satisfies the principle of complementarity; 
(3)~Mv$L_{\mathrm{ht}}$-SVM with the parameter $\theta_v = 1/V$, that is, the 
model that only satisfies the principle of consistency. In this 
experiment, except for the fixed parameters of Mv$L_{\mathrm{ht}}$-SVM, the rest 
of the parameters adopt the optimal parameters to train and test 
the corresponding model.

The experimental results are shown in 
Fig.~\ref{fig:ablation-bar}, where the horizontal axis is the 
dataset index and the vertical axis is the accuracy of the model 
in the corresponding case on the dataset. It can be clearly seen 
that the Mv$L_{\mathrm{ht}}$-SVM in the first case performs best on all datasets. 
In addition, the Mv$L_{\mathrm{ht}}$-SVM in the third case is usually more 
competitive than the Mv$L_{\mathrm{ht}}$-SVM in the second case. This shows that 
the consistency principle is more instructive in Mv$L_{\mathrm{ht}}$-SVM, and 
satisfying both principles at the same time can further improve 
the learning effect.

\begin{figure}[!htbp]
\centering
\includegraphics[width=0.5\textwidth]{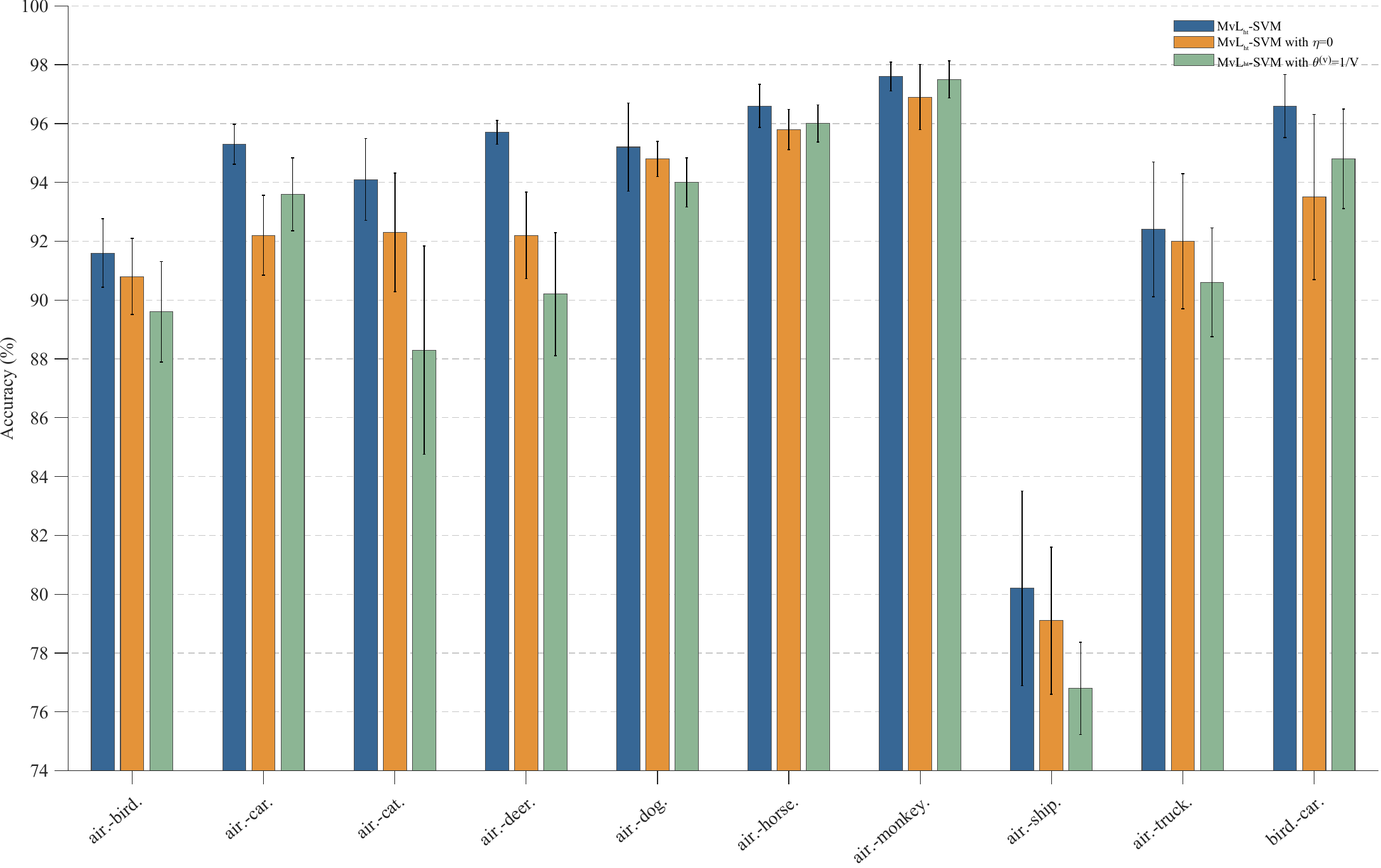}
\caption{Ablation experiments on the first 10 STL-10 datasets.}
\label{fig:ablation-bar}
\end{figure}

\section{Conclusion and future work}
\label{sec:conclusion}

This paper has introduced the hybrid truncated ($L_{\mathrm{ht}}$) 
loss and built two classifiers upon it. For single-view problems, 
$L_{\mathrm{ht}}$-SVM is formulated and solved by an alternating direction method of multipliers (ADMM) 
algorithm whose working-set rule confines each iteration to a 
small active subset, markedly reducing per-iteration cost and 
overall support-vector count. First-order necessary and sufficient 
optimality conditions are established through the P-stationary 
point framework. For multi-view problems, 
Mv$L_{\mathrm{ht}}$-SVM integrates structural covariance 
information and adaptive view weighting, fulfilling both the 
consensus and complementarity principles. Experiments on 
benchmark, synthetic, and image datasets demonstrate that 
$L_{\mathrm{ht}}$-SVM delivers competitive accuracy with 
considerably fewer support vectors and superior noise tolerance, 
while Mv$L_{\mathrm{ht}}$-SVM consistently surpasses six 
multi-view baselines across all evaluation metrics. The 
efficiency advantage grows with dataset scale.

Several directions remain open. 
(i)~Establishing an explicit convergence rate for the 
proposed ADMM scheme. 
(ii)~Extending the $L_{\mathrm{ht}}$ framework to kernel-based 
non-linear SVMs, as well as exploring connections with deep 
learning and logistic regression formulations. 
(iii)~Generalising the approach to multi-class settings where 
the computational and robustness benefits are expected to persist.


\section*{CRediT authorship contribution statement}

\textbf{Yuliang Yang:} Methodology, Software, Validation, Formal analysis,
Investigation, Data curation, Writing -- original draft, Visualization.
\textbf{Chen Chen:} Validation, Writing -- review \& editing.
\textbf{Yuxiang Liu:} Formal analysis, Writing -- review \& editing.
\textbf{Huiru Wang:} Conceptualization, Writing -- review \& editing,
Supervision, Project administration, Funding acquisition.

\section*{Data availability}

Data will be made available on request.

\section*{Declaration of competing interest}

The authors declare that they have no known competing financial interests 
or personal relationships that could have appeared to influence the work 
reported in this paper.

\section*{Acknowledgments}
This work was supported by the National Natural Science Foundation of China 
(No. 12501719) and the Fundamental Research Funds for the Central Universities 
(No. QNTD202510) and Beijing Training Program of Innovation and Entrepreneurship for Undergraduates (No. 202510022420).

\appendix
\section{Proof of Lemma~\ref{lem:prox-1d}}
\label{app:prox-proof}

\noindent\textbf{Proof.} Let $\kappa := \nu C$. According to \eqref{eq:prox-def}, 
we have that $\mathrm{prox}_{\kappa\ell_{\mathrm{ht}}}(z)$ is the minimizer of 
the following function
\begin{equation}
\varphi(s) := 
\begin{cases}
\varphi_1(s) := \frac{(s-z)^2}{2}, & s \le 0,\\[3pt]
\varphi_2(s) := \frac{6\kappa s}{5} + \frac{(s-z)^2}{2}, & s \in (0, \frac{2}{3}],\\[3pt]
\varphi_3(s) := \frac{\kappa(-9s^2+18s-4)}{5} + \frac{(s-z)^2}{2}, & s \in (\frac{2}{3}, 1],\\[3pt]
\varphi_4(s) := \kappa + \frac{(s-z)^2}{2}, & s > 1.
\end{cases}
\label{eq:varphi}
\end{equation}
It can be clearly seen that the minimizers of $\varphi_1(s)$, $\varphi_2(s)$, 
$\varphi_3(s)$, $\varphi_4(s)$ are yielded at $s_1^* = z$, 
$s_2^* = z - \frac{6\kappa}{5}$, 
$s_3^* = \frac{5z-18\kappa}{5-18\kappa}$, $s_4^* = z$ 
and $s_5^* = 0$ respectively.

\noindent (i) For $\kappa \in (0, 5/18)$, by comparing the values of 
$\varphi(s_1^*)$, $\varphi(s_2^*)$, $\varphi(s_3^*)$, $\varphi(s_4^*)$ and 
$\varphi(s_5^*)$, we get conclusion: (a1) As $z > 1$, we have 
$\varphi(s_4^*) < \varphi(s_i^*)$ for $i \ne 4$, which obtains $s^* = s_4^* = z$. 
(a2) As $z \in (\frac{2}{3} + \frac{6\kappa}{5}, 1]$, we have 
$\varphi(s_3^*) < \varphi(s_i^*)$ for $i \ne 3$, which implies 
$s^* = s_3^* = \frac{5z-18\kappa}{5-18\kappa}$. 
(a3) As $z \in (\frac{6\kappa}{5}, \frac{2}{3} + \frac{6\kappa}{5}]$, 
we have $\varphi(s_2^*) < \varphi(s_5^*)$, which means 
$s^* = s_2^* = z - \frac{6\kappa}{5}$. 
(a4) As $z \in [0, \frac{6\kappa}{5}]$, we have 
$\varphi(s_5^*) \le \varphi(s_2^*)$, which means $s^* = s_5^* = 0$. 
(a5) As $z < 0$, we have $\varphi(s_1^*) < \varphi(s_5^*)$, which gets 
$s^* = s_1^* = z$.
Summarizing the above analysis, we obtain \eqref{eq:prox-case1}.

\noindent (ii) For $\kappa \in [5/18, 25/18)$, by comparing the values of 
$\varphi(s_1^*)$, $\varphi(s_2^*)$, $\varphi(s_4^*)$ and $\varphi(s_5^*)$, we get 
conclusion: (b1) As $z > \frac{5}{6} + \frac{3\kappa}{5}$, we have 
$\varphi(s_4^*) < \varphi(s_2^*)$, which obtains $s^* = s_4^* = z$. 
(b2) As $z \in (\frac{6\kappa}{5}, \frac{5}{6} + \frac{3\kappa}{5}]$, 
we have $\varphi(s_2^*) \le \varphi(s_4^*)$, which obtains 
$s^* = s_2^* = z - \frac{6\kappa}{5}$. 
(b3) As $z \in [0, \frac{6\kappa}{5}]$, we have 
$\varphi(s_5^*) \le \varphi(s_2^*)$, which yields $s^* = s_5^* = 0$. 
(b4) As $z < 0$, we have $\varphi(s_1^*) < \varphi(s_5^*)$, which gets 
$s^* = s_1^* = z$.
From the above (b1)--(b4), we obtain \eqref{eq:prox-case2}.

\noindent (iii) For $\kappa \ge 25/18$, by contrasting the values of 
$\varphi(s_1^*)$, $\varphi(s_4^*)$ and $\varphi(s_5^*)$, we obtain conclusion: 
(c1) As $z > \sqrt{2\kappa}$, we have $\varphi(s_4^*) < \varphi(s_5^*)$, 
which implies $s^* = s_4^* = z$. 
(c2) As $z = \sqrt{2\kappa}$, we have $\varphi(s_4^*) = \varphi(s_5^*)$, 
which gets $s^* = z$ or $s^* = 0$. 
(c3) As $z \in [0, \sqrt{2\kappa})$, we have $\varphi(s_5^*) < \varphi(s_4^*)$, 
which means $s^* = s_5^* = 0$. 
(c4) As $z < 0$, we have $\varphi(s_1^*) < \varphi(s_5^*)$, which obtains 
$s^* = s_1^* = z$.
By the (c1)--(c4), we get \eqref{eq:prox-case3}, which completes the proof. 
\hfill$\square$

\section{Proof of Theorem~\ref{thm:necessary}}
\label{app:necessary}

\noindent\textbf{Proof.} Assume $(\mathbf{w}^*; b^*; \mathbf{q}^*)$ to be a 
local optimal solution of \eqref{eq:ht-svm-constrained}. Following variational 
analysis \citep{rockafellar2009variational}, there exists a multiplier 
$\boldsymbol{\psi}^* \in \mathbb{R}^m$ satisfying
\begin{equation}
\begin{cases}
\mathbf{w}^* + N^\top \boldsymbol{\psi}^* = \mathbf{0},\\
\langle \mathbf{y}, \boldsymbol{\psi}^* \rangle = 0,\\
\mathbf{q}^* + N\mathbf{w}^* + b^*\mathbf{y} = \mathbf{1},\\
\boldsymbol{\psi}^* + C\partial L_{\mathrm{ht}}(\mathbf{q}^*) \ni \mathbf{0}.
\end{cases}
\label{eq:kkt-necessary}
\end{equation}
Combining \eqref{eq:p-stationary} with \eqref{eq:kkt-necessary}, it suffices to 
verify that whenever $0 < \nu \le \nu^*$ and $(\boldsymbol{\psi}^*; \mathbf{q}^*)$ 
satisfies $\mathbf{0} \in \boldsymbol{\psi}^* + C\partial L_{\mathrm{ht}}(\mathbf{q}^*)$, 
the inclusion $\mathbf{q}^* \in \mathrm{prox}_{\nu C L_{\mathrm{ht}}}(\mathbf{q}^* - \nu\boldsymbol{\psi}^*)$ holds. 
Invoking Lemma~\ref{lem:subdiff} alongside \eqref{eq:kkt-necessary}, we deduce
\begin{equation}
\psi_k^* \in 
\begin{cases}
0, & q_k^* \ge 1,\\
-18C(1-q_k^*)/5, & q_k^* \in (2/3, 1),\\
-6C/5, & q_k^* \in (0, 2/3],\\
[-6C/5, 0], & q_k^* = 0,\\
0, & q_k^* < 0.
\end{cases}
\label{eq:psi-cases}
\end{equation}
Utilizing Lemma~\ref{lem:prox-1d}, we establish \eqref{eq:p-stationary} by examining 
three regimes: $\nu C \in (0, 5/18)$, $\nu C \in [5/18, 25/18)$, and $\nu C \ge 25/18$.

\textbf{Case (i):} Consider $\nu C \in (0, 5/18)$. Set $\mathbf{p}^* := \mathbf{q}^* - \nu\boldsymbol{\psi}^* \in \mathbb{R}^m$.

(i) For $k \in \mathcal{A}^*$, one has $q_k^* > 1$ with $\psi_k^* = 0$ via \eqref{eq:psi-cases}, 
leading to
\begin{equation}
p_k^* := q_k^* - \nu\psi_k^* = q_k^* > 1.
\label{eq:case1-A}
\end{equation}

(ii) For $k \in \mathcal{B}^*$, one has $q_k^* \in (2/3, 1]$ with 
$\psi_k^* = -18C(1-q_k^*)/5$ via \eqref{eq:psi-cases}. Consequently,
\begin{equation}
p_k^* = q_k^* - \nu\psi_k^* = q_k^* + 18\nu C(1-q_k^*)/5.
\label{eq:case1-B}
\end{equation}
Given $\nu \le \nu_1^* = 5/(18C)$, it follows that $18\nu C \le 5$; combining with 
\eqref{eq:case1-B} yields $p_k^* \in (2/3 + 6\nu C/5, 1]$.

(iii) For $k \in \mathcal{C}^*$, one has $q_k^* \in (0, 2/3]$ with $\psi_k^* = -6C/5$ 
via \eqref{eq:psi-cases}. Hence,
\begin{equation}
p_k^* = q_k^* - \nu\psi_k^* = q_k^* + 6\nu C/5.
\label{eq:case1-C}
\end{equation}
Together with \eqref{eq:case1-C}, this implies $p_k^* \in (6\nu C/5, 2/3 + 6\nu C/5]$.

(iv) For $k \in \mathcal{D}^*$, one has $q_k^* = 0$ with $\psi_k^* \in [-6C/5, 0]$ 
via \eqref{eq:psi-cases}, so that
\begin{equation}
p_k^* = q_k^* - \nu\psi_k^* = -\nu\psi_k^* \in [0, 6\nu C/5].
\label{eq:case1-D}
\end{equation}

(v) For $k \in \mathcal{E}^*$, one has $q_k^* < 0$ with $\psi_k^* = 0$, giving
\begin{equation}
p_k^* = q_k^* - \nu\psi_k^* = q_k^* < 0.
\label{eq:case1-E}
\end{equation}

Summarizing (i)--(v), we arrive at \eqref{eq:prox-case1}, finishing Case (i).

\textbf{Case (ii):} Let $\nu C \in [5/18, 25/18)$, 
$\mathbf{p}^* := \mathbf{q}^* - \nu\boldsymbol{\psi}^*$, 
$\widetilde{\beta} := 6\nu C/5$, and 
$\widehat{\beta} := 5/6 + 3\nu C/5$. We partition $[m]$ 
into $\mathcal{R}^* \cup \mathcal{Q}^* \cup \mathcal{P}^* 
\cup \mathcal{D}^* \cup \mathcal{E}^*$ and verify five sub-cases.

(i) For $k \in \mathcal{R}^*$ (i.e., $q_k^* > 5/6$), the 
restriction $\nu \le \nu_2^* = 5(6q_k^*-5)/(18C)$ yields 
$3\nu C/5 \le (6q_k^*-5)/6$, hence 
$q_k^* \ge 5/6 + 3\nu C/5 = \widehat{\beta}$. 
Noting $\widehat{\beta} \ge 1$ when $\nu C \ge 5/18$, we 
deduce $q_k^* \ge 1$ and $\psi_k^* = 0$ by \eqref{eq:psi-cases}. 
Therefore $p_k^* = q_k^* > \widehat{\beta}$, and 
\eqref{eq:prox-case2} returns $\mathrm{prox}(p_k^*) = p_k^* = q_k^*$.

(ii) For $k \in \mathcal{Q}^*$ (i.e., $q_k^* = 5/6$), observe 
that $5/6 \in [2/3, 1]$, so $k \in \mathcal{B}^*$. 
However, $\mathcal{B}^* \neq \emptyset$ would force 
$\nu \le \nu_1^* = 5/(18C)$, i.e., $\nu C \le 5/18$, 
contradicting the assumption $\nu C \ge 5/18$. 
Consequently, $\mathcal{Q}^*$ is empty in this regime.

(iii) For $k \in \mathcal{P}^*$ (i.e., $q_k^* \in (0, 5/6)$), 
since $\mathcal{B}^* = \emptyset$ (as shown above), all such 
$q_k^*$ satisfy $q_k^* \le 2/3$, giving 
$\psi_k^* = -6C/5$ via \eqref{eq:psi-cases} and 
$p_k^* = q_k^* + \widetilde{\beta}$. The restriction 
$\nu \le \nu_3^* = 5(5-6q_k^*)/(18C)$ ensures 
$q_k^* \le 5/6 - 3\nu C/5$, from which 
$p_k^* \le \widehat{\beta}$. Combined with 
$p_k^* > \widetilde{\beta}$, we obtain 
$p_k^* \in (\widetilde{\beta}, \widehat{\beta}]$, so 
\eqref{eq:prox-case2} delivers 
$\mathrm{prox}(p_k^*) = p_k^* - \widetilde{\beta} = q_k^*$.

(iv) For $k \in \mathcal{D}^*$ (i.e., $q_k^* = 0$), 
\eqref{eq:psi-cases} provides $\psi_k^* \in [-6C/5, 0]$, 
leading to $p_k^* = -\nu\psi_k^* \in [0, \widetilde{\beta}]$. 
Then \eqref{eq:prox-case2} returns $\mathrm{prox}(p_k^*) = 0 = q_k^*$.

(v) For $k \in \mathcal{E}^*$ (i.e., $q_k^* < 0$), we have 
$\psi_k^* = 0$ and $p_k^* = q_k^* < 0$, so 
\eqref{eq:prox-case2} gives $\mathrm{prox}(p_k^*) = p_k^* = q_k^*$.

Collecting (i)--(v) establishes \eqref{eq:prox-case2} for Case (ii).

\textbf{Case (iii):} Let $\nu C \ge 25/18$, 
$\mathbf{p}^* := \mathbf{q}^* - \nu\boldsymbol{\psi}^*$, and 
$\gamma := \sqrt{2\nu C}$. As in Case~(ii), 
$\mathcal{B}^* = \emptyset$ follows from $\nu \le \nu_1^*$. 
Furthermore, $\mathcal{P}^* \neq \emptyset$ would imply 
$\nu \le \nu_3^* \le 5/(18C)$ and thus $\nu C \le 5/18 < 25/18$, 
a contradiction, so $\mathcal{P}^* = \emptyset$. Because 
$\mathcal{Q}^* \subset \mathcal{B}^*$, we also have 
$\mathcal{Q}^* = \emptyset$. Hence every $q_k^* > 0$ 
belongs to $\mathcal{R}^*$. Three sub-cases remain.

(i) For $k \in \mathcal{R}^*$ (i.e., $q_k^* > 0$), the bound 
$\nu \le \nu_5^*$ provides $q_k^* \ge \gamma$. Since 
$\gamma \ge \sqrt{2 \cdot 25/18} = 5/3 > 1$, 
\eqref{eq:psi-cases} yields $\psi_k^* = 0$ and 
$p_k^* = q_k^* \ge \gamma$. When $p_k^* > \gamma$, 
\eqref{eq:prox-case3} gives $\mathrm{prox}(p_k^*) = p_k^* = q_k^*$; 
when $p_k^* = \gamma$, we have 
$q_k^* = \gamma \in \mathrm{prox}(p_k^*) = \{\gamma, 0\}$.

(ii) For $k \in \mathcal{D}^*$ (i.e., $q_k^* = 0$), 
\eqref{eq:psi-cases} supplies $\psi_k^* \in [-6C/5, 0]$ and 
$p_k^* = -\nu\psi_k^* \in [0, 6\nu C/5]$. Provided 
$\mathcal{D}^* \neq \emptyset$, the bound 
$\nu \le \nu_4^* = 25/(18C)$ ensures 
$6\nu C/5 \le 5/3 \le \gamma$, placing $p_k^* \in [0, \gamma]$. 
By \eqref{eq:prox-case3}, $\mathrm{prox}(p_k^*)$ includes 
$0 = q_k^*$.

(iii) For $k \in \mathcal{E}^*$ (i.e., $q_k^* < 0$), 
$\psi_k^* = 0$ and $p_k^* = q_k^* < 0$. By 
\eqref{eq:prox-case3}, $\mathrm{prox}(p_k^*) = p_k^* = q_k^*$.

Collecting (i)--(iii) establishes \eqref{eq:prox-case3} and 
completes the proof. \hfill$\square$

\section{Proof of Theorem~\ref{thm:sufficient}}
\label{app:sufficient}

\noindent\textbf{Proof.} Let $\Sigma^* := (\mathbf{w}^*; b^*; \mathbf{q}^*)$ and define 
$\Pi := \{\Sigma = (\mathbf{w}; b; \mathbf{q}) : \mathbf{q} + N\mathbf{w} + b\mathbf{y} = \mathbf{1}\}$. 
When $\Sigma \in \Pi \cap U(\Sigma^*, \widetilde{\varrho})$ for some $\widetilde{\varrho} > 0$, 
the equality constraint $\mathbf{q} + N\mathbf{w} + b\mathbf{y} = \mathbf{1}$ together with 
\eqref{eq:p-stationary} leads to
\begin{equation}
-N(\mathbf{w} - \mathbf{w}^*) = (b - b^*)\mathbf{y} + \mathbf{q} - \mathbf{q}^*.
\label{eq:suff-A7}
\end{equation}
Since $\|\mathbf{w}\|^2$ is convex, we can lead to
\begin{align}
\|\mathbf{w}\|^2 - \|\mathbf{w}^*\|^2 
&\ge 2\langle \mathbf{w} - \mathbf{w}^*, \mathbf{w}^* \rangle \notag\\
&\overset{\eqref{eq:p-stationary}}{=} 
-2\langle N(\mathbf{w} - \mathbf{w}^*), \boldsymbol{\psi}^* \rangle \notag\\
&\overset{\eqref{eq:suff-A7}}{=} 
2\langle \boldsymbol{\psi}^*, \mathbf{q} - \mathbf{q}^* \rangle 
+ 2(b - b^*)\langle \boldsymbol{\psi}^*, \mathbf{y} \rangle \notag\\
&\overset{\eqref{eq:p-stationary}}{=} 
2\langle \boldsymbol{\psi}^*, \mathbf{q} - \mathbf{q}^* \rangle.
\label{eq:suff-A8}
\end{align}
According to \eqref{eq:suff-A7} and \eqref{eq:suff-A8}, we now show that $\Sigma^*$ is 
a local minimizer of \eqref{eq:ht-svm-constrained}. For any 
$\Sigma \in \Pi \cap U(\Sigma^*, \varrho)$ with $\varrho > 0$, the target inequality is:
\begin{equation}
\frac{1}{2}\|\mathbf{w}\|^2 + CL_{\mathrm{ht}}(\mathbf{q}) \ge 
\frac{1}{2}\|\mathbf{w}^*\|^2 + CL_{\mathrm{ht}}(\mathbf{q}^*).
\label{eq:suff-A9}
\end{equation}
We verify \eqref{eq:suff-A9} by analyzing three regimes of $\nu C$: $(0, 5/18)$, 
$[5/18, 25/18)$, and $[25/18, +\infty)$.

\textbf{Case (i):} Suppose $\nu C \in (0, 5/18)$. Define $\mathbf{p}^* := \mathbf{q}^* - \nu\boldsymbol{\psi}^*$ and partition $[m]$ into five subsets:
\begin{align*}
&\mathbb{H}_1^* := \{i \in [m] : p_i^* < 0\}, \quad 
\mathbb{H}_2^* := \{i \in [m] : p_i^* \in [0, \widetilde{\beta}]\},\\
&\mathbb{H}_3^* := \{i \in [m] : p_i^* \in (\widetilde{\beta}, \widehat{\beta})\},\\
&\mathbb{H}_4^* := \{i \in [m] : p_i^* \in [\widehat{\beta}, 1)\}, \quad 
\mathbb{H}_5^* := \{i \in [m] : p_i^* \ge 1\},
\end{align*}
where $\widetilde{\beta} := \frac{6\nu C}{5}$ and $\widehat{\beta} := \frac{2}{3} + \frac{6\nu C}{5}$. 
From the proximal characterization $\mathbf{q}^* \in \mathrm{prox}_{\nu CL_{\mathrm{ht}}}(\mathbf{p}^*)$ 
combined with \eqref{eq:prox-case1} and \eqref{eq:psi-cases}, we deduce
\begin{align}
&\psi^*_{\mathbb{H}_1^*} = \mathbf{0}_{\mathbb{H}_1^*}, \quad 
\mathbf{q}^*_{\mathbb{H}_2^*} = \mathbf{0}_{\mathbb{H}_2^*}, \quad 
\psi^*_{\mathbb{H}_3^*} = -\frac{6C}{5}\mathbf{1}_{\mathbb{H}_3^*}, \notag\\
&\psi^*_{\mathbb{H}_4^*} = -\frac{18C(1 - \mathbf{q}^*)_{\mathbb{H}_4^*}}{5}, \quad 
\psi^*_{\mathbb{H}_5^*} = \mathbf{0}_{\mathbb{H}_5^*}.
\label{eq:suff-A11}
\end{align}
Together with \eqref{eq:psi-cases}, this implies
\begin{equation}
\begin{aligned}
&\psi_i^* = 0, q_i^* < 0, \; i \in \mathbb{H}_1^*,\\
&\psi_i^* \in [-\frac{6C}{5}, 0], q_i^* = 0, \; i \in \mathbb{H}_2^*,\\
&\psi_i^* = -\frac{6C}{5}, q_i^* \in (0, \frac{2}{3}), \; i \in \mathbb{H}_3^*,\\
&\psi_i^* = -\frac{18C(1 - q_i^*)}{5}, q_i^* \in [\frac{2}{3}, 1), \; i \in \mathbb{H}_4^*,\\
&\psi_i^* = 0, q_i^* \ge 1, \; i \in \mathbb{H}_5^*.
\end{aligned}
\label{eq:suff-A12}
\end{equation}
To confirm \eqref{eq:suff-A9}, introduce 
$\overline{\mathbb{H}}^* := \mathbb{H}_1^* \cup \mathbb{H}_5^*$ and 
$\mathbb{H}^* := \mathbb{H}_2^* \cup \mathbb{H}_3^* \cup \mathbb{H}_4^*$. 
Noting that $\psi_i^* = 0$ for $i \in \overline{\mathbb{H}}^*$, 
it suffices by \eqref{eq:suff-A8} to prove:
\begin{align}
&CL_{\mathrm{ht}}(\mathbf{q}_{\mathbb{H}^*}) - CL_{\mathrm{ht}}(\mathbf{q}^*_{\mathbb{H}^*}) + 
\langle \boldsymbol{\psi}^*_{\mathbb{H}^*}, \mathbf{q}_{\mathbb{H}^*} - \mathbf{q}^*_{\mathbb{H}^*} \rangle \ge 0,
\label{eq:suff-A13}\\
&CL_{\mathrm{ht}}(\mathbf{q}_{\overline{\mathbb{H}}^*}) - CL_{\mathrm{ht}}(\mathbf{q}^*_{\overline{\mathbb{H}}^*}) \ge 0.
\label{eq:suff-A14}
\end{align}

\textit{Verification of \eqref{eq:suff-A14}}: For $i \in \mathbb{H}_1^*$, we have 
$q_i^* < 0$. There exists $\varrho_1 > 0$ such that $q_i < 0$ for all 
$\Sigma \in \Pi \cap U(\Sigma^*, \varrho_1)$, yielding $\ell_{\mathrm{ht}}(q_i) = \ell_{\mathrm{ht}}(q_i^*) = 0$.
For $i \in \mathbb{H}_5^*$, we have $q_i^* \ge 1$. There exists $\varrho_5 > 0$ such that 
$q_i \ge 1$ for all $\Sigma \in \Pi \cap U(\Sigma^*, \varrho_5)$, giving 
$\ell_{\mathrm{ht}}(q_i) = \ell_{\mathrm{ht}}(q_i^*) = 1$. In both cases, 
$\ell_{\mathrm{ht}}(q_i) - \ell_{\mathrm{ht}}(q_i^*) = 0$, confirming \eqref{eq:suff-A14}.

\textit{Verification of \eqref{eq:suff-A13}}: We examine the three subsets in turn.

(i) For $i \in \mathbb{H}_4^*$, we have $q_i^* \in [2/3, 1)$ and 
$\psi_i^* = -18C(1-q_i^*)/5 = -C\nabla\ell_{\mathrm{ht}}(q_i^*)$, since 
$\ell_{\mathrm{ht}}(s) = (-9s^2+18s-4)/5$ yields $\nabla\ell_{\mathrm{ht}}(s) = 18(1-s)/5$ on $(2/3, 1)$. 
There exists $\varrho_4 > 0$ ensuring $q_i \in (2/3, 1)$ for 
$\Sigma \in \Pi \cap U(\Sigma^*, \varrho_4)$. The proximal characterization 
$q_i^* = \mathrm{prox}_{\nu C\ell_{\mathrm{ht}}}(p_i^*)$ gives
\[
\ell_{\mathrm{ht}}(q_i) + \frac{(q_i - p_i^*)^2}{2\nu C} \ge 
\ell_{\mathrm{ht}}(q_i^*) + \frac{(q_i^* - p_i^*)^2}{2\nu C}.
\]
Expanding $(q_i^* - p_i^*)^2 - (q_i - p_i^*)^2 = 2(q_i^* - q_i)(p_i^* - q_i^*) + O(\|q_i - q_i^*\|^2) 
= 2\nu\psi_i^*(q_i - q_i^*) + O(\|q_i - q_i^*\|^2)$ and rearranging, we arrive at 
$C[\ell_{\mathrm{ht}}(q_i) - \ell_{\mathrm{ht}}(q_i^*)] + \psi_i^*(q_i - q_i^*) \ge 0$.

(ii) For $i \in \mathbb{H}_3^*$, we have $q_i^* \in (0, 2/3)$ and $\psi_i^* = -6C/5$. 
On $(0, 2/3]$, $\ell_{\mathrm{ht}}(s) = 6s/5$ is linear with derivative $6/5$, so 
$\psi_i^* = -C\nabla\ell_{\mathrm{ht}}(q_i^*)$. For $\Sigma$ near $\Sigma^*$, 
$q_i$ remains in $(0, 2/3)$, and linearity yields
$C[\ell_{\mathrm{ht}}(q_i) - \ell_{\mathrm{ht}}(q_i^*)] + \psi_i^*(q_i - q_i^*) = 
\frac{6C}{5}(q_i - q_i^*) - \frac{6C}{5}(q_i - q_i^*) = 0$.

(iii) For $i \in \mathbb{H}_2^*$, we have $q_i^* = 0$ and $\psi_i^* \in [-6C/5, 0]$. 
At $s = 0$, the subdifferential is $\partial\ell_{\mathrm{ht}}(0) = [0, 6/5]$ 
(noting $\ell_{\mathrm{ht}}(0) = 0$ and $\ell_{\mathrm{ht}}'(0^+) = 6/5$). 
Since $-\psi_i^*/C \in [0, 6/5]$, the subdifferential inequality 
$\ell_{\mathrm{ht}}(q_i) \ge (-\psi_i^*/C) q_i$ implies 
$C\ell_{\mathrm{ht}}(q_i) + \psi_i^* q_i \ge 0$. With $q_i^* = 0$ and $\ell_{\mathrm{ht}}(0) = 0$, 
this becomes $C[\ell_{\mathrm{ht}}(q_i) - \ell_{\mathrm{ht}}(q_i^*)] + \psi_i^*(q_i - q_i^*) \ge 0$.

Aggregating (i)--(iii), all terms are non-negative, confirming \eqref{eq:suff-A13}. 
This finishes Case (i).

\textbf{Case (ii):} Suppose $\nu C \in [5/18, 25/18)$. Define $\mathbf{p}^* := \mathbf{q}^* - \nu\boldsymbol{\psi}^*$, 
$\widetilde{\beta} := 6\nu C/5$, and $\widehat{\beta} := 5/6 + 3\nu C/5$. From \eqref{eq:prox-case2}, 
the proximal condition partitions $[m]$ into $\mathbb{C}_1^*$ ($q_i^* < 0$, $\psi_i^* = 0$), 
$\mathbb{C}_2^*$ ($q_i^* = 0$, $\psi_i^* \in [-6C/5, 0]$), 
$\mathbb{C}_3^*$ ($q_i^* \in (0, 2/3]$, $\psi_i^* = -6C/5$), and 
$\mathbb{C}_4^*$ ($q_i^* \ge 1$, $\psi_i^* = 0$).

Let $\overline{\mathbb{C}}^* := \mathbb{C}_1^* \cup \mathbb{C}_4^*$ and $\mathbb{C}^* := \mathbb{C}_2^* \cup \mathbb{C}_3^*$.

\textit{For $\overline{\mathbb{C}}^*$}: When $i \in \mathbb{C}_1^*$, $q_i^* < 0$ implies 
$\ell_{\mathrm{ht}}(q_i) = \ell_{\mathrm{ht}}(q_i^*) = 0$ locally. When $i \in \mathbb{C}_4^*$, 
$q_i^* \ge 1$ implies $\ell_{\mathrm{ht}}(q_i) = \ell_{\mathrm{ht}}(q_i^*) = 1$ locally. 
Either way, $\ell_{\mathrm{ht}}(q_i) - \ell_{\mathrm{ht}}(q_i^*) = 0$.

\textit{For $\mathbb{C}^*$}: (i) For $i \in \mathbb{C}_3^*$, linearity of $\ell_{\mathrm{ht}}(s) = 6s/5$ on $(0, 2/3]$ 
with $\psi_i^* = -C\nabla\ell_{\mathrm{ht}}(q_i^*)$ yields 
$C[\ell_{\mathrm{ht}}(q_i) - \ell_{\mathrm{ht}}(q_i^*)] + \psi_i^*(q_i - q_i^*) = 0$.
(ii) For $i \in \mathbb{C}_2^*$, using $-\psi_i^*/C \in \partial\ell_{\mathrm{ht}}(0)$ and $q_i^* = 0$, 
we get $C[\ell_{\mathrm{ht}}(q_i) - \ell_{\mathrm{ht}}(q_i^*)] + \psi_i^*(q_i - q_i^*) \ge 0$.

Both contributions are non-negative, completing Case (ii).

\textbf{Case (iii):} Suppose $\nu C \ge 25/18$. Define $\mathbf{p}^* := \mathbf{q}^* - \nu\boldsymbol{\psi}^*$ 
and $\gamma := \sqrt{2\nu C}$. By \eqref{eq:prox-case3}, the index set splits into 
$\mathbb{E}_1^*$ ($q_i^* < 0$, $\psi_i^* = 0$), 
$\mathbb{E}_2^*$ ($q_i^* = 0$, $\psi_i^* \in [-6C/5, 0]$), and 
$\mathbb{E}_3^*$ ($q_i^* \ge \gamma$, $\psi_i^* = 0$).

Let $\overline{\mathbb{E}}^* := \mathbb{E}_1^* \cup \mathbb{E}_3^*$ and $\underline{\mathbb{E}}^* := \mathbb{E}_2^*$.

\textit{For $\overline{\mathbb{E}}^*$}: When $i \in \mathbb{E}_1^*$, $q_i^* < 0$ yields 
$\ell_{\mathrm{ht}}(q_i) = \ell_{\mathrm{ht}}(q_i^*) = 0$ locally. When $i \in \mathbb{E}_3^*$, 
$q_i^* \ge \gamma \ge 5/3 > 1$ yields $\ell_{\mathrm{ht}}(q_i) = \ell_{\mathrm{ht}}(q_i^*) = 1$ locally. 
Thus $\ell_{\mathrm{ht}}(q_i) - \ell_{\mathrm{ht}}(q_i^*) = 0$.

\textit{For $\underline{\mathbb{E}}^*$}: With $q_i^* = 0$, $\psi_i^* \in [-6C/5, 0]$, and 
$-\psi_i^*/C \in \partial\ell_{\mathrm{ht}}(0)$, the subdifferential property gives 
$C[\ell_{\mathrm{ht}}(q_i) - \ell_{\mathrm{ht}}(q_i^*)] + \psi_i^*(q_i - q_i^*) \ge 0$.

Combining all cases, $(\mathbf{w}^*; b^*; \mathbf{q}^*)$ is a local minimizer of 
\eqref{eq:ht-svm-constrained} in $\Pi \cap U(\Sigma^*, \eta^*)$ for sufficiently small 
$\eta^* > 0$. \hfill$\square$

\section{Proof of Theorem~\ref{thm:sv-case1}}
\label{app:sv-case1}

\noindent\textbf{Proof.} Suppose $(\mathbf{w}^*; b^*; \mathbf{q}^*)$ with 
$\boldsymbol{\psi}^*$ is a P-stationary point of \eqref{eq:ht-svm-constrained}. 
From Definition~\ref{def:p-stationary}, the first equation 
$\mathbf{w}^* + N^\top \boldsymbol{\psi}^* = \mathbf{0}$ gives
\begin{equation}
\mathbf{w}^* = -N^\top \boldsymbol{\psi}^* = -\sum_{i=1}^m \psi_i^* y_i \mathbf{x}_i.
\label{eq:sv-w}
\end{equation}
Set $\mathbf{p}^* := \mathbf{q}^* - \nu\boldsymbol{\psi}^* \in \mathbb{R}^m$. 
For $\nu C \in (0, 5/18)$, invoking Lemma~\ref{lem:prox-1d} and the proximal 
inclusion $\mathbf{q}^* \in \mathrm{prox}_{\nu C L_{\mathrm{ht}}}(\mathbf{p}^*)$, 
we partition the index set $[m]$ into five disjoint subsets 
$\mathbb{G}_1^*, \mathbb{G}_2^*, \mathbb{G}_3^*, \mathbb{G}_4^*, \mathbb{G}_5^*$ 
as defined in Theorem~\ref{thm:sv-case1}. By \eqref{eq:prox-case1}, the proximal 
mapping yields:
\[
q_i^* = \begin{cases}
p_i^*, & i \in \mathbb{G}_1^*,\\
0, & i \in \mathbb{G}_2^*,\\
p_i^* - 6\nu C/5, & i \in \mathbb{G}_3^*,\\
(5p_i^* - 18\nu C)/(5 - 18\nu C), & i \in \mathbb{G}_4^*,\\
p_i^*, & i \in \mathbb{G}_5^*.
\end{cases}
\]
Substituting $p_i^* = q_i^* - \nu\psi_i^*$ into the above expressions, we obtain:

(i) For $i \in \mathbb{G}_1^*$, we have $q_i^* = p_i^* = q_i^* - \nu\psi_i^*$, 
which implies $\psi_i^* = 0$.

(ii) For $i \in \mathbb{G}_2^*$, from \eqref{eq:psi-cases}, when $q_i^* = 0$, 
we have $\psi_i^* \in [-6C/5, 0]$, thus $\psi_i^* \ne 0$ in general.

(iii) For $i \in \mathbb{G}_3^*$, we have $q_i^* = p_i^* - 6\nu C/5 = q_i^* - \nu\psi_i^* - 6\nu C/5$, 
which gives $\psi_i^* = -6C/5 \ne 0$.

(iv) For $i \in \mathbb{G}_4^*$, solving $(5 - 18\nu C)q_i^* = 5p_i^* - 18\nu C$ 
with $p_i^* = q_i^* - \nu\psi_i^*$ yields $\psi_i^* = -18C(1 - q_i^*)/5$. 
Since $q_i^* \in (2/3, 1)$ for $i \in \mathbb{G}_4^*$, we have $\psi_i^* \ne 0$.

(v) For $i \in \mathbb{G}_5^*$, we have $q_i^* = p_i^* = q_i^* - \nu\psi_i^*$, 
which implies $\psi_i^* = 0$.

Summarizing the above analysis, we have $\psi_i^* = 0$ for 
$i \in \mathbb{G}_1^* \cup \mathbb{G}_5^* = \overline{\mathbb{G}}^*$ and 
$\psi_i^* \ne 0$ for $i \in \mathbb{G}_2^* \cup \mathbb{G}_3^* \cup \mathbb{G}_4^* = \mathbb{G}^*$. 
Substituting into \eqref{eq:sv-w} yields \eqref{eq:sv-case1}.

For the support vector conditions \eqref{eq:sv-cond1}, from the feasibility 
constraint $\mathbf{q}^* + N\mathbf{w}^* + b^*\mathbf{y} = \mathbf{1}$, we have 
$q_i^* = 1 - y_i(\langle \mathbf{w}^*, \mathbf{x}_i \rangle + b^*)$. Thus:

(i) For $i \in \mathbb{G}_2^*$, we have $q_i^* = 0$, hence 
$y_i(\langle \mathbf{w}^*, \mathbf{x}_i \rangle + b^*) = 1$.

(ii) For $i \in \mathbb{G}_3^*$, we have $q_i^* \in (0, 2/3)$, hence 
$y_i(\langle \mathbf{w}^*, \mathbf{x}_i \rangle + b^*) = 1 - q_i^* \in (1/3, 1)$.

(iii) For $i \in \mathbb{G}_4^*$, we have $q_i^* \in [2/3, 1)$, hence 
$y_i(\langle \mathbf{w}^*, \mathbf{x}_i \rangle + b^*) = 1 - q_i^* \in (0, 1/3]$.

This completes the proof. \hfill$\square$

\section{Proof of Theorem~\ref{thm:convergence}}
\label{app:convergence}

\noindent\textbf{Proof.} Since the working set $F_k \subseteq [m]$ admits only 
finitely many distinct configurations, there exists a subset 
$\Psi \subseteq \{1, 2, 3, \ldots\}$ such that $F_i \equiv F$ for all $i \in \Psi$.
For convenience, define the aggregate iterate 
$\Phi^k := (\mathbf{w}^k, b^k, \mathbf{q}^k, \boldsymbol{\psi}^k)$ and 
$\Phi^* := (\mathbf{w}^*, b^*, \mathbf{q}^*, \boldsymbol{\psi}^*)$. By hypothesis, 
$\{\Phi^k\} \to \Phi^*$, which implies $\{\Phi^i\}_{i \in \Psi} \to \Phi^*$ 
and $\{\Phi^{i+1}\}_{i \in \Psi} \to \Phi^*$. Taking the limit with respect 
to $\Psi$ in \eqref{eq:psi-update}, i.e., $k \in \Psi$, $k \to \infty$, we get
\begin{equation}
\boldsymbol{\psi}_F^* = \boldsymbol{\psi}_F^* + \tau\xi\boldsymbol{\Lambda}_F^*, 
\quad \boldsymbol{\psi}_{\bar{F}}^* = \mathbf{0},
\label{eq:psi-limit}
\end{equation}
where $\boldsymbol{\Lambda}^{k+1} := \mathbf{q}^{k+1} - \mathbf{1} + N\mathbf{w}^{k+1} 
+ b^{k+1}\mathbf{y}$ denotes the primal residual. From \eqref{eq:psi-limit}, 
we immediately obtain $\boldsymbol{\Lambda}_F^* = \mathbf{0}$.
Taking the limit of $\boldsymbol{\Lambda}^k$ yields
\begin{equation}
\boldsymbol{\Lambda}^* = \mathbf{q}^* - \mathbf{1} + N\mathbf{w}^* + b^*\mathbf{y}.
\label{eq:Lambda-limit}
\end{equation}
Since $\boldsymbol{\Lambda}_F^* = \mathbf{0}$, it remains to show 
$\boldsymbol{\Lambda}_{\bar{F}}^* = \mathbf{0}$. For $i \in \bar{F}$, the working set 
construction ensures $p_i^* \le 0$ or $p_i^* \ge 1$ (in the sense of the respective 
threshold), so $\mathrm{prox}_{\nu C\ell_{\mathrm{ht}}}(p_i^*) = p_i^*$. Combining 
this with $\psi_i^* = 0$ yields $q_i^* = p_i^* = 1 - (N\mathbf{w}^*)_i - b^*y_i$, 
i.e., $\Lambda_i^* = 0$. Therefore $\boldsymbol{\Lambda}^* = \mathbf{0}$, 
establishing the primal feasibility condition
\begin{equation}
\mathbf{q}^* + N\mathbf{w}^* + b^*\mathbf{y} = \mathbf{1}.
\label{eq:feasibility-limit}
\end{equation}
Regarding \eqref{eq:q-update}, we take the limit with respect to $\Psi$ and obtain
\begin{equation}
\mathbf{q}^* = \mathrm{prox}_{\nu C L_{\mathrm{ht}}}(\mathbf{p}^*),
\label{eq:q-limit}
\end{equation}
where $\mathbf{p}^* := \mathbf{1} - N\mathbf{w}^* - b^*\mathbf{y} - \nu\boldsymbol{\psi}^*$. 
Substituting the feasibility condition \eqref{eq:feasibility-limit}, we get
\begin{equation}
\mathbf{p}^* = \mathbf{q}^* - \nu\boldsymbol{\psi}^*.
\label{eq:p-substitute}
\end{equation}
Combining \eqref{eq:q-limit} and \eqref{eq:p-substitute} yields
\begin{equation}
\mathbf{q}^* \in \mathrm{Prox}_{\nu C L_{\mathrm{ht}}}(\mathbf{q}^* - \nu\boldsymbol{\psi}^*).
\label{eq:q-prox-final}
\end{equation}
With respect to \eqref{eq:w-linear-system}, taking the limit gives
$(I + \xi N_F^\top N_F)\mathbf{w}^* = \xi N_F^\top \boldsymbol{\chi}_F^*$,
where $\boldsymbol{\chi}^* := \mathbf{1} - \mathbf{q}^* - b^*\mathbf{y} 
- \boldsymbol{\psi}^*/\xi$. By \eqref{eq:feasibility-limit}, we have 
$\boldsymbol{\chi}_F^* = N_F\mathbf{w}^* - \boldsymbol{\psi}_F^*/\xi$. 
Substituting into the above equation and simplifying yields
\[
\mathbf{w}^* = -N_F^\top\boldsymbol{\psi}_F^* \stackrel{\eqref{eq:psi-limit}}{=} 
-N^\top\boldsymbol{\psi}^*.
\]
Finally, for \eqref{eq:b-update}, taking the limit we obtain
$b^* = \langle \mathbf{y}, \mathbf{r}^* \rangle / m$,
where $\mathbf{r}^* := \mathbf{1} - N\mathbf{w}^* - \mathbf{q}^* 
- \boldsymbol{\psi}^*/\xi$. Using \eqref{eq:feasibility-limit}, we get 
$\mathbf{r}^* = b^*\mathbf{y} - \boldsymbol{\psi}^*/\xi$, so
\[
b^* = \langle \mathbf{y}, b^*\mathbf{y} - \boldsymbol{\psi}^*/\xi \rangle / m 
= b^* - \langle \mathbf{y}, \boldsymbol{\psi}^* \rangle / (m\xi),
\]
which leads to $\langle \mathbf{y}, \boldsymbol{\psi}^* \rangle = 0$.
Summarizing the above, the limit point $(\mathbf{w}^*, b^*, \mathbf{q}^*, \boldsymbol{\psi}^*)$ 
satisfies: (i) $\mathbf{w}^* + N^\top\boldsymbol{\psi}^* = \mathbf{0}$; 
(ii) $\langle \mathbf{y}, \boldsymbol{\psi}^* \rangle = 0$; 
(iii) $\mathbf{q}^* + N\mathbf{w}^* + b^*\mathbf{y} = \mathbf{1}$; 
(iv) $\mathbf{q}^* \in \mathrm{Prox}_{\nu C L_{\mathrm{ht}}}(\mathbf{q}^* - \nu\boldsymbol{\psi}^*)$. 
These are precisely the conditions in Definition~\ref{def:p-stationary}, confirming 
that $(\mathbf{w}^*, b^*, \mathbf{q}^*)$ is a P-stationary point of 
\eqref{eq:ht-svm-constrained}. By Theorem~\ref{thm:sufficient}, it is also a 
local minimizer. \hfill$\square$





\bibliographystyle{elsarticle-harv} 
\bibliography{refs}

\end{document}